
\documentclass[preprint,11pt]{elsarticle}


\usepackage{amssymb}

\usepackage{times}
\usepackage{soul}
\usepackage{url}
\usepackage[hidelinks]{hyperref}
\usepackage[utf8]{inputenc}
\usepackage[small]{caption}
\usepackage{graphicx}
\usepackage{amsmath}
\usepackage{amsthm}
\usepackage{booktabs}
\usepackage{algorithm}
\usepackage{algorithmic}
\urlstyle{same}

\usepackage{bm}
\usepackage{stmaryrd}
\usepackage{amssymb}
\usepackage{mathtools}
\DeclareMathOperator*{\argmax}{arg\,max}
\usepackage[whole]{bxcjkjatype}
\usepackage{xcolor}
\usepackage[whole]{bxcjkjatype}
\usepackage{bussproofs}

\newtheorem{example}{Example}
\newtheorem{definition}{Definition}
\newtheorem{theorem}{Theorem}

\newtheorem{proposition}{Proposition}
\newtheorem{corollary}{Corollary}

\newcommand{\m}[1]{[\![#1]\!]}

\newcommand{\ms}[1]{[\![#1]\!]}
\newcommand{\pms}[1]{[\![\![#1]\!]\!]}
\newcommand{\ams}[1]{(\!(#1)\!)}

\newcommand{\pams}[1]{(\!(\!(#1)\!)\!)}
\newcommand{\cent}{\mathrel{\scalebox{1}[1.5]{$\shortmid$}\mkern-3.1mu\raisebox{0.1ex}{$=$}}}
\newcommand{\ent}{\mathrel{\scalebox{1}[1.5]{$\shortmid$}\mkern-3.1mu\raisebox{0.1ex}{$\equiv$}}}

\newcommand{\vsim}{\mathrel{\scalebox{1}[1.5]{$\shortmid$}\mkern-3.1mu\raisebox{0.1ex}{$\sim$}}}

\journal{Artificial Intelligence}

\begin{document}

\begin{frontmatter}
            

\title{A Simple Generative Model of Logical Reasoning and Statistical Learning}

\author{Hiroyuki Kido}
\address{School of Computer Science and Informatics, Cardiff University,\\
Park Place, Cardiff, CF10 3AT, Wales, UK}

\begin{abstract}
Statistical learning and logical reasoning are two major fields of AI expected to be unified for human-like machine intelligence. Most existing work considers how to combine existing logical and statistical systems. However, there is no theory of inference so far explaining how basic approaches to statistical learning and logical reasoning stem from a common principle. Inspired by the fact that much empirical work in neuroscience suggests Bayesian (or probabilistic generative) approaches to brain function including learning and reasoning, we here propose a simple Bayesian model of logical reasoning and statistical learning. The theory is statistically correct as it satisfies Kolmogorov's axioms, is consistent with both Fenstad's representation theorem and maximum likelihood estimation and performs exact Bayesian inference with a linear-time complexity. The theory is logically correct as it is a data-driven generalisation of uncertain reasoning from consistency, possibility, inconsistency and impossibility. The theory is correct in terms of machine learning as its solution to generation and prediction tasks on the MNIST dataset is not only empirically reasonable but also theoretically correct against the K nearest neighbour method. We simply model how data causes symbolic knowledge in terms of its satisfiability in formal logic. Symbolic reasoning emerges as a result of the process of going the causality forwards and backwards. The forward and backward processes correspond to an interpretation and inverse interpretation in formal logic, respectively. The inverse interpretation differentiates our work from the mainstream often referred to as inverse entailment, inverse deduction or inverse resolution. The perspective gives new insights into learning and reasoning towards human-like machine intelligence.
\end{abstract}

\begin{keyword}
AI \sep cognitive science \sep neuroscience \sep reasoning and learning \sep reasoning from data \sep generative models \sep top-down and bottom-up processing \sep Bayesian brain \sep inverse interpretation \sep paraconsistency \sep knowledge acquisition bottleneck
\end{keyword}


\begin{highlights}
\item We propose a simple theory of inference to explain how basic approaches to logical reasoning and statistical learning stem from a common principle\footnote{A graphical abstract is in \ref{grab}.}.
\item The theory models how data causes symbolic knowledge. Symbolic reasoning emerges as a result of going the causality forwards and backwards.
\item The theory gives the logic community a fully data-driven logical-reasoning method with a linear-time complexity and its generalisation for reasoning from inconsistency and impossibility. 
\item The theory gives the machine learning community a new fully non-parametric all-nearest-neighbour method and its refinement for overfitting mitigation.
\end{highlights}

\end{frontmatter}

\section{Introduction}\label{sec:introduction}
\subsection{Background}
There is growing evidence that the brain is a generative model of environments. The image shown in Figure \ref{fig:illusions} would make one perceive a white triangle on the three black circles and one white triangle. A well-accepted explanation of the illusion is that our brains are trained to unconsciously use past experience to see what is likely to happen. The image would come as just a surprise if the sensory information eventually suppresses the prediction. By contrast, many illusions including the ones in Figure \ref{fig:illusions} cause an unusual situation where the prediction keeps suppressing the sensory information. Illusions tell us the importance of prior expectations in human perception.
\par
Much empirical work suggests Bayesian (i.e., probabilistic generative) models as an appropriate computational approach to reconcile (top-down) prediction and (bottom-up) sensory information in perception. Knill \cite{Knill:96} says `perception as Bayesian inference', and Hohwy \cite{Hohwy:14} says `there is converging evidence that the brain is a Bayesian mechanism'. Free-energy principle \cite{friston:10} uses a variational Bayesian method to account not only for perception but for human action. According to Friston \cite{friston:10}, Bayesian brain hypothesis \cite{Knill:04} is `the idea that the brain uses internal probabilistic (generative) models to update posterior beliefs, using sensory information, in an (approximately) Bayes-optimal fashion', and predictive coding \cite{Rao:99,Caucheteux:23} is `a tool used in signal processing for representing a signal using a linear predictive (generative) model'. Bayes' theorem derived from probability theory tells how the belief from past experience ought to be updated in light of sensory inputs. The mutual information (or Kullback-Leibler (KL) divergence) between the prior and posterior distributions is known as the Bayesian surprise \cite{Itti:09}, which is a measure of how surprising the sensory inputs are. Computational psychiatry \cite{Adams:16,Pellicano:12} uses Bayesian models to explain several symptoms of mental disorders such as schizophrenia and autism.
\par
The success of Bayesian models of brain function makes us think that there is a Bayesian model of how people perform logical reasoning, in a broad sense, including not only deductive reasoning but ampliative reasoning. Such a view would allow us to see commonsense reasoning, for instance, as a reconciliation between top-down prediction and bottom-up sensory information, just as the illusions shown in Figure \ref{fig:illusions} can be seen as a commonsense perception. This view of linking logical reasoning with perception is consistent with Mountcastle's discovery \cite{Mountcastle:82} summarised by Hawkins \cite{Hawkins:21}. Hawkins writes that `every part of the neocortex works on the same principle and that all the things we think of as intelligence\textemdash from seeing, to touching, to language, to high-level thought\textemdash are fundamentally the same'. This is evidenced by the experiment result \cite{Melchner:00} that ferrets learn to see with their eyes rewired to the auditory cortex and to hear with their ears rewired to the visual cortex.
\par
\begin{figure}[t]
\begin{center}
 \includegraphics[scale=0.3]{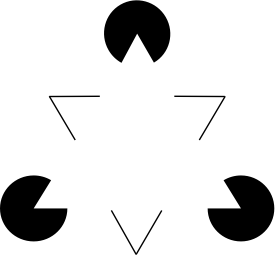}
   \caption{The Kanizsa triangle illusion. It looks as if a white triangle overlays the other objects. Adapted from Kanizsa \cite{kanizsa:55}.}
  \label{fig:illusions}
  \end{center}
\end{figure}
\subsection{Research Methods and Findings}
All the above discussions motivate us to ask how formal logic, as the laws of human thought, can be seen in terms of Bayesian models. To answer this question, this paper gives a simple Bayesian theory of formal inference for fully logical reasoning fully from data. Ordinary formal logics consider an interpretation on each model (denoted by $m$), which represents a state of the world\footnote{We use the term `model' to refer to a state of the world, e.g., each row of a truth table in propositional logic, and `interpretation' to refer to a truth-value assignment to a sentence.}. The interpretation defined on each model is a function that maps each formula (denoted by $\alpha$) to a truth value, which represents knowledge of the world. Given data (denoted by $d$), the most basic idea introduced in this paper is to consider the model and interpretation as likelihoods $p(m|d)$ and $p(\alpha|m)$, respectively. The model likelihood, i.e., $p(m|d)$, is defined to represent whether the data is an instance of the model, and the interpretation likelihood, i.e., $p(\alpha|m)$, is defined to represent whether the model satisfies the sentence (or equivalently, the sentence is true in the model). The two types of likelihoods will be used to formalise the following probabilistic process of how data causes symbolic knowledge via models in formal logic.
\begin{align}
p(\alpha)&=\sum_{n}p(\alpha|m_{n})p(m_{n})\nonumber\\
&=\sum_{n}p(\alpha|m_{n})\sum_{k}p(m_{n}|d_{k})p(d_{k})\label{sec1:eq1}
\end{align}
The first line shows how the probability distribution over the truth values of the sentence $\alpha$ is generated from the probability distribution over models in formal logic. The second line additionally shows how the probability distribution over the models is generated from the probability distribution over data. The hierarchy shown in Figure \ref{fig:hierarchy} illustrates the generative process of the calculation.
\begin{figure}[t]
\begin{center}
 \includegraphics[scale=0.4]{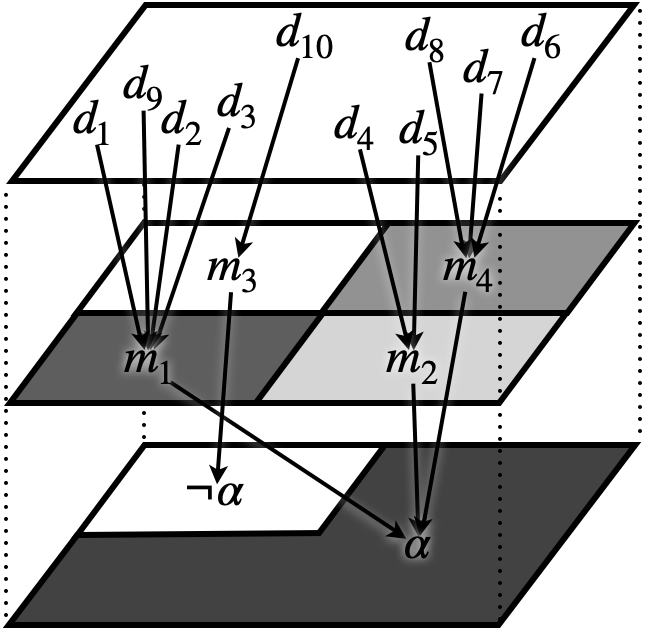}
  \caption{A schematic diagram of how data generates symbolic knowledge. The top layer is a probability distribution over data, the middle layer is a probability distribution over models in formal logic, and the bottom layer is a probability distribution over symbolic knowledge. A darker colour indicates a higher probability. A lower layer is an abstraction (i.e., selective ignorance) of the upper layer.}
  \label{fig:hierarchy}
  \end{center}
\end{figure}
\par
Let $\alpha$ be a sentence and $\Delta$ be a set of sentences. We will show symbolic reasoning from $\Delta$ to $\alpha$ is modelled as a conditional probability expanded as follows.
\begin{align}\label{sec1:eq2}
p(\alpha|\Delta)&=\sum_{n}p(\alpha|m_{n})p(m_{n}|\Delta)=\sum_{k}p(\alpha|d_{k})p(d_{k}|\Delta)
\end{align}
The second expression of Equation (\ref{sec1:eq2}) highlights the key characteristics of the idea introduced in this paper. It shows that symbolic reasoning is modelled as a process of going the interpretation of sentences forwards and backwards, without assuming dependency between the sentences. Indeed, we will show that $p(\alpha|m_{n})$ can represent the probability that $\alpha$ is true in $m_{n}$, i.e., an interpretation, and $p(m_{n}|\Delta)$ the probability that the model making all the sentences in $\Delta$ true is $m_{n}$, i.e., an inverse problem of the interpretation. We refer to the inverse problem as an \emph{inverse interpretation} to highlight the difference from the inverse entailment \cite{Muggleton:95}, inverse resolution \cite{Muggleton:88,Nienhuys:97} and inverse deduction \cite{Russell:20,Domingo:15}, which mainly concern modelling of dependency between sentences. The third expression of Equation (\ref{sec1:eq2}) shows that the model-driven reasoning can be fully data-driven. It shows that, in contrast to $p(\alpha)=\sum_{k}p(\alpha|d_{k})p(d_{k})$, the probability of $\alpha$ is obtained by referring to the posterior distribution over data updated by $\Delta$.
\par
We refer to the theory of inference as \emph{generative logic} (GL) as it models the process of how data generates symbolic knowledge via an interpretation in formal logic. In this paper, we look at statistical, logical and machine learning aspects of the GL to justify its correctness. From statistical point of view, we show
\begin{itemize}
\item it satisfies the Kolmogorov's axioms \cite{Kolmogorov:50}, which are basic rules any probability assignments ought to satisfy (see Proposition \ref{kolmogorov}),
\item it satisfies the Fenstad's theorem (Theorem 2.1 \cite{fenstad:67}), which relates the probability of a first-order sentence to the probability of the state of the world where the sentence is true (see Equation (\ref{eq:data-based})),
\item it is consistent with maximum likelihood estimation, which is the statistical method used most frequently to estimate a probability distribution only from data (see Equation (\ref{eq:data-driven})),
\item it replaces Bayes' theorem over symbolic knowledge with probabilistic reasoning from data (see Equations (\ref{eq:bayes})), and
\item it has a liner-time complexity with respect to the number of data (see Equations (\ref{eq:linear}) and (\ref{eq:linear-cp})).
\end{itemize}
\par
Once one has the view that the probability of any sentence is generated from data following Equation (\ref{sec1:eq1}), one can discuss possibility of sentences beyond their consistency. We say that a sentence(s) or model(s) is possible if its probability is non-zero and impossible otherwise. This leads to an alternative consequence relation, denoted by $\ent$, that ignores all impossible models in contrast to the classical consequence relation, denoted by $\cent$. From logical point of view, we show
\begin{itemize}
\item $p(\alpha|\Delta)=1$ if and only if $\Delta\cent\alpha$ when all models are possible and $\Delta$ is consistent (see Corollary \ref{cor:consistent_reasoning}), and
\item $p(\alpha|\Delta)=1$ if and only if $\Delta\ent\alpha$ when $\Delta$ is possible (see Corollary \ref{cor:possible_reasoning}).
\end{itemize}
These facts show that GL is a generalisation of $\cent$ and $\ent$ for reasoning with uncertainty (i.e, probability less than one) from consistency and possibility, respectively. From logical point of view, we further show 
\begin{itemize}
\item $p(\alpha|\Delta)=1$ if and only if $S\cent\alpha$, for all cardinality-maximal consistent subsets $S$ of $\Delta$, when all models are possible (see Corollary \ref{cor:paraconsistent_reasoning}), and
\item $p(\alpha|\Delta)=1$ if and only if $S\ent\alpha$, for all cardinality-maximal possible subsets $S$ of $\Delta$ (see Corollary \ref{cor:parapossible_reasoning}).
\end{itemize}
From machine-learning point of view, we finally show
\begin{itemize}
\item it solves a generation task on the MNIST dataset \cite{Deng:12} where a standard image of each digit is inferred from the digit. The result is not only intuitively reasonable but also statistically justified (see Figure \ref{fig:generation} and Equation (\ref{eq:generation})), and
\item it solves a prediction task on the MNIST dataset where the digit is inferred from a digit image. The result is not only empirically reasonable but also theoretically justified in terms of the K nearest neighbour method \cite{fix:51} (see Figures \ref{fig:LCs} and \ref{fig:ROCs}, and Equation (\ref{eq:prediction})). 
\end{itemize}
\subsection{Contributions and Related Work}\label{sec:discussion}
Statistical learning and logical reasoning are two major fields of AI expected to be unified for human-like machine intelligence. This paper contributes to statistical relational learning (SLR), which combines first-order logic and probability theory to solve problems across these fields. Acquiring knowledge from data and reasoning with the obtained knowledge are both essential processes of successful SLR systems. However, SLR approaches generally assume different algorithms for the two processes. Bayesian networks \cite{pearl:88} including naive Bayes, probabilistic logic programming (PLP) \cite{sato:95}, Markov logic networks (MLN) \cite{richardson:06}, probabilistic logic \cite{nilsson:86}, probabilistic relational models (PRM) \cite{friedman:99} and conditional probabilistic logic \cite{Rodder:00} assume both statistical and logical machineries. The statistical machinery assigns each logical sentence a probability value or weight so that it reflects aspects of the world, whereas the logical machinery performs logical reasoning on the probabilistic knowledge so that conclusions preserve the uncertainty of premises. For example, Bayesian networks, naive Bayes and PLP often assume maximum likelihood estimation or maximum a posteriori estimation for the statistical machinery. The probabilistic logic, conditional probabilistic logic and MLN often assume a human expert to play that role. However, some serious AI problems such as knowledge acquisition bottleneck, grounding, frame problems and commonsense reasoning \cite{Russell:20,Davis:15,Brewka:91} remain open without unifying the two machineries. In addition, Bayesian networks, PLP and MLN assume independence and/or conditional independence of knowledge or facts, which imposes each logical sentence to be fully independent or dependent only on a small number of other sentences for computational tractability. However, the independence and conditional independence rarely hold in real data. To the best of the author's knowledge, this paper for the first time shows that the statistical and logical machineries can be unified simply as probabilistic reasoning and that no independence or conditional independence is required for computational tractability. These facts are due to the novel idea that we only model the dependency between models and logical sentences, rather than the traditional idea that they model the dependency between logical sentences. This paper guarantees that reasoning becomes logically (and statistically as well) correct without the traditional idea.
\par
This paper also contributes to the field of formal logic by providing a simple theory of inference to reason fully logically fully from data with a linear-time complexity. In formal logic, the two major approaches to logical consequence relations are model checking and theorem proving \cite{Russell:20}. Reasoning with our theory falls into another category we obtain by generalising the model checking for a data-driven perspective. The time complexity of the data-driven reasoning is linear with respect to the number of data. This is in contrast to the model-based approach whose time complexity is theoretically exponential with respect to the number of symbols in propositional logic and is unbounded in predicate logic. Our theory deals with the \emph{inverse interpretation}, the inverse problem of the interpretation in formal logic, as opposed to the inverse entailment \cite{Muggleton:95}, inverse resolution \cite{Muggleton:88,Nienhuys:97} and inverse deduction \cite{Russell:20,Domingo:15}. The inverse interpretation provides a perspective to see reasoning as a reconciliation between top-down and bottom-up processing where stored data (or memory) restricts models to possible ones (see the top layer of Figure \ref{fig:hierarchy}) and observed knowledge further restricts the possible models (see the bottom layer).
\par
In addition, this paper contributes to the field of philosophical logic. Our theory unifies reasoning from consistency, possibility, inconsistency and impossibility. The reasoning from impossibility beyond possibility may be coined as \emph{parapossible reasoning} like reasoning from inconsistency beyond consistency is often referred to as paraconsistent reasoning \cite{Priest:02,Carnielli:07}. Parapossible reasoning systems may be more suitable than paraconsistent reasoning systems for a better interpretation of human reasoning. For example, inconsistency would not be an appropriate model of a conflict in human debate because almost all human reasoning is an enthymeme and thus no contradiction is usually made explicit in the debate. One would only say `$\alpha$: Tweety is a penguin' to attack the argument `$\beta$: Tweety flies because it is a bird.' However, $\{\alpha,\beta\}$ would be still consistent if $\alpha$ and $\beta$ are modelled with a usual formal language in a straightforward way. By contrast, $\{\alpha,\beta\}$ is probably impossible because data for both $\alpha$ and $\beta$ are unlikely to be observed simultaneously, i.e., $p(\{\alpha,\beta\})=0$. The long-standing conflict on the wave-particle duality of light can be better seen as impossibility rather than inconsistency.
\par
Finally, this paper contributes to the field of machine learning by providing new classification methods, which can theoretically be seen as a fully non-parametric all-nearest neighbour method and its reasonable refinement for overfitting mitigation. The latter method outperforms the K nearest neighbour method according to our experiment with the MNIST dataset. These promising results encourage the community to conduct further investigation against other methods with different datasets and to address machine-learning problems across data and symbolic knowledge.
\par
This paper is organised as follows. Section 2 defines a probabilistic model of how data causes symbolic knowledge via models in formal logic. In Sections 3 and 4, we theoretically analyse the statistical and logical properties of the probabilistic model, respectively. In Section 5, we generalise the discussion in Section 4 for a full  logical characterisation of the probabilistic model. In Section 6, the probabilistic model is applied to generation and prediction tasks on an image dataset to show its machine-learning correctness. Section 7 summarises the results of the paper.
\section{Generative Logic}
This section aims to give a mathematical representation of a probabilistic process of how data cause models in formal logic and the models cause symbolic knowledge. We begin by modelling how data cause models in formal logic. Let ${\cal D}=\{d_{1},d_{2},...,d_{K}\}$ be a multiset of data about states of the world. $D$ is a random variable whose realisations are data in ${\cal D}$. For all data $d_{k}\in{\cal D}$, we define the probability of $d_{k}$, as follows.
\begin{eqnarray*}
p(D=d_{k})=\frac{1}{K}
\end{eqnarray*}
$L$ represents a propositional or first-order language. For the sake of simplicity, we assume no function symbol or open formula in $L$. ${\cal M}=\{m_{1},m_{2},...,m_{N}\}$ is a set of models of $L$. ${\cal D}$ is assumed to be complete with respect to ${\cal M}$, and thus each data in ${\cal D}$ belongs to a single model in ${\cal M}$. $m$ is a function that maps each data to such a single model. $K_{n}$ denotes the number of data that belongs to $m_{n}$, i.e., $K_{n}=|\{d_{k}\in{\cal D}|m_{n}=m(d_{k})\}|$ where, for any set $X$, $|X|$ denotes the cardinality of $X$. $M$ is a random variable whose realisations are models in ${\cal M}$. For all models $m_{n}\in{\cal M}$ and data $d_{k}\in{\cal D}$, we define the conditional probability of $m_{n}$ given $d_{k}$, as follows.
\begin{eqnarray*}
&&p(M=m_{n}|D=d_{k})=
\begin{cases}
1 & \text{if } m_{n}=m(d_{k})\\
0 & \text{otherwise}
\end{cases}
\end{eqnarray*}
\par
The second task is to give a probabilistic representation of the process of how models cause the truth values of logical sentences. Ordinary formal logic considers an interpretation on each model.\footnote{In this paper, `model' means a model of a state of the world, whereas `interpretation' means an interpretation of a sentence.} The interpretation is a function that maps each formula to a truth value, which represents knowledge of the world. We here introduce parameter $\mu\in[0,1]$ to represent the extent to which each model is taken for granted in the interpretation. Concretely, $\mu$ denotes the probability that a formula is interpreted as being true (resp. false) in a model where it is true (resp. false). $1-\mu$ is therefore the probability that a formula is interpreted as being true (resp. false) in a model where it is false (resp. true). We will later see that interesting discussions emerge with $\mu$ approaching 1, i.e., $\mu\to 1$, rather than $\mu=1$. We assume that each formula is a random variable whose realisations are 0 and 1, denoting false and true, respectively. For all models $m_{n}\in{\cal M}$ and formulas $\alpha\in L$, we define the conditional probability of each truth value of $\alpha$ given $m_{n}$, as follows.
\begin{eqnarray*}
&&p(\alpha=1|M=m_{n})=
\begin{cases}
\mu & \text{if } m_{n}\in\llbracket\alpha=1\rrbracket\\
1-\mu & \text{otherwise }
\end{cases}
\\
&&p(\alpha=0|M=m_{n})=
\begin{cases}
\mu & \text{if } m_{n}\in\llbracket\alpha=0\rrbracket\\
1-\mu & \text{otherwise }
\end{cases}
\end{eqnarray*}
Here, $\llbracket\alpha=1\rrbracket$ denotes the set of all models in which $\alpha$ is true, and $\llbracket\alpha=0\rrbracket$ the set of all models in which $\alpha$ is false. The above expressions can be simply written as a Bernoulli distribution with parameter $\mu\in[0,1]$, i.e.,
\begin{eqnarray*}
p(\alpha|M=m_{n})=\mu^{\llbracket\alpha\rrbracket_{m_{n}}}(1-\mu)^{1-\llbracket\alpha\rrbracket_{m_{n}}}.
\end{eqnarray*}
Here, $\llbracket\alpha\rrbracket_{m_{n}}$ is a function such that $\llbracket\alpha\rrbracket_{m_{n}}=1$ if $m_{n}\in\llbracket\alpha\rrbracket$ and $\llbracket\alpha\rrbracket_{m_{n}}=0$ otherwise. Recall that $\alpha$ is a random variable, and thus $\llbracket\alpha\rrbracket_{m_{n}}$ is either $\llbracket\alpha=0\rrbracket_{m_{n}}$ or $\llbracket\alpha=1\rrbracket_{m_{n}}$.
\par
In classical logic, given a model, the truth value of each formula is independently determined. In probability theory, this means that the truth values of any two formulas $\alpha_{1}$ and $\alpha_{2}$ are conditionally independent given a model $m_{n}$, i.e., $p(\alpha_{1},\alpha_{2}|M=m_{n})=p(\alpha_{1}|M=m_{n})p(\alpha_{2}|M=m_{n})$. Note that the conditional independence holds not only for atomic formulas but for compound formulas as well. However, independence $p(\alpha_{1},\alpha_{2})= p(\alpha_{1})p(\alpha_{2})$ generally holds for neither atomic formulas nor compound formulas. Let $\Gamma=\{\alpha_{1},\alpha_{2},...,\alpha_{J}\}$ be a multiset of $J$ formulas. We thus have
\begin{eqnarray*}
p(\Gamma|M=m_{n})=\prod_{j=1}^{J}p(\alpha_{j}|M=m_{n}).
\end{eqnarray*}
\par
Thus far, we have defined $p(D)$ and $p(M|D)$ as categorical distributions and $p(\Gamma|M)$ as Bernoulli distributions with parameter $\mu$. Given a value of the parameter $\mu$, they provide the full joint distribution over all of the random variables, i.e., $p(\Gamma,M,D;\mu)$. We refer to this as a generative logic (GL). We will omit $\mu$ if it is clear from the context. In sum, the generative logic defines a data-driven interpretation by which the truth values of formulas are logically interpreted and probabilistically generated from models. The models are also probabilistically generated from data observed from the real world. The following proposition guarantees that GL assigns each formula a correct probability, regardless of the value of $\mu$.
\begin{proposition}\label{kolmogorov}
Let $p(\Gamma,M,D;\mu)$ be a generative logic and $\alpha,\beta\in\Gamma$. The generative logic satisfies the following Kolmogorov's axioms.
\begin{enumerate}
\item $0\leq p(\alpha=i)$ holds, for all $i\in\{0,1\}$.
\item $\sum_{i\in\{0,1\}}p(\alpha=i)=1$ holds.
\item $p(\alpha\lor\beta=i)=p(\alpha=i)+p(\beta=i)-p(\alpha\land\beta=i)$ holds, for all $i\in\{0,1\}$.
\end{enumerate}
\end{proposition}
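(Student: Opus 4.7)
The plan is to reduce every axiom to a pointwise statement about the Bernoulli conditionals $p(\alpha = i \mid M = m_n)$ and then take a non-negative convex combination over $n$. First I would unfold
\[ p(\alpha = i) = \sum_n p(\alpha = i \mid M = m_n)\, p(M = m_n), \]
where $p(M = m_n) = \sum_k p(M = m_n \mid D = d_k)\, p(D = d_k) = K_n / K$. These model weights are non-negative and sum to one, so each axiom becomes a statement about an affine combination of quantities in $\{\mu, 1-\mu\} \subseteq [0,1]$.

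Axiom 1 is then immediate. For axiom 2 I would observe that for any single $m_n$ the two values $p(\alpha = 0 \mid m_n)$ and $p(\alpha = 1 \mid m_n)$ are $\mu$ and $1-\mu$ in some order (depending on whether $m_n \in \llbracket \alpha = 1 \rrbracket$ or $m_n \in \llbracket \alpha = 0 \rrbracket$); they sum to one pointwise, and weighting by $p(M = m_n)$ and summing over $n$ yields $\sum_i p(\alpha = i) = \sum_n p(M = m_n) = 1$.

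The only mildly delicate axiom is the third. My key observation is that for any formula $\phi$ and any $m_n$ the Bernoulli conditional admits the affine decomposition
\[ p(\phi = 1 \mid m_n) = (1 - \mu) + (2\mu - 1)\,\llbracket \phi \rrbracket_{m_n}, \]
which is linear in the classical $\{0,1\}$-indicator $\llbracket \phi \rrbracket_{m_n}$. Substituting this into $p(\alpha \lor \beta = 1 \mid m_n) - p(\alpha = 1 \mid m_n) - p(\beta = 1 \mid m_n) + p(\alpha \land \beta = 1 \mid m_n)$, the constants $(1-\mu)$ cancel under the alternating signs, leaving $(2\mu - 1)$ times the boolean inclusion--exclusion identity $\llbracket \alpha \lor \beta \rrbracket_{m_n} + \llbracket \alpha \land \beta \rrbracket_{m_n} = \llbracket \alpha \rrbracket_{m_n} + \llbracket \beta \rrbracket_{m_n}$, which holds on $\{0,1\}$ since $\max(a,b) + \min(a,b) = a + b$. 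The case $i = 0$ is symmetric via De Morgan on $\llbracket \neg(\alpha \lor \beta) \rrbracket_{m_n}$ and $\llbracket \neg(\alpha \land \beta) \rrbracket_{m_n}$. Weighting by $p(M = m_n)$ and summing over $n$ lifts the pointwise identity to the marginal one.

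The one anticipated worry is that the Bernoulli noise $\mu$ could corrupt inclusion--exclusion; the affine decomposition dispels this concern by confining the noise to a cancellable constant, so I do not foresee a more serious difficulty.
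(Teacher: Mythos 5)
Your proposal is correct and follows essentially the same route as the paper: each axiom is reduced to a pointwise identity on the Bernoulli conditionals $p(\cdot\mid M=m_{n})$ and then lifted through the non-negative convex combination over models (with $p(m_{n})=K_{n}/K$). The only divergence is in axiom 3, where the paper verifies the pointwise identity by checking the four cases of $(\llbracket\alpha=1\rrbracket_{m},\llbracket\beta=1\rrbracket_{m})$ explicitly, whereas your affine decomposition $p(\phi=1\mid m)=(1-\mu)+(2\mu-1)\llbracket\phi\rrbracket_{m}$ collapses the same verification into Boolean inclusion--exclusion in one stroke --- a slightly cleaner derivation of the identical fact.
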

\begin{proof}
See \ref{proof}.
\end{proof}
\begin{proposition}\label{negation}
Let $\alpha\in L$. $p(\alpha=0)=p(\neg\alpha=1)$ holds.
\end{proposition}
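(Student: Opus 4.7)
The plan is to unfold both sides of the claimed equality using the marginalisation $p(\varphi) = \sum_{n} p(\varphi\mid M=m_n)\, p(M=m_n)$ (where $\varphi$ is either $\alpha=0$ or $\neg\alpha=1$) and then appeal to a purely semantic fact from classical logic, namely that a model falsifies $\alpha$ exactly when it satisfies $\neg\alpha$.

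First I would write
\begin{equation*}
p(\alpha=0) \;=\; \sum_{n=1}^{N} p(\alpha=0\mid M=m_n)\, p(M=m_n)
\end{equation*}
and
\begin{equation*}
p(\neg\alpha=1) \;=\; \sum_{n=1}^{N} p(\neg\alpha=1\mid M=m_n)\, p(M=m_n),
\end{equation*}
noting that $p(M=m_n)$ is the same marginal in both expressions (it does not depend on the sentence being evaluated). It therefore suffices to show the pointwise identity $p(\alpha=0\mid M=m_n) = p(\neg\alpha=1\mid M=m_n)$ for every $n$.

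Next I would invoke the definition of the interpretation likelihood. By construction, $p(\alpha=0\mid M=m_n)$ equals $\mu$ if $m_n \in \llbracket\alpha=0\rrbracket$ and $1-\mu$ otherwise, while $p(\neg\alpha=1\mid M=m_n)$ equals $\mu$ if $m_n \in \llbracket\neg\alpha=1\rrbracket$ and $1-\mu$ otherwise. The key step is then the classical semantic equality $\llbracket\alpha=0\rrbracket = \llbracket\neg\alpha=1\rrbracket$: a model makes $\alpha$ false precisely when it makes $\neg\alpha$ true. Hence the two conditional probabilities coincide on every model $m_n$, and summing over $n$ yields the desired equality.

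The argument is almost entirely a bookkeeping unfolding, so I do not expect a genuine obstacle; the only subtle point is recognising that the proof relies on classical bivalent semantics (so that $\llbracket\alpha=0\rrbracket = \llbracket\neg\alpha=1\rrbracket$ really holds for every model) rather than on any property of $\mu$ or of the data distribution. This is worth stating explicitly, because the result then holds uniformly in $\mu \in [0,1]$ and irrespective of how $p(M)$ is induced by the dataset $\mathcal{D}$.
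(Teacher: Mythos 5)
Your proof is correct and follows essentially the same route as the paper's: condition on each model, compare the interpretation likelihoods pointwise, and marginalise over $p(M)$. The only (harmless) difference is that you establish $p(\alpha=0\mid m_n)=p(\lnot\alpha=1\mid m_n)$ directly from the semantic identity $\llbracket\alpha=0\rrbracket=\llbracket\lnot\alpha=1\rrbracket$, whereas the paper first derives $p(\alpha=1\mid m)=1-p(\lnot\alpha=1\mid m)$ and then appeals to Proposition~1(2); your version is marginally more self-contained and, as you note, manifestly independent of $\mu$ and of the data distribution.
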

\begin{proof}
See \ref{proof}.
\end{proof}
In the following, we therefore replace $\alpha=0$ by $\lnot\alpha=1$ and then abbreviate $\lnot\alpha=1$ to $\lnot\alpha$. We also abbreviate $M=m_{n}$ to $m_{n}$ and $D=d_{k}$ to $d_{k}$. 
\begin{table}[t]
\begin{minipage}[c]{.45\hsize}
\centering
\setlength{\tabcolsep}{0.5mm} 
\caption{Models and data.}
\label{ex:data}
\begin{tabular}{c|cc|c}
& $rain$ & $wet$ & data ${\cal D}$\\\hline
$m_{1}$ & $0$ & $0$ & $d_{1},d_{2},d_{3},d_{4}$\\
$m_{2}$ & $0$ & $1$ & $d_{5},d_{6}$\\
$m_{3}$ & $1$ & $0$ & $d_{7}$\\
$m_{4}$ & $1$ & $1$ & $d_{8},d_{9},d_{10}$
\end{tabular}
\end{minipage}
\begin{minipage}[c]{.45\hsize}
\centering
\setlength{\tabcolsep}{0.5mm} 
\caption{Likelihoods.}
\label{ex:likelihoods}
\begin{tabular}{c|cc}
& $p(rain|M)$ & $p(wet|M)$\\\hline
$m_{1}$ & $1-\mu$ & $1-\mu$\\
$m_{2}$ & $1-\mu$ & $\mu$\\
$m_{3}$ & $\mu$ & $1-\mu$\\
$m_{4}$ & $\mu$ & $\mu$
\end{tabular}
\end{minipage}
\end{table}
\begin{example}\label{ex:BE}
Let $rain$ and $wet$ be two propositional symbols meaning `it is raining' and `the grass is wet,' respectively. Each row of Table \ref{ex:data} shows a different model, i.e., valuation. The last column shows how many data belongs to each model. Table \ref{ex:likelihoods} shows the likelihoods of the atomic propositions being true given a model. Given the GL $p(\Gamma,M,D;\mu=1)$, we have
\begin{align*}
p(rain|wet)&=\frac{\sum_{n=1}^{N}p(rain|m_{n})p(wet|m_{n})\sum_{k=1}^{K}p(m_{n}|d_{k})p(d_{k})}{\sum_{n=1}^{N}p(wet|m_{n})\sum_{k=1}^{K}p(m_{n}|d_{k})p(d_{k})}\\
&=\frac{\sum_{n=1}^{N}p(rain|m_{n})p(wet|m_{n})\frac{K_{n}}{K}}{\sum_{n=1}^{N}p(wet|m_{n})\frac{K_{n}}{K}}\\
&=\frac{(1-\mu)^{2}\frac{4}{10}+(1-\mu)\mu\frac{2}{10}+\mu(1-\mu)\frac{1}{10}+\mu^{2}\frac{3}{10}}{(1-\mu)\frac{4}{10}+\mu\frac{2}{10}+(1-\mu)\frac{1}{10}+\mu\frac{3}{10}}=\frac{\frac{3}{10}}{\frac{2}{10}+\frac{3}{10}}=\frac{3}{5}.
\end{align*}
\end{example}
\begin{example}
Suppose that $L$ has only one 2-ary predicate symbol `$blames$' and that the Herbrand universe for $L$ has only two constants $\{a,b\}$. There are four ground atoms, $\{blames(a,a), blames(a,b)$, $blames(b,a)$, $blames(b,b)\}$, which result in $2^{4}=16$ possible models. Each row of Table \ref{tab:FOL} shows a different model and the last column shows the number of data that belongs to the model. Models without data are abbreviated from the table. Given the GL $p(\Gamma,M,D;\mu=1)$, we have
\begin{align*}
&p(\forall x~ blames(x,a)|\exists x~blames(x,a))=\frac{p(\forall x~ blames(x,a),\exists x~blames(x,a))}{p(\exists x~blames(x,a))}\\
&=\frac{\sum_{n=1}^{16}p(\forall x~ blames(x,a)|m_{n})p(\exists x~blames(x,a)|m_{n})\frac{K_{n}}{K}}{\sum_{n=1}^{16}p(\exists x~blames(x,a)|m_{n})\frac{K_{n}}{K}}\\
&=\frac{\mu(1-\mu)\frac{2}{10}+\mu^{2}\frac{3}{10}+(1-\mu)^{2}\frac{4}{10}}{\mu\frac{2}{10}+\mu\frac{3}{10}+(1-\mu)\frac{4}{10}}=\frac{\frac{3}{10}}{\frac{2}{10}+\frac{3}{10}}=\frac{3}{5}.
\end{align*}
The same result is derived using the GL with $\mu\to 1$, i.e., $p(\Gamma,M,D;\mu\to1)$. However, the following result can only be derived using GL with $\mu\to 1$. Indeed, GL with $\mu=1$ causes a probability undefined due to division by zero.
\begin{align*}
&p(blames(a,b),blames(b,a)|\lnot blames(a,a),\lnot blames(b,b))\\
&=\frac{\sum_{n=1}^{16}\prod_{\alpha\in\{blames(a,b),blames(b,a),\lnot blames(a,a),\lnot blames(b,b)\}}\lim_{\mu\to 1}p(\alpha|m_{n})\frac{K_{n}}{K}}{\sum_{n=1}^{16}\prod_{\alpha\in\{\lnot blames(a,a),\lnot blames(b,b)\}}\lim_{\mu\to 1}p(\alpha|m_{n})\frac{K_{n}}{K}}\\
&=\lim_{\mu\to 1}\frac{(1-\mu)^{4}\frac{2}{10}+(1-\mu)\mu^{3}\frac{3}{10}+\mu^{2}(1-\mu)^{2}\frac{4}{10}}{(1-\mu)^{2}\frac{2}{10}+(1-\mu)\mu\frac{3}{10}+\mu(1-\mu)\frac{4}{10}}\\
&=\lim_{\mu\to 1}\frac{(1-\mu)^{3}\frac{2}{10}+\mu^{3}\frac{3}{10}+\mu^{2}(1-\mu)\frac{4}{10}}{(1-\mu)\frac{2}{10}+\mu\frac{3}{10}+\mu\frac{4}{10}}=\frac{\frac{3}{10}}{\frac{3}{10}+\frac{4}{10}}=\frac{3}{7}
\end{align*}
\begin{table}[t]
\caption{Three predicate models and ten associated data.}
\centering
\begin{tabular}{c|cccc|c}
 & \multicolumn{4}{c|}{$blames$} &\\
 & $(a,a)$ & $(a,b)$ & $(b,a)$ & $(b,b)$ & data ${\cal D}$\\\hline
$m_{1}$ & 1 & 0 & 0 & 1 & $d_{1},d_{2}$\\
$m_{2}$ & 1 & 1 & 1 & 0 & $d_{3},d_{4},d_{5}$\\
$m_{3}$ & 0 & 1 & 0 & 1 & $d_{6},d_{7},d_{8},d_{9},d_{10}$\\
other & \multicolumn{4}{c|}{other} & no data
\end{tabular}
\label{tab:FOL}
\end{table}
\end{example}
\section{Statistical Correctness}
\subsection{Statistical Estimation}\label{sec:statistics}
Fenstad gives the representation theorem (Theorem 2 \cite{fenstad:67}) to discuss how one ought to correctly assign a probability to any first-order formula. Let $\alpha\in L$ and $m_{n}\in{\cal M}$. Given no function symbol or open formula, the theorem can have the following simpler form, where $m_{n}\models\alpha$ represents that $m_{n}$ satisfies $\alpha$.
\begin{eqnarray}\label{eq:fenstad}
p(\alpha)=\sum_{n=1: m_{n}\models\alpha}^{N}p(m_{n})
\end{eqnarray}
The above theorem states that the probability of a formula is the sum of the probabilities of the models where the formula is true. When one has no prior knowledge about the probability of models, the most frequently used statistical method to estimate the probability only from data is maximum likelihood estimation, which is given as follows.
\begin{eqnarray*}
p(M)=\argmax_{\Phi}p({\cal D}|\Phi)
\end{eqnarray*}
Here, $\Phi=(\phi_{1},\phi_{2},...,\phi_{N})$ is the parameter of the categorical distribution $p(M)$ where $\phi_{N}=1-\phi_{1}-\phi_{2}-\cdots-\phi_{N-1}$ and $N$ is the number of realisations of $M$. $p(M)$ is thus defined as the parameter $\Phi$ maximising the likelihood $p({\cal D}|\Phi)$ of the data ${\cal D}$. As usual, assuming that each data is independent given $\Phi$, we have
\begin{eqnarray*}
p({\cal D}|\Phi)=\prod_{k=1}^{K}p(d_{k}|\Phi)=\phi_{1}^{K_{1}}\phi_{2}^{K_{2}}\cdots\phi_{N-1}^{K_{N-1}}(1-\phi_{1}-\phi_{2}-\cdots-\phi_{N-1})^{K_{N}}.
\end{eqnarray*}
$\Phi$ maximises the likelihood if and only if it maximises the log likelihood, which is given as follows.
\begin{eqnarray*}
L(\Phi)&=&K_{1}\log\phi_{1}+K_{2}\log\phi_{2}+\cdots+K_{N-1}\log\phi_{N-1}\\
&&+K_{N}\log(1-\phi_{1}-\phi_{2}-\cdots-\phi_{N-1})
\end{eqnarray*}
The maximum likelihood estimate is obtained by solving the following simultaneous equations, which are obtained by differentiating the log likelihood with respect to each $\phi_{n}(1\leq n\leq N-1)$.
\begin{eqnarray*}
\frac{\partial L(\Phi)}{\partial \phi_{n}}=\frac{K_{n}}{\phi_{n}}-\frac{K_{N}}{1-\phi_{1}-\phi_{2}-\cdots-\phi_{N-1}}=0
\end{eqnarray*}
The following is the solution to the simultaneous equations.
\begin{eqnarray*}
\Phi=\left(\frac{K_{1}}{K},\frac{K_{2}}{K},...,\frac{K_{N}}{K}\right)
\end{eqnarray*}
Therefore, the maximum likelihood estimate for the $n$-th model is just the ratio of the number of data in the model to the total number of data. Combining Equation (\ref{eq:fenstad}) and the maximum likelihood estimate, we have
\begin{eqnarray}\label{eq:fenstad+ML}
p(\alpha)=\sum_{n=1: m_{n}\models\alpha}^{N}\frac{K_{n}}{K}.
\end{eqnarray}
\par
Now, let $p(\Gamma,M,D;\mu)$ be a GL such that $\mu=1$ or $\mu\to 1$. $\mu\to 1$ means that $\mu$ approaches $1$. We show that both Fenstad's representation theorem and maximum likelihood estimation justify the GL. The representation theorem justifies the GL because probabilistic inference with the GL satisfies Equation (\ref{eq:fenstad}). For example, given $\mu\to 1$, the equation can be derived as follows.
\begin{align}\label{eq:data-based}
p(\alpha)&=\sum_{n=1}^{N}p(\alpha,m_{n})=\sum_{n=1}^{N}p(\alpha|m_{n})p(m_{n})=\sum_{n=1}^{N}\lim_{\mu\to 1}\mu^{\m{\alpha}_{m_{n}}}(1-\mu)^{1-\m{\alpha}_{m_{n}}}p(m_{n})\nonumber\\
&=\sum_{n=1}^{N}1^{\m{\alpha}_{m_{n}}}0^{1-\m{\alpha}_{m_{n}}}p(m_{n})=\sum_{n=1}^{N}\llbracket\alpha\rrbracket_{m_{n}}p(m_{n})=\sum_{n=1:m_{n}\in\llbracket\alpha\rrbracket}^{N}p(m_{n}).
\end{align}
Obviously, $\mu=1$ gives the same result. Maximum likelihood estimation also justifies the GL because probabilistic inference with the GL satisfies Equation (\ref{eq:fenstad+ML}).
\begin{align}
p(\alpha)&=\sum_{n=1}^{N}\sum_{k=1}^{K}p(\alpha,m_{n},d_{k})=\sum_{n=1}^{N}p(\alpha|m_{n})\sum_{k=1}^{K}p(m_{n}|d_{k})p(d_{k})\label{eq:data-driven0}\\
&=\sum_{n=1}^{N}\llbracket\alpha\rrbracket_{m_{n}}\frac{K_{n}}{K}=\sum_{n=1: m_{n}\in\llbracket\alpha\rrbracket}^{N}\frac{K_{n}}{K}\label{eq:data-driven}
\end{align}
We have shown that the GL with $\mu=1$ or $\mu\to 1$ not only follows Fenstad's representation theorem and maximum likelihood estimation but also derives them as probabilistic reasoning in a unified way. When $p(\Delta)\neq 0$, the conditional probability of $\alpha\in L$ given $\Delta\subseteq L$ is thus derived from Equation (\ref{eq:data-driven}) as follows.
\begin{align}\label{eq:data-driven-cp}
p(\alpha|\Delta)=\frac{p(\alpha,\Delta)}{p(\Delta)}=\frac{\sum_{n=1: m_{n}\in\llbracket\alpha,\Delta\rrbracket}^{N}p(m_{n})}{\sum_{n=1: m_{n}\in\llbracket\Delta\rrbracket}^{N}p(m_{n})}=\frac{\sum_{n=1: m_{n}\in\llbracket\alpha,\Delta\rrbracket}^{N}K_{n}}{\sum_{n=1: m_{n}\in\llbracket\Delta\rrbracket}^{N}K_{n}}
\end{align}
Namely, $p(\alpha|\Delta)$ is the sum of the maximum likelihood estimates of the models where $\alpha$ and $\Delta$ are true, divided by the sum of the maximum likelihood estimates of the models where $\Delta$ is true. It turns out to be the number of data in the models where $\alpha$ and $\Delta$ are true, divided by the number of data in the models where $\Delta$ is true.
\par
Bayesian inference over symbols, i.e., $p(\alpha|\Delta)=p(\Delta|\alpha)p(\alpha)/p(\Delta)$, can be replaced by reasoning between models and symbols. When $p(\Delta)\neq 0$, each term of the Bayes' theorem can be expanded with a GL as follows.
\begin{eqnarray*}
p(\alpha|\Delta)=\frac{\sum_{m}p(\alpha|m)p(\Delta|m)p(m)}{\sum_{m}p(\Delta|m)p(m)}&~~~~&p(\Delta|\alpha)=\frac{\sum_{m}p(\Delta|m)p(\alpha|m)p(m)}{\sum_{m}p(\alpha|m)p(m)}\\
p(\alpha)=\sum_{m}p(\alpha|m)p(m)&~~~~&p(\Delta)=\sum_{m}p(\Delta|m)p(m)
\end{eqnarray*}
The result below shows that the Bayes' theorem still holds after the replacement.
\begin{align}
\frac{p(\Delta|\alpha)p(\alpha)}{p(\Delta)}&=\frac{\frac{\sum_{m}p(\Delta|m)p(\alpha|m)p(m)}{\sum_{m}p(\alpha|m)p(m)}\sum_{m}p(\alpha|m)p(m)}{\sum_{m}p(\Delta|m)p(m)}\nonumber\\
&=\frac{\sum_{m}p(\alpha|m)p(\Delta|m)p(m)}{\sum_{m}p(\Delta|m)p(m)}=p(\alpha|\Delta)\label{eq:bayes}
\end{align}
All the results discussed in this section justify the correctness of the GL from a statistical point of view.
\subsection{Time Complexity}
In general, the time complexity of Equation (\ref{eq:data-driven0}) depends on $N$, the number of models, which is unbounded in predicate logic and exponentially increases in propositional logic with respect to the number of propositional symbols. However, the exponential complexity can be reduced as follows to a linear complexity with respect to the number of data.
\begin{align}\label{eq:linear}
p(\alpha)&=\sum_{n=1}^{N}\sum_{k=1}^{K}p(\alpha,m_{n},d_{k})=\sum_{k=1}^{K}p(d_{k})\sum_{n=1}^{N}p(\alpha|m_{n})p(m_{n}|d_{k})\nonumber\\
&=\sum_{k=1}^{K}p(d_{k})p(\alpha|m(d_{k}))=\frac{1}{K}\sum_{k=1}^{K}\mu^{\m{\alpha}_{m(d_{K+1})}}(1-\mu)^{1-\m{\alpha}_{m(d_{K+1})}}
\end{align}
Note that the above equation holds regardless of the value of $\mu$. $p(\alpha|m(d_{K+1}))=\llbracket\alpha\rrbracket_{m(d_{K+1})}$ holds if $\mu=1$ or $\mu\to 1$. It is obvious from Equation (\ref{eq:linear}) that a conditional probability over symbols has also a linear-time complexity.
\begin{align}\label{eq:linear-cp}
p(\alpha|\Delta)&=\frac{\sum_{k=1}^{K}p(\alpha|m(d_{k}))p(\Delta|m(d_{k}))}{\sum_{k=1}^{K}p(\alpha|m(d_{k}))}=\frac{\sum_{k=1}^{K}\prod_{x\in\Delta\cup\{\alpha\}}p(x|m(d_{k}))}{\sum_{k=1}^{K}p(\alpha|m(d_{k}))}
\end{align}
Figure \ref{fig:time_complexity} illustrates the difference between the naive approach with the summation over all models (shown on the left) and Equation (\ref{eq:linear-cp}) with the summation over all data. It is illustrated that the number of models is generally much larger than the number of data and that the presence of the three layers contributes to reduce the complexity of reasoning over symbols.
\begin{figure}[t]
\begin{center}
 \includegraphics[scale=0.4]{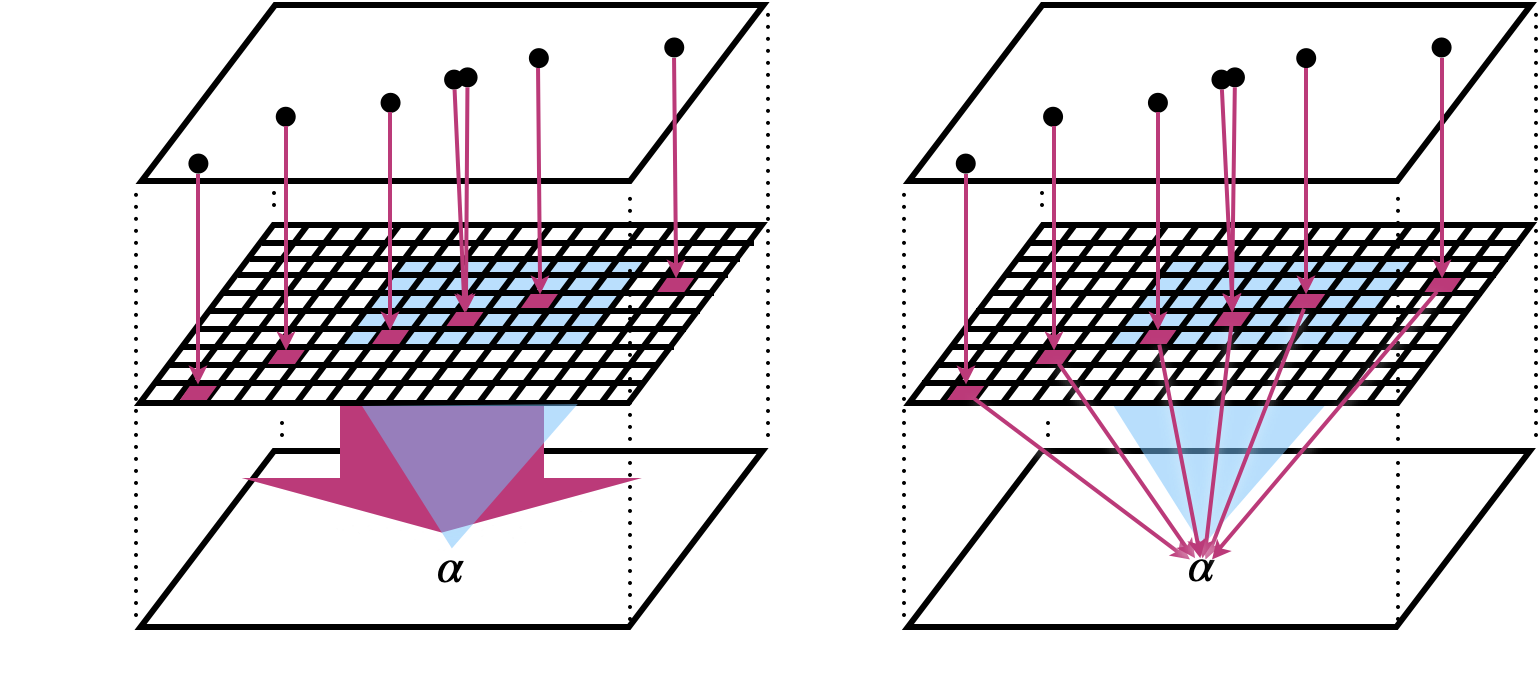}
  \caption{The hierarchy on the left illustrates the naive approach, e.g., Equation (\ref{eq:data-driven-cp}), where all the models are counted with the summation over models. The hierarchy on the right illustrates Equation (\ref{eq:linear-cp}) where all the models without data are ignored by the summation over data.}
  \label{fig:time_complexity}
  \end{center}
\end{figure}
\par
Equation (\ref{eq:data-driven}) has only a constant-time complexity for recalculation with new data. Let $p_{X}(\alpha)$ denote the probability of $\alpha$ calculated with $X$ data. $p_{K+1}(\alpha)$ can be calculated using $p_{K}(\alpha)$ as follows.
\begin{align}\label{eq:updating}
&p_{K+1}(\alpha)=\sum_{n=1}^{N}p(\alpha|m_{n})\sum_{k=1}^{K+1}p(m_{n}|d_{k})p(d_{k})\nonumber\\
&=\sum_{n=1}^{N}p(\alpha|m_{n})\sum_{k=1}^{K}p(m_{n}|d_{k})p(d_{k})+\sum_{n=1}^{N}p(\alpha|m_{n})p(m_{n}|d_{K+1})p(d_{K+1})\nonumber\\
&=\frac{K}{K+1}\sum_{n=1}^{N}p(\alpha|m_{n})\sum_{k=1}^{K}p(m_{n}|d_{k})\frac{1}{K}+\sum_{n=1}^{N}p(\alpha|m_{n})p(m_{n}|d_{K+1})\frac{1}{K+1}\nonumber\\
&=\frac{Kp_{K}(\alpha)+p(\alpha|m(d_{K+1}))}{K+1}
\end{align}
The equation below shows that $p(\alpha)$ converges with respect to the number of data.
\begin{align*}
\lim_{K\rightarrow\infty}p_{K+1}(\alpha)&=\lim_{K\rightarrow\infty}\frac{Kp_{K}(\alpha)+p(\alpha|m(d_{K+1}))}{K+1}\\
&=\lim_{K\rightarrow\infty}\frac{p_{K}(\alpha)+\frac{1}{K}p(\alpha|m(d_{K+1}))}{1+\frac{1}{K}}=p_{K}(\alpha)
\end{align*}
Equation (\ref{eq:updating}) implies that a conditional probability $p_{K+1}(\alpha|\beta)$ needs not only $p_{K}(\alpha|\beta)$ but $p_{K}(\beta)$ for recalculation with new data.
\begin{align*}
p_{K+1}(\alpha|\beta)&=\frac{p_{K+1}(\alpha,\beta)}{p_{K+1}(\beta)}=\frac{Kp_{K}(\alpha,\beta)+p(\alpha|m(d_{K+1}))p(\beta|m(d_{K+1}))}{Kp_{K}(\beta)+p(\beta|m(d_{K+1}))}\\
&=\frac{Kp_{K}(\alpha|\beta)p_{K}(\beta)+p(\alpha|m(d_{K+1}))p(\beta|m(d_{K+1}))}{Kp_{K}(\beta)+p(\beta|m(d_{K+1}))}
\end{align*}
\begin{example}
\begin{table}[t]
\centering
\setlength{\tabcolsep}{0.5mm} 
\caption{New data.}
\label{ex:update}
\begin{tabular}{c|cc|c|c}
& $bird$ & $fly$ & current data & new data\\\hline
$m_{1}$ & $0$ & $0$ & $d_{1},d_{2},d_{3},d_{4},d_{5}$ &\\
$m_{2}$ & $0$ & $1$ & $d_{6},d_{7}$ &\\
$m_{3}$ & $1$ & $0$ & & $d_{11}$\\
$m_{4}$ & $1$ & $1$ & $d_{8},d_{9},d_{10}$
\end{tabular}
\end{table}
Let $p(\Gamma,M,D;\mu=1)$ be a GL and $bird, fly\in\Gamma$ be propositional symbols meaning `it is a bird' and `it flies', respectively. Each row of Table \ref{ex:update} shows a different model. Given the ten data shown in the fourth column, the probability that $bird$ implies $fly$ is calculated using Equation (\ref{eq:linear}), as follows.
\begin{align*}
&p(bird\rightarrow fly)=\frac{1}{10}\sum_{k=1}^{10}\llbracket bird\rightarrow fly\rrbracket_{m(d_{k})}=1
\end{align*}
It is obvious from the GL that the counterintuitive knowledge that birds must fly comes from a lack of data. Indeed, taking into account the eleventh data shown in the last column, the probability is updated using Equation (\ref{eq:updating}), as follows.
\begin{eqnarray*}
p_{11}(\alpha)=\frac{10p_{10}(bird\rightarrow fly)+\llbracket bird\rightarrow fly\rrbracket_{m(d_{11})}}{11}=\frac{10}{11}
\end{eqnarray*}
\end{example}
\section{Logical Correctness}
\subsection{Consistent Reasoning}
We showed in the last section that, given a GL $p(\Gamma,M,D;\mu)$ such that $\mu=1$ or $\mu\to 1$, $p(M)$ is equivalent to the maximum likelihood estimate, i.e., for all $m_{n}\in{\cal M}$,
\begin{eqnarray*}
p(m_{n})=\sum_{k=1}^{K}p(m_{n}|d_{k})p(d_{k})=\frac{K_{n}}{K}.
\end{eqnarray*}
Therefore, given $\mu=1$ or $\mu\to 1$, $p(\Gamma,M,D;\mu)$ is equivalent to $p(\Gamma,M;\mu)$ if $p(M)$ is the maximum likelihood estimate. For the sake of simplicity, we also refer to the latter as a GL and use it without distinction.
\par
In this section, we look at the GL $p(\Gamma,M;\mu=1)$ on the assumption that every model is possible, i.e., $p(m)\neq 0$, for all models $m$, denoted by $0\notin p(M)$. Recall that a set $\Delta$ of formulas entails a formula $\alpha$ in classical logic, denoted by $\Delta\models\alpha$, iff $\alpha$ is true in every model in which $\Delta$ is true, i.e., $\llbracket\Delta\rrbracket\subseteq\llbracket\alpha\rrbracket$. The following theorem relates the probability of a formula to the probability of its models.
\begin{theorem}\label{thrm:consistent}
Let $p(\Gamma,M;\mu=1)$ be a generative logic such that $0\notin p(M)$, $\alpha\in\Gamma$ and $\Delta\subseteq\Gamma$.
\begin{eqnarray*}
p(\alpha|\Delta)=
\begin{cases}
\displaystyle{\frac{\sum_{m\in\ms{\Delta}\cap\ms{\alpha}}p(m)}{\sum_{m\in\ms{\Delta}}p(m)}}&\text{if }\ms{\Delta}\neq\emptyset\\
\text{undefined}&\text{otherwise}
\end{cases}
\end{eqnarray*}
\end{theorem}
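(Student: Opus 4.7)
The plan is to unfold the conditional probability $p(\alpha\mid\Delta)=p(\alpha,\Delta)/p(\Delta)$ using the joint distribution of the generative logic, and then specialise to $\mu=1$ so that the Bernoulli likelihoods collapse to indicators of satisfaction.

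First I would marginalise over models. For the numerator, write
\begin{align*}
p(\alpha,\Delta)=\sum_{n=1}^{N}p(\alpha,\Delta,m_{n})=\sum_{n=1}^{N}p(\alpha\mid m_{n})\,p(\Delta\mid m_{n})\,p(m_{n}),
\end{align*}
invoking the conditional independence of formulas given a model that was established earlier in the paper. Using $p(\Delta\mid m_{n})=\prod_{\beta\in\Delta}p(\beta\mid m_{n})$ together with the Bernoulli form $p(\beta\mid m_{n})=\mu^{\m{\beta}_{m_{n}}}(1-\mu)^{1-\m{\beta}_{m_{n}}}$, I would then set $\mu=1$: each factor becomes $1$ when $m_{n}\in\m{\beta}$ and $0$ otherwise, so $p(\Delta\mid m_{n})=1$ iff $m_{n}\in\m{\Delta}:=\bigcap_{\beta\in\Delta}\m{\beta}$, and similarly $p(\alpha\mid m_{n})=1$ iff $m_{n}\in\m{\alpha}$. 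Consequently only the models in $\m{\Delta}\cap\m{\alpha}$ survive in the numerator, and only those in $\m{\Delta}$ in the analogous expansion of the denominator, giving
\begin{align*}
p(\alpha,\Delta)=\sum_{m\in\m{\Delta}\cap\m{\alpha}}p(m),\qquad p(\Delta)=\sum_{m\in\m{\Delta}}p(m).
\end{align*}

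Next I would check well-definedness. Under the hypothesis $0\notin p(M)$, every model carries strictly positive mass, so the denominator $\sum_{m\in\m{\Delta}}p(m)$ is nonzero precisely when $\m{\Delta}\neq\emptyset$. In that case the ratio above is the claimed expression for $p(\alpha\mid\Delta)$. If instead $\m{\Delta}=\emptyset$, then $p(\Delta)=0$ and the conditional probability is undefined by convention, matching the second case of the statement.

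I do not expect a genuine obstacle here; the argument is essentially a careful bookkeeping exercise on the joint distribution once the Bernoulli likelihoods degenerate at $\mu=1$. The only subtlety worth flagging is that $\Delta$ is a multiset, so when forming $p(\Delta\mid m_{n})$ one must take the product over all occurrences, but because each individual factor is $0$ or $1$ under $\mu=1$ the multiplicity is irrelevant and the product collapses to the indicator of $m_{n}\in\m{\Delta}$, justifying the reduction to a sum over the set $\m{\Delta}$.
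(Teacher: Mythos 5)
Your proposal is correct and follows essentially the same route as the paper's own proof: expand $p(\alpha\mid\Delta)$ as a ratio of sums over models, observe that at $\mu=1$ the Bernoulli likelihoods $p(\Delta\mid m)$ and $p(\alpha\mid m)$ degenerate to indicators of $m\in\m{\Delta}$ and $m\in\m{\alpha}$, and handle the $\m{\Delta}=\emptyset$ case via division by zero. If anything, you are slightly more careful than the paper in explicitly using the hypothesis $0\notin p(M)$ to guarantee the denominator is nonzero whenever $\m{\Delta}\neq\emptyset$.
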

\begin{proof}
Let $|\Delta|$ denote the cardinality of $\Delta$. Recall that, in formal logic, the fact that $\Delta$ has a model is equivalent to the fact that there is a model $m$ in which every formula in $\Delta$ is true in $m$. Dividing models into the models of $\Delta$ and the others, we have
\begin{eqnarray*}
p(\alpha|\Delta)&=&\frac{\sum_{m}p(\alpha|m)p(\Delta|m)p(m)}{\sum_{m}p(\Delta|m)p(m)}\\
&=&\frac{\displaystyle{\sum_{m\in\llbracket\Delta\rrbracket}p(m)p(\alpha|m)\mu^{|\Delta|}+\sum_{m\notin\llbracket\Delta\rrbracket}p(m)p(\alpha|m)p(\Delta|m)}}{\displaystyle{\sum_{m\in\llbracket\Delta\rrbracket}p(m)\mu^{|\Delta|}+\sum_{m\notin\llbracket\Delta\rrbracket}p(m)p(\Delta|m)}}.
\end{eqnarray*}
By definition, $p(\Delta|m)=\prod_{\beta\in\Delta}p(\beta|m)=\prod_{\beta\in\Delta}\mu^{\llbracket\beta\rrbracket_{m}}(1-\mu)^{1-{\llbracket\beta\rrbracket_{m}}}$. For all $m\notin\llbracket\Delta\rrbracket$, there is $\beta\in\Delta$ such that $\llbracket\beta\rrbracket_{m}=0$. Therefore, $p(\Delta|m)=0$ when $\mu=1$, for all $m\notin\llbracket\Delta\rrbracket$. We thus have
\begin{eqnarray*}
p(\alpha|\Delta)=\frac{\sum_{m\in\llbracket\Delta\rrbracket}p(m)p(\alpha|m)1^{|\Delta|}}{\sum_{m\in\llbracket\Delta\rrbracket}p(m)1^{|\Delta|}}=\frac{\sum_{m\in\llbracket\Delta\rrbracket}p(m)1^{\llbracket\alpha\rrbracket_{m}}0^{1-\llbracket\alpha\rrbracket_{m}}}{\sum_{m\in\llbracket\Delta\rrbracket}p(m)}.
\end{eqnarray*}
Since $1^{\llbracket\alpha\rrbracket_{m}}0^{1-\llbracket\alpha\rrbracket_{m}}=1^{1}0^{0}=1$ if $m\in\llbracket\alpha\rrbracket$ and $1^{\llbracket\alpha\rrbracket_{m}}0^{1-\llbracket\alpha\rrbracket_{m}}=1^{0}0^{1}=0$ if $m\notin\llbracket\alpha\rrbracket$, we have
\begin{align}\label{eq:cent_characterisation}
p(\alpha|\Delta)=\frac{\sum_{m\in\llbracket\Delta\rrbracket\cap\llbracket\alpha\rrbracket}p(m)}{\sum_{m\in\llbracket\Delta\rrbracket}p(m)}.
\end{align}
In addition, if $\ms{\Delta}=\emptyset$ then $p(\alpha|\Delta)$ is undefined due to division by zero.
\end{proof}
The following Corollary states that, if $\Delta$ is consistent, uncertain reasoning, i.e., reasoning with a probability less than one, over symbols with the GL is a generalisation of the classical consequence relation.
\begin{corollary}\label{cor:consistent_reasoning}
Let $p(\Gamma,M,D;\mu=1)$ be a generative logic such that $0\notin p(M)$, $\alpha\in\Gamma$ and $\Delta\subseteq\Gamma$ such that $\ms{\Delta}\neq\emptyset$. $p(\alpha|\Delta)=1$ iff (if and only if) $\Delta\cent\alpha$.
\end{corollary}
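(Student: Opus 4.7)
The plan is to reduce the corollary directly to the closed-form expression for $p(\alpha\mid\Delta)$ provided by Theorem \ref{thrm:consistent}, and then use the standing assumption $0\notin p(M)$ to convert an equality of weighted sums over models into a set-theoretic inclusion $\ms{\Delta}\subseteq\ms{\alpha}$, which is exactly the definition of $\Delta\cent\alpha$. Since we are given $\ms{\Delta}\neq\emptyset$, the conditional probability is defined and Theorem \ref{thrm:consistent} gives
\begin{eqnarray*}
p(\alpha\mid\Delta)=\frac{\sum_{m\in\ms{\Delta}\cap\ms{\alpha}}p(m)}{\sum_{m\in\ms{\Delta}}p(m)},
\end{eqnarray*}
where the denominator is strictly positive because $\ms{\Delta}$ is nonempty and every model carries positive probability.

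For the ``if'' direction I would assume $\Delta\cent\alpha$, i.e., $\ms{\Delta}\subseteq\ms{\alpha}$. Then $\ms{\Delta}\cap\ms{\alpha}=\ms{\Delta}$, so the numerator and denominator in the displayed fraction are identical and $p(\alpha\mid\Delta)=1$. This direction does not actually use $0\notin p(M)$; it only uses the formula from Theorem \ref{thrm:consistent}.

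For the ``only if'' direction I would assume $p(\alpha\mid\Delta)=1$, which forces $\sum_{m\in\ms{\Delta}\cap\ms{\alpha}}p(m)=\sum_{m\in\ms{\Delta}}p(m)$. Rewriting as $\sum_{m\in\ms{\Delta}\setminus\ms{\alpha}}p(m)=0$ and invoking $0\notin p(M)$, every summand is strictly positive, so the sum can vanish only if the index set is empty: $\ms{\Delta}\setminus\ms{\alpha}=\emptyset$, i.e., $\ms{\Delta}\subseteq\ms{\alpha}$, which is $\Delta\cent\alpha$ by definition.

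The only real subtlety, and the point worth flagging, is the essential role of the hypothesis $0\notin p(M)$ in the converse direction: without it, models in $\ms{\Delta}\setminus\ms{\alpha}$ could have zero probability and would be invisible to the summation, so $p(\alpha\mid\Delta)=1$ could hold even though $\Delta\not\cent\alpha$. This is precisely the phenomenon that motivates the alternative consequence relation $\ent$ discussed later in the paper, so I would keep the dependence on $0\notin p(M)$ explicit in the write-up rather than hiding it inside the algebra.
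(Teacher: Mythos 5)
Your proposal is correct and follows essentially the same route as the paper: both reduce the claim to the closed-form expression for $p(\alpha\mid\Delta)$ established in Theorem \ref{thrm:consistent} (Equation (\ref{eq:cent_characterisation})) and then use $0\notin p(M)$ to translate the equality of sums into the inclusion $\ms{\Delta}\subseteq\ms{\alpha}$. Your write-up is merely more explicit than the paper's one-line argument about where positivity of the model probabilities is actually needed.
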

\begin{proof}
By assumption, we have $0\notin p(M)$. From Equation (\ref{eq:cent_characterisation}), $\frac{\sum_{m\in\llbracket\Delta\rrbracket\cap\llbracket\alpha\rrbracket}p(m)}{\sum_{m\in\llbracket\Delta\rrbracket}p(m)}=1$ iff $\llbracket\alpha\rrbracket\supseteq\llbracket\Delta\rrbracket$, i.e., $\Delta\models\alpha$.
\end{proof}
The following example shows that Theorem \ref{thrm:consistent} does not hold without the assumption $0\notin p(M)$. 
\begin{example}
Given $p(M)=(0.6,0,0.1,0.3)$ in Example \ref{ex:BE}, $p(rain|wet)=1$ but $\{wet\}\not\models rain$.
\end{example}
For any formula $\alpha$ and set $\Delta$ of formulas, $\llbracket\Delta\rrbracket=\emptyset$ implies $\Delta\models\alpha$ in classical logic as the classical entailment is defined as $\m{\Delta}\subseteq\m{\alpha}$. Thus, Theorem \ref{thrm:consistent} implies that if $p(\alpha|\Delta)=1$ then $\Delta\cent\alpha$, but not vice versa. In other words, certain reasoning, i.e., reasoning with a probability of one, with the GL is more cautious than the classical entailment.
\par
Figure \ref{fig:consistent_reasoning} illustrates consistent reasoning with the GL $p(\Gamma,M,D;\mu=1)$. It shows that models are constrained by formulas, but not restricted by data. It is indeed shown that every model has a probability of non-zero due to the assumption $0\notin p(M)$ regardless of the data presence.
\par
\begin{figure}[t]
\begin{center}
 \includegraphics[scale=0.4]{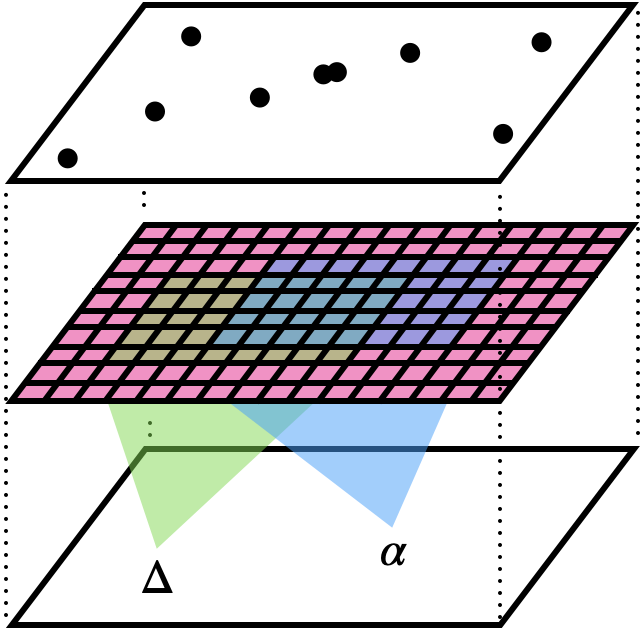}
  \caption{Consistent reasoning with the GL $p(\Gamma,M,D;\mu=1)$. A model (i.e., a cell in the second layer) is coloured in purple if its probability is non-zero. A model is coloured in green (resp. blue) if all the formulas in $\Delta\subseteq\Gamma$ are (resp. $\alpha\in\Gamma$ is) true in the model.}
  \label{fig:consistent_reasoning}
  \end{center}
\end{figure}
\subsection{Possible Reasoning}
In the previous section, we looked at the GL $p(\Gamma,M,D;\mu=1)$ and characterised its certain reasoning as reasoning from consistency with the classical entailment. The assumption we made for the characterisation is $0\notin p(M)$ meaning that no model has a probability of zero. In this section, we look at the same GL $p(\Gamma,M,D;\mu=1)$ without the assumption. This section thus aims to fully generalise the discussion of the previous section.
\par
We refer to a model, formula or set of formulas as being possible if its probability is non-zero, and impossible otherwise. For any $\Delta\subseteq L$, we use symbol $\pms{\Delta}$ to denote the set of all the possible models of $\Delta$, i.e., $\pms{\Delta}=\{m\in\m{\Delta}|p(m)\neq 0\}$. We assume $\pms{\Delta}_{m}=1$ if $m\in\pms{\Delta}$ and $\pms{\Delta}_{m}=0$ otherwise. Obviously, $\pms{\Delta}\subseteq\m{\Delta}$ holds, for all $\Delta\subseteq L$, and $\pms{\Delta}=\m{\Delta}$ holds if all models are possible. We define an alternative consequence relation based only on possible models.
\begin{definition}[Consequence]
Let $\Delta\subseteq L$ and $\alpha\in L$. $\alpha$ is a consequence of $\Delta$, denoted by $\Delta\ent\alpha$, if $\pms{\alpha}\supseteq\pms{\Delta}$.
\end{definition}
The alternative consequence relation $\ent$ needs to be distinguished from $\cent$, which is defined as $[\![\alpha]\!]\supseteq[\![\Delta]\!]$. We refer to $\cent$ as the classical consequence relation and $\ent$ as a consequence relation. The following proposition shows that the consequence relation is weaker than the classical consequence relation.
\begin{proposition}
Let $\Delta\subseteq L$ and $\alpha\in L$. If $\Delta\cent\alpha$ then $\Delta\ent\alpha$, but not vice versa.
\end{proposition}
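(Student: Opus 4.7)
The plan is to prove the implication directly from the set-theoretic definitions, and then refute the converse by exhibiting a probability assignment that zeros out a classical countermodel.

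For the forward direction, I would first unfold the definitions: $\Delta \cent \alpha$ means $\m{\alpha} \supseteq \m{\Delta}$, and $\Delta \ent \alpha$ means $\pms{\alpha} \supseteq \pms{\Delta}$. The key step is the inclusion $\pms{\Gamma} \subseteq \m{\Gamma}$, which holds for every $\Gamma \subseteq L$ by the very definition $\pms{\Gamma} = \{m \in \m{\Gamma} \mid p(m) \neq 0\}$. Taking an arbitrary $m \in \pms{\Delta}$, this gives $m \in \m{\Delta}$ together with $p(m) \neq 0$; the classical inclusion $\m{\Delta} \subseteq \m{\alpha}$ then yields $m \in \m{\alpha}$, and combining with $p(m) \neq 0$ produces $m \in \pms{\alpha}$. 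Thus $\pms{\Delta} \subseteq \pms{\alpha}$, establishing $\Delta \ent \alpha$.

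For the failure of the converse, I would reuse the setup of Example \ref{ex:BE}, and in particular the assignment $p(M) = (0.6, 0, 0.1, 0.3)$ already invoked after Corollary \ref{cor:consistent_reasoning}. Here $\m{wet} = \{m_{2}, m_{4}\}$ and $\m{rain} = \{m_{3}, m_{4}\}$, so $\{wet\} \not\cent rain$ because $m_{2} \in \m{wet} \setminus \m{rain}$. However, since $p(m_{2}) = 0$, we get $\pms{wet} = \{m_{4}\} \subseteq \{m_{3}, m_{4}\} = \pms{rain}$, which gives $\{wet\} \ent rain$. This witnesses that $\ent$ is strictly weaker than $\cent$.

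There is essentially no technical obstacle: the argument is a one-line set-inclusion chase together with a ready-made counterexample already in the paper. The only thing to be careful about is to phrase the counterexample so the reader sees clearly which direction fails, i.e., that $\Delta \ent \alpha$ is compatible with $\Delta \not\cent \alpha$ precisely because the problematic classical model has been assigned probability zero and is therefore excluded from $\pms{\cdot}$. I would also note in passing that the proof uses no assumption on $\mu$ nor on the presence of data, since it operates purely at the level of the induced distribution $p(M)$.
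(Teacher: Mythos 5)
Your proof is correct and takes essentially the same approach as the paper: the forward direction is the observation that removing the zero-probability models from both sides of $\m{\Delta}\subseteq\m{\alpha}$ preserves the inclusion, and the converse is refuted by a zero-probability classical countermodel. The only cosmetic difference is that you instantiate the counterexample concretely with the $p(M)=(0.6,0,0.1,0.3)$ assignment from the rain/wet example, whereas the paper states it schematically via $\m{\Delta}=\m{\alpha}\cup\{m\}$ with $p(m)=0$; both are valid.
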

\begin{proof}
($\Rightarrow$) Recall that $\Delta\cent\alpha$ iff $\ms{\Delta}\subseteq\ms{\alpha}$. For all $X\subseteq{\cal M}$, $\ms{\Delta}\setminus X\subseteq\ms{\alpha}\setminus X$ holds. ($\Leftarrow$) Suppose $\Delta$, $\alpha$ and $m$ such that $\ms{\Delta}=\ms{\alpha}\cup\{m\}$ and $p(m)=0$. We then have $\Delta\ent\alpha$, but $\Delta\not\cent\alpha$.
\end{proof}
For all models, the classical consequence relation requires all the models of the premises to be the models of the conclusion. In contrast, the consequence relation requires all the possible models of the premises to be the models of the conclusion. The following theorem relates the probability of a formula to the probability of its possible models.
\begin{theorem}\label{thrm:possible}
Let $p(\Gamma,M;\mu=1)$ be a generative logic, $\alpha\in\Gamma$ and $\Delta\subseteq\Gamma$.
\begin{eqnarray*}
p(\alpha|\Delta)=
\begin{cases}
\displaystyle{\frac{\sum_{m\in\pms{\Delta}\cap\pms{\alpha}}p(m)}{\sum_{m\in\pms{\Delta}}p(m)}}&\text{if }\pms{\Delta}\neq\emptyset\\
\text{undefined}&\text{otherwise}
\end{cases}
\end{eqnarray*}
\end{theorem}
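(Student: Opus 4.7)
The plan is to reuse the computation from the proof of Theorem \ref{thrm:consistent} verbatim up to the point where the assumption $0 \notin p(M)$ was invoked, and then replace $\ms{\cdot}$ with $\pms{\cdot}$ by observing that the models carrying zero prior contribute nothing to the weighted sums.

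First I would expand the conditional as
\begin{eqnarray*}
p(\alpha|\Delta)=\frac{\sum_{m}p(\alpha|m)p(\Delta|m)p(m)}{\sum_{m}p(\Delta|m)p(m)}
\end{eqnarray*}
and split each sum into $m\in\ms{\Delta}$ and $m\notin\ms{\Delta}$. As in the proof of Theorem \ref{thrm:consistent}, the definition $p(\Delta|m)=\prod_{\beta\in\Delta}\mu^{\ms{\beta}_m}(1-\mu)^{1-\ms{\beta}_m}$ with $\mu=1$ kills every term with $m\notin\ms{\Delta}$ (some $\beta\in\Delta$ satisfies $\ms{\beta}_m=0$), while $p(\Delta|m)=1^{|\Delta|}=1$ for $m\in\ms{\Delta}$. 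An identical collapse of $p(\alpha|m)=1^{\ms{\alpha}_m}0^{1-\ms{\alpha}_m}$ in the numerator yields
\begin{eqnarray*}
p(\alpha|\Delta)=\frac{\sum_{m\in\ms{\Delta}\cap\ms{\alpha}}p(m)}{\sum_{m\in\ms{\Delta}}p(m)},
\end{eqnarray*}
which is exactly the expression obtained in the proof of Theorem \ref{thrm:consistent}, and importantly it was derived without any appeal to $0\notin p(M)$.

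Next I would pass from $\ms{\cdot}$ to $\pms{\cdot}$. By the definitions $\pms{\Delta}=\{m\in\ms{\Delta}\mid p(m)\neq 0\}$ and $\pms{\alpha}=\{m\in\ms{\alpha}\mid p(m)\neq 0\}$, every model lying in $\ms{\Delta}\setminus\pms{\Delta}$ or in $(\ms{\Delta}\cap\ms{\alpha})\setminus(\pms{\Delta}\cap\pms{\alpha})$ has $p(m)=0$ and hence contributes nothing to either sum. So both the numerator and the denominator can be restricted to the possible-model index sets, giving the stated fraction whenever the denominator is nonzero. For the undefined case, $p(\alpha|\Delta)$ is undefined precisely when $p(\Delta)=\sum_{m\in\ms{\Delta}}p(m)=0$; since each summand is nonnegative, this vanishes iff $p(m)=0$ for every $m\in\ms{\Delta}$, i.e. iff $\pms{\Delta}=\emptyset$.

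I do not expect a genuine obstacle here: the argument is a bookkeeping generalisation of Theorem \ref{thrm:consistent}, and the only subtle point is the undefined clause. In the earlier theorem the failure case was $\ms{\Delta}=\emptyset$, but once zero-probability models are allowed, failure occurs in the strictly broader situation where $\ms{\Delta}$ may be nonempty yet all its elements have zero prior. Recasting the condition as $\pms{\Delta}=\emptyset$ handles both situations uniformly, which is what the statement claims.
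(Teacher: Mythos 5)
Your proposal is correct and follows essentially the same route as the paper: the paper partitions the sums over $\pms{\Delta}$ directly (noting that every $m\notin\pms{\Delta}$ has either $p(\Delta|m)=0$ or $p(m)=0$), whereas you first rerun the Theorem~\ref{thrm:consistent} computation over $\ms{\Delta}$ and then discard the zero-probability models, which drops exactly the same terms for exactly the same reasons. Your observation that the Theorem~\ref{thrm:consistent} derivation never uses $0\notin p(M)$, and your handling of the undefined clause via $\sum_{m\in\ms{\Delta}}p(m)=0\iff\pms{\Delta}=\emptyset$, are both accurate.
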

\begin{proof}
Dividing models into the possible models $\pms{\Delta}$ and the others, we have
\begin{align*}
p(\alpha|\Delta)&=\frac{\sum_{m}p(\alpha|m)p(\Delta|m)p(m)}{\sum_{m}p(\Delta|m)p(m)}\\
&=\frac{\displaystyle{\sum_{\hat{m}\in\pms{\Delta}}p(\alpha|\hat{m})\mu^{|\Delta|}p(\hat{m})+\sum_{m\notin\pms{\Delta}}p(\alpha|m)p(\Delta|m)p(m)}}{\displaystyle{\sum_{\hat{m}\in\pms{\Delta}}\mu^{|\Delta|}p(\hat{m})+\sum_{m\notin\pms{\Delta}}p(\Delta|m)p(m)}}.
\end{align*}
$p(\Delta|m)=\prod_{\beta\in\Delta}p(\beta|m)=\prod_{\beta\in\Delta}\mu^{\m{\beta}_{m}}(1-\mu)^{1-{\m{\beta}_{m}}}$. Recall that $\pms{\Delta}=\{m\in\m{\Delta}|p(m)\neq0\}$. Thus, for all $m\notin\pms{\Delta}$, if $m\notin\m{\Delta}$ then there is $\beta\in\Delta$ such that $\m{\beta}_{m}=0$ and if $m\in\m{\Delta}$ then $p(m)=0$. Therefore, $p(\Delta|m)=0$ or $p(m)=0$ when $\mu=1$, for all $m\notin\pms{\Delta}$. We thus have
\begin{align*}
p(\alpha|\Delta)=\frac{\sum_{m\in\pms{\Delta}}p(\alpha|m)1^{|\Delta|}p(m)}{\sum_{m\in\pms{\Delta}}1^{|\Delta|}p(m)}=\frac{\sum_{m\in\pms{\Delta}}1^{\m{\alpha}_{m}}0^{1-\m{\alpha}_{m}}p(m)}{\sum_{m\in\pms{\Delta}}p(m)}.
\end{align*}
Since $1^{\m{\alpha}_{m}}0^{1-\m{\alpha}_{m}}=1^{1}0^{0}=1$ if $m\in\m{\alpha}$ and $1^{\m{\alpha}_{m}}0^{1-\m{\alpha}_{m}}=1^{0}0^{1}=0$ if $m\notin\m{\alpha}$, we have
\begin{align}\label{ijcai23:proof:2}
p(\alpha|\Delta)=\frac{\sum_{m\in\pms{\Delta}\cap\m{\alpha}}p(m)}{\sum_{m\in\pms{\Delta}}p(m)}=\frac{\sum_{m\in\pms{\Delta}\cap\pms{\alpha}}p(m)}{\sum_{m\in\pms{\Delta}}p(m)}.
\end{align}
In addition, if $\pms{\Delta}=\emptyset$ then $p(\alpha|\Delta)$ is undefined due to division by zero.
\end{proof}
The following Corollary states that, if $\Delta$ is possible, uncertain reasoning over symbols with the GL is a generalisation of the alternative consequence relation.
\begin{corollary}\label{cor:possible_reasoning}
Let $p(\Gamma,M;\mu=1)$ be a generative logic, $\alpha\in\Gamma$ and $\Delta\subseteq\Gamma$ such that $\pms{\Delta}\neq\emptyset$. $p(\alpha|\Delta)=1$ iff $\Delta\ent\alpha$.
\end{corollary}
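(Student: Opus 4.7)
The plan is to derive this corollary directly from Theorem~\ref{thrm:possible}, which under the hypothesis $\pms{\Delta}\neq\emptyset$ evaluates $p(\alpha|\Delta)$ as the ratio $\sum_{m\in\pms{\Delta}\cap\pms{\alpha}}p(m) / \sum_{m\in\pms{\Delta}}p(m)$, and to combine it with the definition of the consequence relation, namely $\Delta\ent\alpha$ iff $\pms{\alpha}\supseteq\pms{\Delta}$. The whole argument reduces to a set-theoretic manipulation on the numerator and denominator.

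For the ($\Leftarrow$) direction, I would assume $\Delta\ent\alpha$, i.e.\ $\pms{\Delta}\subseteq\pms{\alpha}$. Then intersecting both sides with $\pms{\Delta}$ gives $\pms{\Delta}\cap\pms{\alpha}=\pms{\Delta}$, so the numerator and denominator of the ratio provided by Theorem~\ref{thrm:possible} coincide. Because the denominator is nonzero (every $m\in\pms{\Delta}$ satisfies $p(m)>0$, and $\pms{\Delta}$ is nonempty by hypothesis), we conclude $p(\alpha|\Delta)=1$.

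For the ($\Rightarrow$) direction, assume $p(\alpha|\Delta)=1$. Applying Theorem~\ref{thrm:possible} again, the numerator must equal the (strictly positive) denominator. Since $\pms{\Delta}\cap\pms{\alpha}\subseteq\pms{\Delta}$ always, and since every $m\in\pms{\Delta}\setminus\pms{\alpha}$ would contribute a \emph{strictly positive} mass $p(m)$ to the denominator but nothing to the numerator, the equality of sums forces $\pms{\Delta}\setminus\pms{\alpha}=\emptyset$, i.e.\ $\pms{\Delta}\subseteq\pms{\alpha}$. By the definition of $\ent$, this is exactly $\Delta\ent\alpha$.

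The only subtle point, and the step I would state most carefully, is the implication \textquotedblleft equality of the two sums implies equality of the two index sets.\textquotedblright\ This relies essentially on the fact that by the very definition of $\pms{\cdot}$, every term $p(m)$ appearing in $\sum_{m\in\pms{\Delta}}p(m)$ is \emph{strictly} positive; without this, one could have cancellation between set difference and equal mass, and the biconditional would fail. This is the same phenomenon that made the assumption $0\notin p(M)$ necessary in Corollary~\ref{cor:consistent_reasoning}: here that assumption is no longer needed because possibility has been absorbed into the consequence relation $\ent$ itself.
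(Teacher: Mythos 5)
Your proof is correct and follows essentially the same route as the paper's: both read off $p(\alpha|\Delta)$ from Theorem~\ref{thrm:possible} as the ratio $\sum_{m\in\pms{\Delta}\cap\pms{\alpha}}p(m)/\sum_{m\in\pms{\Delta}}p(m)$ and use the fact that every $m\in\pms{\Delta}$ has $p(m)\neq 0$ by the definition of $\pms{\cdot}$ to conclude that the ratio equals $1$ exactly when $\pms{\Delta}\subseteq\pms{\alpha}$, i.e.\ $\Delta\ent\alpha$. You merely spell out the two directions (and the reason why $0\notin p(M)$ is no longer needed) more explicitly than the paper's one-line argument.
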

\begin{proof}
Recall that $\Delta\ent\alpha$ iff $\pms{\alpha}\supseteq\pms{\Delta}$. Since Equation (\ref{ijcai23:proof:2}) and $p(m)\neq0$ hold, for all $m\in\pms{\Delta}$, $p(\alpha|\Delta)=1$ iff $\pms{\alpha}\supseteq\pms{\Delta}$.
\end{proof}
\par
Figure \ref{fig:possible_reasoning} illustrates possible reasoning with the GL $p(\Gamma,M,D;\mu=1)$. It shows that models are not only constrained by formulas but also restricted by data. In contrast to consistent reasoning with the GL, possible reasoning ignores all the models without data.
\par
\begin{figure}[t]
\begin{center}
 \includegraphics[scale=0.4]{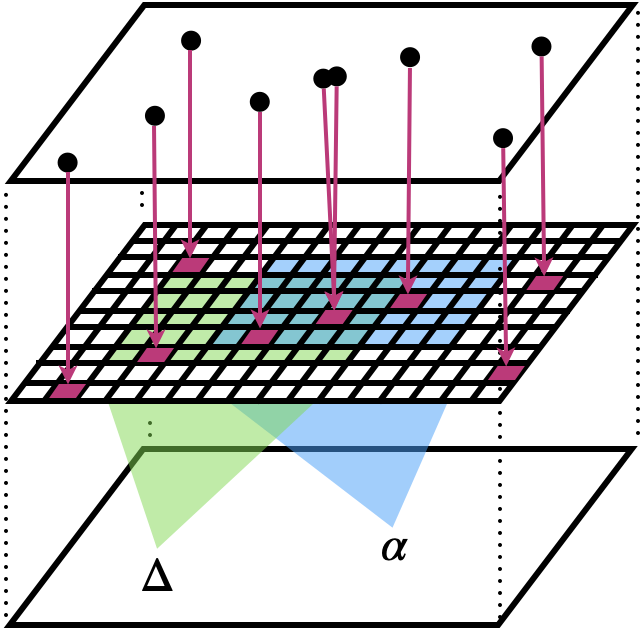}
  \caption{Possible reasoning with the GL $p(\Gamma,M,D;\mu=1)$.}
  \label{fig:possible_reasoning}
  \end{center}
\end{figure}
\section{Beyond Logical Correctness}
\subsection{Paraconsistent Reasoning}
In this section, we look at the GL $p(\Gamma,M;\mu\to 1)$ on the assumption that every model is possible, i.e., $0\notin p(M)$. To explain why $\mu$ approaching one, implemented by the limit operator $\lim_{\mu\to1}$, needs to be introduced to the GL, let us suppose $\alpha,\beta\in\Gamma$ such that $\ms{\beta}=\emptyset$. We then have
\begin{align*}
p(\alpha|\beta)=\frac{\displaystyle{\sum_{m}p(\alpha|m)p(\beta|m)p(m)}}{\displaystyle{\sum_{m}p(\beta|m)p(m)}}=\frac{\displaystyle{\sum_{m}p(\alpha|m)(1-\mu)p(m)}}{\displaystyle{\sum_{m}(1-\mu)p(m)}}.
\end{align*}
As discussed before, we must assume $\mu=1$ if we conform to the interpretation of formal logic. However, as seen in the previous sections, this causes a probability undefined due to division by zero. Given $\mu\neq 1$, however, we have
\begin{align*}
p(\alpha|\beta)=\frac{\displaystyle{\sum_{m}p(\alpha|m)(1-\mu)p(m)}}{\displaystyle{\sum_{m}(1-\mu)p(m)}}=\frac{\displaystyle{\sum_{m}p(\alpha|m)p(m)}}{\displaystyle{\sum_{m}p(m)}}=p(\alpha).
\end{align*}
Amongst $\mu\neq 1$, $\mu\to1$ is the only choice in terms of formal logic because, as shown below, only $\mu\to1$ and $\mu=1$ result in the same $p(\alpha)$.
\begin{align*}
p(\alpha)=\sum_{m}p(\alpha|m)p(m)=\sum_{m}\mu^{\ms{\alpha}_{m}}(1-\mu)^{1-\ms{\alpha}_{m}}p(m).
\end{align*}
\begin{example}
Let us see how limits work in practice. Consider the three conditional probabilities given different inconsistent premises shown on the right in Figure \ref{fig:ex_limit}. Given the probability distribution over models built with two symbols $r$ (meaning `rain') and $w$ (`wet') shown on the left, the conditional probability shown on the top right is expanded as follows.
\begin{figure}[t]
\begin{tabular}{cc}
 \begin{minipage}{0.35\hsize}
\begin{center}
{\small
\begin{tabular}{c|cc|c}
 & $rain$ & $wet$ & $p(M)$\\\hline
$m_{1}$ & 0 & 0 & 0.4\\
$m_{2}$ & 0 & 1 & 0.2\\
$m_{3}$ & 1 & 0 & 0.1\\
$m_{4}$ & 1 & 1 & 0.3
\end{tabular}
}
\end{center}
 \end{minipage}
 \begin{minipage}{0.64\hsize}
 \begin{center}
\includegraphics[scale=0.35]{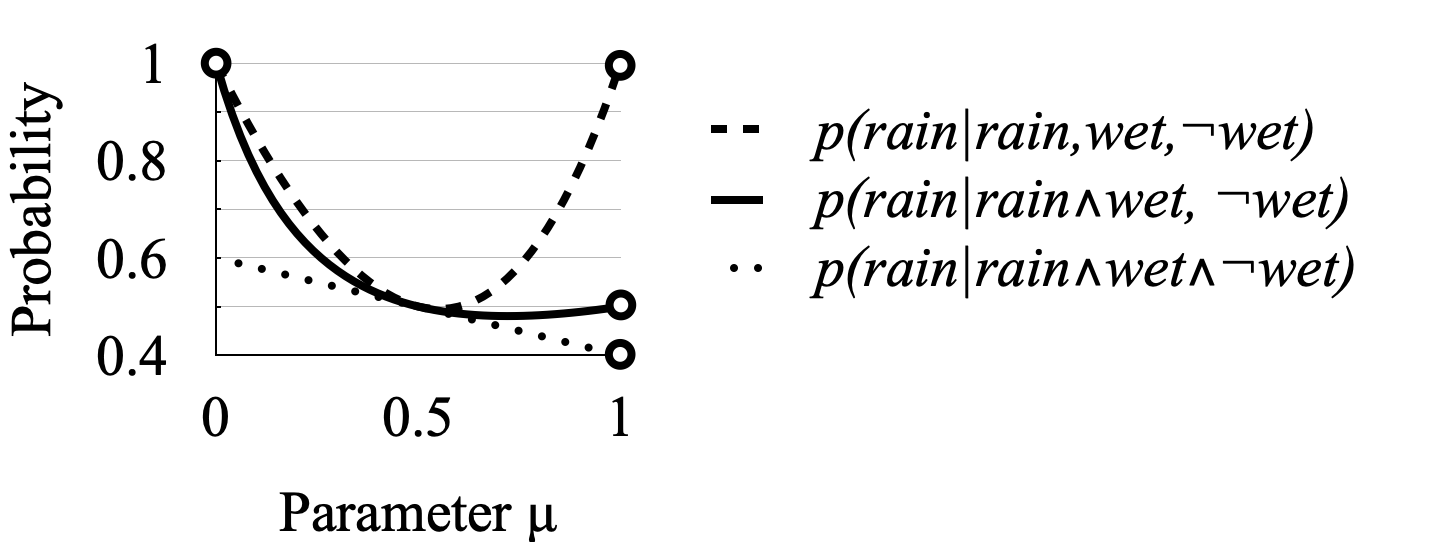}
 \end{center}
 \end{minipage}
\end{tabular}
\caption{The table on the left shows a model distribution and the graph on the right shows three examples of reasoning from inconsistency.}
\label{fig:ex_limit}
\end{figure}
\begin{align*}
p(r|r,w,\lnot w)&=\frac{\sum_{m}p(r|m)^{2}p(w|m)p(\lnot w|m)p(m)}{\sum_{m}p(r|m)p(w|m)p(\lnot w|m)p(m)}\\
&=\frac{(p(m_{1})+p(m_{2}))\mu(1-\mu)^{3}+(p(m_{3})+p(m_{4}))\mu^{3}(1-\mu)}{(p(m_{1})+p(m_{2}))\mu(1-\mu)^{2}+(p(m_{3})+p(m_{4})\mu^{2}(1-\mu)}\\
&=\frac{0.6\mu(1-\mu)^{3}+0.4\mu^{3}(1-\mu)}{0.6\mu(1-\mu)^{2}+0.4\mu^{2}(1-\mu)}
\end{align*}
The graph shown on the right in Figure \ref{fig:ex_limit} shows $p(rain|rain,wet,\lnot wet)$ given different $\mu$ values. The graph also includes the other two conditional probabilities calculated in the same manner. Each of the open circles represents an undefined value. This means that no substitution gives a probability, even though the curve approaches a certain probability. The certain probability can only be obtained by the use of limits. Indeed, given $\mu\to1$, the three conditional probabilities turn out to be 1, 0.5 and 0.4, respectively.
\end{example}
\par
We use maximal consistent sets to characterise the GL.
\begin{definition}[Maximal consistent subsets]
Let $S,\Delta\subseteq L$. $S\subseteq\Delta$ is a maximal consistent subset of $\Delta$ if $\ms{S}\neq\emptyset$ and $\ms{S\cup\{\alpha\}}=\emptyset$, for all $\alpha\in\Delta\setminus S$.
\end{definition}
We refer to a maximal consistent subset as a cardinality-maximal consistent subset when the set has the maximum cardinality. We use symbol $MCS(\Delta)$ to denote the set of the cardinality-maximal consistent subsets of $\Delta\subseteq L$. We use symbol $\ams{\Delta}$ to denote the set of the models of the cardinality-maximal consistent subsets of $\Delta$. In short, $\ams{\Delta}=\bigcup_{S\in MCS(\Delta)}\ms{S}$. Obviously, $\ams{\Delta}=\ms{\Delta}$ if there is a model of $\Delta$, i.e., $\ms{\Delta}\neq\emptyset$.
\begin{example}[Cardinality-maximal consistent sets]\label{ex:MCS}
Let $\Delta=\{$ $rain$, $wet$, $rain\to wet$, $\lnot wet\}$. There exist the following three maximal consistent subsets of $\Delta$.
\begin{itemize}
\item $S_{1}=\{rain,wet,rain\to wet\}$
\item $S_{2}=\{rain,\lnot wet\}$
\item $S_{3}=\{rain\to wet,\lnot wet\}$
\end{itemize}
Only $S_{1}$ is the cardinality-maximal consistent subset of $\Delta$, i.e., $MCS(\Delta)=\{S_{1}\}$. Therefore, $\ams{\Delta}=\bigcup_{S\in MCS(\Delta)}\ms{S}=\ms{S_{1}}=\{m_{4}\}$.
\end{example}
The following theorem relates the probability of a formula to the probability of the models of cardinality-maximal consistent sets.
\begin{theorem}\label{thrm:paraconsistent}
Let $p(\Gamma,M;\mu\to1)$ be a generative logic such that $0\notin p(M)$, $\alpha\in\Gamma$ and $\Delta\subseteq\Gamma$.
\begin{align*}
p(\alpha|\Delta)=
\begin{cases}
\displaystyle{\frac{\sum_{m\in\ams{\Delta}\cap\ms{\alpha}}p(m)}{\sum_{m\in\ams{\Delta}}p(m)}}&\text{if }\ams{\Delta}\neq\emptyset\\
p(\alpha)&\text{otherwise}
\end{cases}
\end{align*}
\end{theorem}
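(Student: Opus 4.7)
The plan is to follow the template of the proofs of Theorems \ref{thrm:consistent} and \ref{thrm:possible}: expand $p(\alpha|\Delta)$ as a ratio of sums over models, then simplify using the explicit Bernoulli form of $p(\Delta|m)$. The genuinely new ingredient is the limit $\mu\to 1$, which is indispensable here because conditioning on an inconsistent $\Delta$ makes $\sum_{m}p(\Delta|m)p(m)$ vanish exactly at $\mu=1$. Starting from
\begin{align*}
p(\alpha|\Delta)=\frac{\sum_{m}p(\alpha|m)\,p(\Delta|m)\,p(m)}{\sum_{m}p(\Delta|m)\,p(m)},
\end{align*}
I would rewrite $p(\Delta|m)=\prod_{\beta\in\Delta}\mu^{\ms{\beta}_{m}}(1-\mu)^{1-\ms{\beta}_{m}}=\mu^{k_{m}}(1-\mu)^{|\Delta|-k_{m}}$, where $k_{m}:=|\{\beta\in\Delta\mid m\in\ms{\beta}\}|$ counts the formulas of $\Delta$ satisfied by $m$.

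The central combinatorial step is to identify $\ams{\Delta}$ with the argmax of $k_{m}$. Let $k^{*}:=\max_{m}k_{m}$, which is attained since ${\cal M}$ is finite. For any $m$ with $k_{m}=k^{*}$, the set $S_{m}:=\{\beta\in\Delta\mid m\in\ms{\beta}\}$ is consistent (with $m$ as a witness) and cannot be consistently extended inside $\Delta$, since any model of a proper extension would exhibit $k$-value $>k^{*}$. Hence $S_{m}\in MCS(\Delta)$ and $m\in\ams{\Delta}$. Conversely, for any $S\in MCS(\Delta)$ and $m\in\ms{S}$, the inclusion $S\subseteq S_{m}$ yields $|S|\leq k_{m}\leq k^{*}$, while the existence of a $k^{*}$-achieving model together with cardinality-maximality of $S$ forces $|S|=k^{*}$ and $k_{m}=k^{*}$. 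Therefore $\ams{\Delta}=\{m\mid k_{m}=k^{*}\}$.

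Assuming $\ams{\Delta}\neq\emptyset$, I would factor $(1-\mu)^{|\Delta|-k^{*}}$ out of both the numerator and the denominator. Each term from a model with $k_{m}<k^{*}$ retains a strictly positive residual power of $(1-\mu)$ and vanishes as $\mu\to 1$, while terms with $k_{m}=k^{*}$ survive. Using $\lim_{\mu\to 1}\mu^{\ms{\alpha}_{m}+k^{*}}(1-\mu)^{1-\ms{\alpha}_{m}}=\ms{\alpha}_{m}$, the surviving numerator term for $m\in\ams{\Delta}$ reduces to $p(m)$ when $m\in\ms{\alpha}$ and to $0$ otherwise, and the surviving denominator term reduces to $p(m)$. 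Because $0\notin p(M)$, the denominator is strictly positive and the limit gives the stated ratio. The degenerate case $\ams{\Delta}=\emptyset$ (which forces $k_{m}$ to be constant in $m$) is handled by noting that $p(\Delta|m)$ then no longer depends on $m$ and cancels out of both sums, yielding $p(\alpha|\Delta)=\sum_{m}p(\alpha|m)p(m)=p(\alpha)$.

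I expect the main obstacle to be making the combinatorial identification $\ams{\Delta}=\{m\mid k_{m}=k^{*}\}$ rigorous, particularly the implication that a locally maximal $S_{m}$ is in fact cardinality-maximal among all consistent subsets of $\Delta$. Once that bijection is in hand, the dominant-balance argument as $\mu\to 1$ is routine bookkeeping with Bernoulli factors, entirely analogous to the limit-free simplifications carried out in the proofs of Theorems \ref{thrm:consistent} and \ref{thrm:possible}.
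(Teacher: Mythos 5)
Your proposal is correct and follows essentially the same route as the paper's proof: expand $p(\alpha|\Delta)$ as a ratio of sums over models, write $p(\Delta|m)=\mu^{|\Delta|_{m}}(1-\mu)^{|\Delta|-|\Delta|_{m}}$, factor out the smallest power of $(1-\mu)$, and let the terms from non-maximising models vanish as $\mu\to 1$, with the same degenerate case when $\ams{\Delta}=\emptyset$. The one point where you go beyond the paper is that you actually prove the combinatorial identification $\ams{\Delta}=\{m\mid |\Delta|_{m}=\max_{m'}|\Delta|_{m'}\}$ (both inclusions), whereas the paper only asserts that $|\Delta|_{\hat{m}}$ is constant on $\ams{\Delta}$ and implicitly uses that it is strictly larger than $|\Delta|_{m}$ for $m\notin\ams{\Delta}$; your argument correctly fills that gap.
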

\begin{proof}
We use notation $|\Delta|_{m}$ to denote the number of formulas in $\Delta$ that are true in $m$, i.e., $|\Delta|_{m}=\sum_{\beta\in\Delta}\llbracket\beta\rrbracket_{m}$. Dividing models into $(\!(\Delta)\!)$ and the others, we have 
\begin{align*}
p(\alpha|\Delta)&=\lim_{\mu\rightarrow 1}\frac{\sum_{m}p(\alpha|m)p(m)p(\Delta|m)}{\sum_{m}p(m)p(\Delta|m)}\\
&=\lim_{\mu\rightarrow 1}\frac{\displaystyle{\sum_{\hat{m}\in(\!(\Delta)\!)}p(\alpha|\hat{m})p(\hat{m})p(\Delta|\hat{m})+\sum_{m\notin(\!(\Delta)\!)}p(\alpha|m)p(m)p(\Delta|m)}}{\displaystyle{\sum_{\hat{m}\in(\!(\Delta)\!)}p(\hat{m})p(\Delta|\hat{m})+\sum_{m\notin(\!(\Delta)\!)}p(m)p(\Delta|m)}}.
\end{align*}
Now, $p(\Delta|m)$ can be developed as follows, for all $m$ (regardless of the membership of $(\!(\Delta)\!)$).
\begin{align*}
p(\Delta|m)&=\prod_{\beta\in\Delta}p(\beta|m)=\prod_{\beta\in\Delta}\mu^{\llbracket\beta\rrbracket_{m}}(1-\mu)^{1-\llbracket\beta\rrbracket_{m}}\\
&=\mu^{\sum_{\beta\in\Delta}\llbracket\beta\rrbracket_{m}}(1-\mu)^{\sum_{\beta\in\Delta}(1-\llbracket\beta\rrbracket_{m})}=\mu^{|\Delta|_{m}}(1-\mu)^{|\Delta|-|\Delta|_{m}}
\end{align*}
Therefore, $p(\alpha|\Delta)=\lim_{\mu\rightarrow 1}\frac{W+X}{Y+Z}$ where
\begin{align*}
W&=\sum_{\hat{m}\in(\!(\Delta)\!)}p(\alpha|\hat{m})p(\hat{m})\mu^{|\Delta|_{\hat{m}}}(1-\mu)^{|\Delta|-|\Delta|_{\hat{m}}}\\
X&=\sum_{m\notin(\!(\Delta)\!)}p(\alpha|m)p(m)\mu^{|\Delta|_{m}}(1-\mu)^{|\Delta|-|\Delta|_{m}}\\
Y&=\sum_{\hat{m}\in(\!(\Delta)\!)}p(\hat{m})\mu^{|\Delta|_{\hat{m}}}(1-\mu)^{|\Delta|-|\Delta|_{\hat{m}}}\\
Z&=\sum_{m\notin(\!(\Delta)\!)}p(m)\mu^{|\Delta|_{m}}(1-\mu)^{|\Delta|-|\Delta|_{m}}.
\end{align*}
(Case: $\ams{\Delta}\neq\emptyset$) Since $\hat{m}\in \ams{\Delta}$ is a model of a cardinality-maximal consistent subset of $\Delta$, $|\Delta|_{\hat{m}}$ has the same value, for all $\hat{m}\in(\!(\Delta)\!)$. Therefore, the fraction can be simplified by dividing the denominator and numerator by $(1-\mu)^{|\Delta|-|\Delta|_{\hat{m}}}$. We thus have $p(\alpha|\Delta)=\lim_{\mu\rightarrow 1}\frac{W'+X'}{Y'+Z'}$ where
\begin{align*}
W'&=\sum_{\hat{m}\in(\!(\Delta)\!)}p(\alpha|\hat{m})p(\hat{m})\mu^{|\Delta|_{\hat{m}}}\\
X'&=\sum_{m\notin(\!(\Delta)\!)}p(\alpha|m)p(m)\mu^{|\Delta|_{m}}(1-\mu)^{|\Delta|_{\hat{m}}-|\Delta|_{m}}\\
Y'&=\sum_{\hat{m}\in(\!(\Delta)\!)}p(\hat{m})\mu^{|\Delta|_{\hat{m}}}\\
Z'&=\sum_{m\notin(\!(\Delta)\!)}p(m)\mu^{|\Delta|_{m}}(1-\mu)^{|\Delta|_{\hat{m}}-|\Delta|_{m}}.
\end{align*}
Applying the limit operation, we can cancel out $X'$ and $Z'$ and have
\begin{align*}
p(\alpha|\Delta)=\frac{\displaystyle{\sum_{\hat{m}\in(\!(\Delta)\!)}p(\alpha|\hat{m})p(\hat{m})}}{\displaystyle{\sum_{\hat{m}\in(\!(\Delta)\!)}p(\hat{m})}}=\frac{\displaystyle{\sum_{\hat{m}\in(\!(\Delta)\!)}1^{\llbracket\alpha\rrbracket_{\hat{m}}}0^{1-\llbracket\alpha\rrbracket_{\hat{m}}}p(\hat{m})}}{\displaystyle{\sum_{\hat{m}\in(\!(\Delta)\!)}p(\hat{m})}}.
\end{align*}
Since $1^{\llbracket\alpha\rrbracket_{\hat{m}}}0^{1-\llbracket\alpha\rrbracket_{\hat{m}}}=1^{1}0^{0}=1$ if $\hat{m}\in\llbracket\alpha\rrbracket$ and $1^{\llbracket\alpha\rrbracket_{\hat{m}}}0^{1-\llbracket\alpha\rrbracket_{\hat{m}}}=1^{0}0^{1}=0$ if $\hat{m}\notin\llbracket\alpha\rrbracket$, we have
\begin{align}\label{AIJ23:eq:paraconsistency}
p(\alpha|\Delta)=\frac{\sum_{\hat{m}\in(\!(\Delta)\!)\cap\llbracket\alpha\rrbracket}p(\hat{m})}{\sum_{\hat{m}\in(\!(\Delta)\!)}p(\hat{m})}.
\end{align}
(Case: $\ams{\Delta}=\emptyset$) Since $W=Y=0$, the fraction can be simplified as follows.
\begin{align*}
p(\alpha|\Delta)=\lim_{\mu\to 1}\frac{\displaystyle{\sum_{m}p(\alpha|m)p(m)\mu^{0}(1-\mu)^{|\Delta|}}}{\displaystyle{\sum_{m}p(m)\mu^{0}(1-\mu)^{|\Delta|}}}=\lim_{\mu\to 1}\displaystyle{\sum_{m}p(\alpha|m)p(m)}=p(\alpha).
\end{align*}
\end{proof}
\par
The following Corollary states that uncertain reasoning with the GL over symbols from inconsistency is a refined generalisation of the classical consequence relation.
\begin{corollary}\label{cor:paraconsistent_reasoning}
Let $p(\Gamma,M;\mu\to1)$ be a generative logic such that $0\notin p(M)$, $\alpha\in\Gamma$ and $\Delta\subseteq\Gamma$ such that $\ams{\Delta}\neq\emptyset$, $p(\alpha|\Delta)=1$ iff $S\cent\alpha$, for all cardinality-maximal consistent subsets $S$ of $\Delta$.
\end{corollary}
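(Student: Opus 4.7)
The plan is to obtain the statement as a direct consequence of Theorem \ref{thrm:paraconsistent} combined with the set-theoretic definition $\ams{\Delta} = \bigcup_{S \in MCS(\Delta)} \ms{S}$. Since the analytic heavy lifting, including the cancellation of the $(1-\mu)$ factors in the limit, has already been carried out in Theorem \ref{thrm:paraconsistent}, the only remaining work is to translate a probabilistic identity into a purely model-theoretic containment and then expand that containment across the cardinality-maximal consistent subsets.

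First I would apply Theorem \ref{thrm:paraconsistent} under the hypothesis $\ams{\Delta}\neq\emptyset$ to rewrite the conditional probability as
\begin{eqnarray*}
p(\alpha|\Delta) = \frac{\sum_{m\in\ams{\Delta}\cap\ms{\alpha}} p(m)}{\sum_{m\in\ams{\Delta}} p(m)}.
\end{eqnarray*}
The hypothesis $0\notin p(M)$ makes every summand in both the numerator and denominator strictly positive, and $\ams{\Delta}\neq\emptyset$ guarantees a nonzero denominator. A strictly positive, finite sum equals its strictly positive, finite superset-sum iff the two index sets coincide, so $p(\alpha|\Delta)=1$ iff $\ams{\Delta}\cap\ms{\alpha}=\ams{\Delta}$, i.e., $\ams{\Delta}\subseteq\ms{\alpha}$.

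Next I would unfold $\ams{\Delta}$ via its definition: $\ams{\Delta}=\bigcup_{S\in MCS(\Delta)}\ms{S}$. A union of sets is contained in $\ms{\alpha}$ iff every set in the union is, so $\ams{\Delta}\subseteq\ms{\alpha}$ iff $\ms{S}\subseteq\ms{\alpha}$ for all $S\in MCS(\Delta)$. By the definition of the classical consequence relation, this is precisely the condition that $S\cent\alpha$ for every cardinality-maximal consistent subset $S$ of $\Delta$, completing both directions.

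The main subtlety to flag, rather than a genuine obstacle, is the role of the hypothesis $0\notin p(M)$ in the first step: without it, a model in $\ams{\Delta}\setminus\ms{\alpha}$ with $p(m)=0$ could silently disappear from the denominator, so the probability could equal one even when the set-theoretic containment fails, and the equivalence would only go in one direction. With $0\notin p(M)$ the two sides are genuinely equivalent, and the rest is bookkeeping with unions.
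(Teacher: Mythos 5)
Your proposal is correct and follows essentially the same route as the paper's own proof: invoke Theorem \ref{thrm:paraconsistent} (Equation (\ref{AIJ23:eq:paraconsistency})) to reduce $p(\alpha|\Delta)=1$ to the containment $\ams{\Delta}\subseteq\ms{\alpha}$, then unfold $\ams{\Delta}=\bigcup_{S\in MCS(\Delta)}\ms{S}$ to distribute the containment over the cardinality-maximal consistent subsets. Your explicit remark that $0\notin p(M)$ is what makes the step from the probabilistic identity to the set containment a genuine equivalence is a point the paper leaves implicit, and is a welcome clarification rather than a deviation.
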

\begin{proof}
From Equation (\ref{AIJ23:eq:paraconsistency}), $p(\alpha|\Delta)=1$ holds iff $\ms{\alpha}\supseteq\ams{\Delta}$. By definition, $\ams{\Delta}=\bigcup_{S\in MCS(\Delta)}\ms{S}$ where $MCS(\Delta)$ is the set of the cardinality-maximal consistent subsets of $\Delta$. Therefore, $p(\alpha|\Delta)=1$ iff $\ms{\alpha}\supseteq\bigcup_{S\in MCS(\Delta)}\ms{S}$. In other words, $\ms{\alpha}\supseteq\ms{S}$, for all cardinality-maximal consistent subsets $S$ of $\Delta$, i.e., $S\models\alpha$.
\end{proof}
\par
Next, we examine the abstract inferential properties of the GL to establish that it acts as paraconsistent reasoning. Let $\alpha,\beta\in L$, $\Delta\subseteq L$ and $\vdash$ be a consequence relation over $L$, i.e., $\vdash\subseteq Pow(L)\times L$, where $Pow(L)$ denotes the power set of the language $L$. Logic $(L,\vdash)$ is said to be non-contradictory, non-trivial and explosive if it satisfies the following respective principles.
\begin{itemize}
\item Non-contradiction: $\exists\Delta\forall\alpha(\Delta\not\vdash\alpha$ or $\Delta\not\vdash\lnot\alpha)$
\item Non-triviality: $\exists\Delta\exists\alpha(\Delta\not\vdash\alpha)$
\item Explosion: $\forall\Delta\forall\alpha\forall\beta(\Delta,\alpha,\lnot\alpha\vdash\beta)$
\end{itemize}
A logic is paraconsistent iff it is not explosive and is sometimes called dialectical if it is contradictory \cite{Carnielli:07}. Now, let $\vsim_{\theta}$ be a consequence relation, referred to as a generative consequence, defined with a GL such that $\Delta\vsim_{\theta}\alpha$ iff $p(\alpha|\Delta)\geq\theta$, where $\theta$ is strictly larger than 0.5, i.e., $\theta\in(0.5,1]$. The following theorem states that the generative consequence defined with the GL with $\mu\to 1$ is paraconsistent but not dialectical, regardless of the presence of the assumption $0\notin p(M)$.
\begin{theorem}\label{thrm:6}
Let $p(\Gamma,M;\mu\to1)$ be a generative logic, $\theta\in(0.5,1]$ and $\vsim_{\theta}$ be a generative consequence defined with the generative logic. $\vsim_{\theta}$ satisfies the principles of non-contradiction and non-triviality but does not satisfy the principle of explosion.
\end{theorem}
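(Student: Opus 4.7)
The plan is to verify each of the three principles in turn, each by a simple probability calculation rather than any new machinery.

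For non-contradiction, I would take $\Delta=\emptyset$ and observe that by Proposition~\ref{kolmogorov} together with Proposition~\ref{negation} one has $p(\alpha)+p(\lnot\alpha)=1$ for every $\alpha\in\Gamma$. Since $\theta>0.5$, at most one of $p(\alpha)$ and $p(\lnot\alpha)$ can reach $\theta$, so either $\emptyset\not\vsim_{\theta}\alpha$ or $\emptyset\not\vsim_{\theta}\lnot\alpha$; this supplies the single $\Delta$ required by the existential in the non-contradiction principle.

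For non-triviality, I would again take $\Delta=\emptyset$ and let $\alpha$ be any syntactic contradiction, e.g.\ $\gamma\wedge\lnot\gamma$. Since $\m{\alpha}=\emptyset$, Equation~(\ref{eq:data-driven}) yields $p(\alpha)=0<\theta$, so $\emptyset\not\vsim_{\theta}\alpha$, witnessing non-triviality.

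For the failure of explosion, I plan to exhibit an explicit counterexample. Pick $\Delta=\emptyset$, choose any atomic $\alpha\in\Gamma$, and let $\beta$ be a syntactic contradiction so that $\m{\beta}=\emptyset$. The key observation is that for every model $m$ exactly one of $\alpha,\lnot\alpha$ is true, so $p(\alpha\mid m)\,p(\lnot\alpha\mid m)=\mu(1-\mu)$ independently of $m$, while $p(\beta\mid m)=1-\mu$ for every $m$. These constant factors can be pulled out of the numerator and denominator of the Bayes ratio and cancelled, yielding $p(\beta\mid\alpha,\lnot\alpha)=\lim_{\mu\to1}(1-\mu)=0<\theta$, hence $\alpha,\lnot\alpha\not\vsim_{\theta}\beta$ and explosion fails. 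Under the auxiliary assumption $0\notin p(M)$ the same conclusion is also available through Theorem~\ref{thrm:paraconsistent}: the cardinality-maximal consistent subsets of $\{\alpha,\lnot\alpha\}$ are $\{\alpha\}$ and $\{\lnot\alpha\}$, so $\ams{\{\alpha,\lnot\alpha\}}=\mathcal{M}$ and consequently $p(\beta\mid\alpha,\lnot\alpha)=p(\beta)=0$.

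The main obstacle I anticipate is bookkeeping rather than depth: specifically, making the non-explosion argument run uniformly whether or not $0\notin p(M)$ (the theorem explicitly waives this assumption), and justifying the factorisation and cancellation of the terms $\mu(1-\mu)$ and $1-\mu$ from the conditional expression \emph{before} taking the limit $\mu\to1$, so that no indeterminate $0/0$ appears. Once those are dispatched, all three items follow immediately.
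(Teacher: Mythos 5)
Your proposal is correct and follows essentially the same route as the paper's proof: $\Delta=\emptyset$ with $p(\alpha)+p(\lnot\alpha)=1$ for non-contradiction and non-triviality, and the cancellation of the constant factor $p(\alpha\mid m)p(\lnot\alpha\mid m)=\mu(1-\mu)$ before taking $\mu\to1$ to defeat explosion. Your only departure is to supply explicit witnesses (a syntactic contradiction with $p(\beta)=0$) where the paper merely posits ``let $p(\beta)<0.5$'', which slightly tightens the argument since it discharges the existence assumption the paper leaves implicit.
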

\begin{proof}
(Non-contradiction) This is equivalent to $\exists\Delta\nexists\alpha(\Delta\vdash\alpha$ and $\Delta\vdash\lnot\alpha)$. Given $\Delta=\emptyset$, we show $\nexists\alpha(\vsim_{\theta}\alpha$ and $\vsim_{\theta}\lnot\alpha)$, for all $\theta\in(0.5,1]$. From definition, we can show there is no $\alpha$ such that $p(\alpha)\geq\theta$ and $p(\lnot\alpha)\geq\theta$. We have
\begin{align*}
p(\alpha)&=\sum_{m}\lim_{\mu\rightarrow 1}p(\alpha|m)p(m)=\sum_{m}\lim_{\mu\rightarrow 1}\mu^{\llbracket\alpha\rrbracket_{m}}(1-\mu)^{1-\llbracket\alpha\rrbracket_{m}}p(m)\\
&=\sum_{m}1^{\llbracket\alpha\rrbracket_{m}}0^{1-\llbracket\alpha\rrbracket_{m}}p(m)=\sum_{m\in\llbracket\alpha\rrbracket}p(m).
\end{align*}
In the second line, we used $1^{\llbracket\alpha\rrbracket_{m}}0^{1-\llbracket\alpha\rrbracket_{m}}=1^{1}0^{0}=1$ if $m\in\llbracket\alpha\rrbracket$ and $1^{\llbracket\alpha\rrbracket_{m}}0^{1-\llbracket\alpha\rrbracket_{m}}=1^{0}0^{1}=0$ if $m\notin\llbracket\alpha\rrbracket$. Similarly, we have
\begin{align*}
p(\lnot\alpha)&=\sum_{m}\lim_{\mu\rightarrow 1}p(\lnot\alpha|m)p(m)=\sum_{m}\lim_{\mu\rightarrow 1}\mu^{1-\llbracket\alpha\rrbracket_{m}}(1-\mu)^{\llbracket\alpha\rrbracket_{m}}p(m)\\
&=\sum_{m}1^{1-\llbracket\alpha\rrbracket_{m}}0^{\llbracket\alpha\rrbracket_{m}}p(m)=\sum_{m\notin\llbracket\alpha\rrbracket}p(m).
\end{align*}
In the second line, we used $1^{1-\llbracket\alpha\rrbracket_{m}}0^{\llbracket\alpha\rrbracket_{m}}=1^{1}0^{0}=1$ if $m\notin\llbracket\alpha\rrbracket$ and $1^{1-\llbracket\alpha\rrbracket_{m}}0^{\llbracket\alpha\rrbracket_{m}}=1^{0}0^{1}=0$ if $m\in\llbracket\alpha\rrbracket$. Now, $p(\alpha)+p(\lnot\alpha)=\sum_{m\in\llbracket\alpha\rrbracket} p(m)+\sum_{m\notin\llbracket\alpha\rrbracket} p(m)=\sum_{m}p(m)=1$. Therefore, there is no $\alpha$ such that $p(\alpha)>0.5$ and $p(\lnot\alpha)>0.5$.
\par
(Non-triviality) Let $\Delta=\emptyset$. Obviously, if $p(\alpha)\leq 0.5$ then $\not\vsim_{\theta}\alpha$, for all $\theta\in(0.5,1]$.
\par
(Explosion) Let $p(\beta)<0.5$. The following equation shows $p(\beta|\alpha,\lnot\alpha)<0.5$.
\begin{align*}
p(\beta|\alpha,\lnot\alpha)&=\lim_{\mu\rightarrow 1}\frac{\sum_{m}p(\alpha|m)p(\lnot\alpha|m)p(\beta|m)p(m)}{\sum_{m}p(\alpha|m)p(\lnot\alpha|m)p(m)}\\
&=\lim_{\mu\rightarrow 1}\frac{\mu(1-\mu)\sum_{m}p(\beta|v)p(m)}{\mu(1-\mu)\sum_{m}p(m)}=p(\beta)
\end{align*}
Namely, $\exists\alpha\exists\beta(\alpha,\lnot\alpha\not\vsim_{\theta}\beta)$, for all $\theta\in(0.5,1]$. This is a counter example.
\end{proof}
One cannot discuss whether the GL with $\mu=1$ acts as paraconsistent reasoning, or not. Indeed, the principle of explosion cannot be discussed because $\Delta,\alpha,\lnot\alpha\vsim_{\theta}\beta$ is undefined due to division by zero.
\par
Figure \ref{fig:paraconsistent_reasoning} illustrates paraconsistent reasoning with the GL $p(\Gamma,M,D;\mu\to1)$. It shows that only formulas constrain models. In contrast to consistent reasoning shown in Figure \ref{fig:consistent_reasoning}, however, only cardinality-maximal consistent sets constrain models as there is no model where all formulas in $\Delta$ are true.
\par
\begin{figure}[t]
\begin{center}
 \includegraphics[scale=0.4]{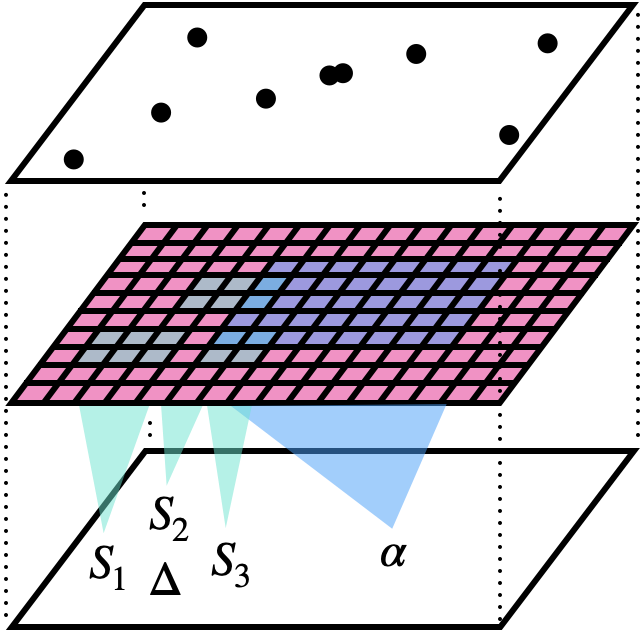}
  \caption{Paraconsistent reasoning with the GL $p(\Gamma,M,D;\mu\to1)$. $S_{1}$, $S_{2}$ and $S_{3}$ are the cardinality-maximal consistent subsets of $\Delta$.}
  \label{fig:paraconsistent_reasoning}
  \end{center}
\end{figure}
\subsection{Parapossible Reasoning}
In the previous section, we looked at the GL $p(\Gamma,M;\mu\to1)$ and characterised its certain reasoning as the classical entailment with cardinality-maximal consistent sets. The assumption we made for the characterisation is $0\notin p(M)$ meaning that every model has a non-zero probability. In this section, we look at the same GL without the assumption. This section thus aims to fully generalise the discussion of the previous section.
\par
We introduce maximal possible sets for a logical characterisation of the GL.
\begin{definition}[Maximal possible sets]
Let $S,\Delta\subseteq L$. $S\subseteq\Delta$ is a maximal possible subset of $\Delta$ if $\pms{S}\neq\emptyset$ and $\pms{S\cup\{\alpha\}}=\emptyset$, for all $\alpha\in\Delta\setminus S$.
\end{definition}
We refer to a maximal possible subset as a cardinality-maximal possible subset when the set has the maximum cardinality. We use symbol $MPS(\Delta)$ to denote the set of the cardinality-maximal possible subsets of $\Delta\subseteq L$. We use symbol $\pams{\Delta}$ to denote the set of possible models of the cardinality-maximal possible subsets of $\Delta$. In short, $\pams{\Delta}=\bigcup_{S\in MPS(\Delta)}\pms{S}$. Obviously, $\pams{\Delta}=\pms{\Delta}$ if there is a possible model of $\Delta$, i.e., $\pms{\Delta}\neq\emptyset$.
\begin{example}[Cardinality-maximal possible sets]\label{ex:MPS}
Suppose the probability distribution $p(M)=(m_{1},m_{2},m_{3},m_{4})=(0.9,0.1,0,0)$ in Figure \ref{fig:ex_limit}. Consider $\Delta=\{$ $rain$, $wet$, $rain\to wet,\lnot wet\}$. There are the following two maximal possible subsets of $\Delta$.
\begin{itemize}
\item $S_{1}=\{wet,rain\to wet\}$
\item $S_{2}=\{rain\to wet,\lnot wet\}$
\end{itemize}
Both $S_{1}$ and $S_{2}$ are the cardinality-maximal possible subsets of $\Delta$, i.e., $MPS(\Delta)=\{S_{1},S_{2}\}$. Only $m_{2}$ is the possible model of $S_{1}$ and $m_{1}$ is the possible model of $S_{2}$. Namely, $\pms{S_{1}}=\{m_{2}\}$ and $\pms{S_{2}}=\{m_{1}\}$. Therefore, $\pams{\Delta}=\bigcup_{S\in MPS(\Delta)}\pms{S}=\{m_{1},m_{2}\}$.
\end{example}
The following theorem relates the probability of a formula to the probability of its possible models.
\begin{theorem}\label{thrm:parapossible}
Let $p(\Gamma,M;\mu\to1)$ be a generative logic, $\alpha\in\Gamma$ and $\Delta\subseteq\Gamma$.
\begin{eqnarray*}
p(\alpha|\Delta)=
\begin{cases}
\displaystyle{\frac{\sum_{m\in\pams{\Delta}\cap\pms{\alpha}}p(m)}{\sum_{m\in\pams{\Delta}}p(m)}}&\text{if }\pams{\Delta}\neq\emptyset\\
p(\alpha)&\text{otherwise}
\end{cases}
\end{eqnarray*}
\end{theorem}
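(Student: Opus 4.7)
The plan is to merge the two refinements that produced Theorems \ref{thrm:possible} and \ref{thrm:paraconsistent}: drop the assumption $0 \notin p(M)$ as in Theorem \ref{thrm:possible}, while retaining both the limit argument at $\mu \to 1$ and the cardinality-maximality bookkeeping of Theorem \ref{thrm:paraconsistent}. I would start from
\begin{align*}
p(\alpha|\Delta) = \lim_{\mu\to 1}\frac{\sum_m p(\alpha|m)\,p(m)\,p(\Delta|m)}{\sum_m p(m)\,p(\Delta|m)}
\end{align*}
and expand $p(\Delta|m) = \mu^{|\Delta|_m}(1-\mu)^{|\Delta|-|\Delta|_m}$, where $|\Delta|_m = \sum_{\beta\in\Delta}\llbracket\beta\rrbracket_m$, exactly as in the proof of the previous theorem.

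The core step is locating the dominant term in the limit. Let $s$ denote the common cardinality of every cardinality-maximal possible subset of $\Delta$. I would partition the models into three classes: (i) $m \in \pams{\Delta}$; (ii) $m$ with $p(m) \neq 0$ but $m \notin \pams{\Delta}$; and (iii) $m$ with $p(m) = 0$. Class (iii) contributes nothing to either sum. For class (i), since $m \in \pms{S}$ for some cardinality-maximal possible $S$, we have $m \models S$ and therefore $|\Delta|_m \geq s$; if $|\Delta|_m > s$ there would be some $\beta \in \Delta \setminus S$ with $m \in \pms{S \cup \{\beta\}}$, contradicting the maximality of $S$, so $|\Delta|_m = s$ exactly. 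For class (ii), set $T = \{\alpha \in \Delta : m \models \alpha\}$, so that $m \in \pms{T}$; if $|T| \geq s$ held, then either $T$ itself is maximal possible (any $\beta \in \Delta \setminus T$ with $\pms{T \cup \{\beta\}} \neq \emptyset$ would yield a maximal possible subset of cardinality greater than $s$, contradicting the definition of $s$) and hence cardinality-maximal, forcing $m \in \pms{T} \subseteq \pams{\Delta}$, a contradiction. Therefore $|\Delta|_m < s$ on class (ii).

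With this dichotomy, the case $\pams{\Delta} \neq \emptyset$ reduces to straightforward algebra: factor $(1-\mu)^{|\Delta|-s}$ from both the numerator and the denominator; class (i) contributions retain a factor $\mu^s p(m)$ (and $\mu^s p(m)\,p(\alpha|m)$ in the numerator), while class (ii) contributions carry an extra factor $(1-\mu)^{s-|\Delta|_m}$ with strictly positive exponent, which vanishes as $\mu \to 1$. Using $\lim_{\mu\to 1} p(\alpha|m) = \llbracket\alpha\rrbracket_m$ and the fact that $m \in \pams{\Delta}$ is already possible so that $m \in \ms{\alpha}$ iff $m \in \pms{\alpha}$, the limit yields $\sum_{m \in \pams{\Delta}\cap\pms{\alpha}} p(m) \,/\, \sum_{m \in \pams{\Delta}} p(m)$. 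The boundary case $\pams{\Delta} = \emptyset$ would be handled exactly as in the corresponding branch of the proof of Theorem \ref{thrm:paraconsistent}: no class (i) terms exist, the common factor $(1-\mu)^{|\Delta|}$ cancels, and the quotient collapses to $\sum_m p(\alpha|m)p(m) = p(\alpha)$.

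The main obstacle is the combinatorial claim for class (ii) that $|\Delta|_m < s$ whenever $p(m) \neq 0$ and $m \notin \pams{\Delta}$. This is the only place where the argument genuinely departs from that of Theorem \ref{thrm:paraconsistent}: in the paraconsistent setting every model contributes to the sums, so the ordering of the exponents of $(1-\mu)$ is controlled purely by syntactic consistency, whereas here any candidate extension $T \cup \{\beta\}$ of $T$ must additionally admit a \emph{possible} model to count as possible. Once this strict ordering of exponents is secured, the limit separation proceeds exactly as in Theorem \ref{thrm:paraconsistent}.
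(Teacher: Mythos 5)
Your proposal follows essentially the same route as the paper's proof: the same expansion of $p(\Delta|m)$ as $\mu^{|\Delta|_m}(1-\mu)^{|\Delta|-|\Delta|_m}$, the same split of the models around $\pams{\Delta}$, and the same cancellation of the subdominant powers of $(1-\mu)$ in the limit $\mu\to 1$, with the $\pams{\Delta}=\emptyset$ branch handled identically. The only difference is that you spell out in full the combinatorial fact that every possible model outside $\pams{\Delta}$ satisfies strictly fewer formulas of $\Delta$ than the common count $s$ attained on $\pams{\Delta}$, a step the paper asserts in a single line.
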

\begin{proof}
We use symbol $|\Delta|$ to denote the number of formulas in $\Delta$ and symbol $|\Delta|_{m}$ to denote the number of formulas in $\Delta$ that are true in $m$, i.e., $|\Delta|_{m}=\sum_{\beta\in\Delta}\m{\beta}_{m}$. Dividing models into $\pams{\Delta}$ and the others, we have
\begin{align*}
p(\alpha|\Delta)&=\lim_{\mu\rightarrow 1}\frac{\sum_{m}p(\alpha|m)p(m)p(\Delta|m)}{\sum_{m}p(m)p(\Delta|m)}\\
&=\lim_{\mu\rightarrow 1}\frac{\displaystyle{\sum_{\hat{m}\in\pams{\Delta}}p(\alpha|\hat{m})p(\hat{m})p(\Delta|\hat{m})+\sum_{m\notin\pams{\Delta}}p(\alpha|m)p(m)p(\Delta|m)}}{\displaystyle{\sum_{\hat{m}\in\pams{\Delta}}p(\hat{m})p(\Delta|\hat{m})+\sum_{m\notin\pams{\Delta}}p(m)p(\Delta|m)}}.
\end{align*}
Now, $p(\Delta|m)$ can be developed as follows, for all $m$.
\begin{align*}
p(\Delta|m)&=\prod_{\beta\in\Delta}p(\beta|m)=\prod_{\beta\in\Delta}\mu^{\m{\beta}_{m}}(1-\mu)^{1-\m{\beta}_{m}}\\
&=\mu^{\sum_{\beta\in\Delta}\m{\beta}_{m}}(1-\mu)^{\sum_{\beta\in\Delta}(1-\m{\beta}_{m})}=\mu^{|\Delta|_{m}}(1-\mu)^{|\Delta|-|\Delta|_{m}}
\end{align*}
Therefore, $p(\alpha|\Delta)=\lim_{\mu\rightarrow 1}\frac{W+X}{Y+Z}$ where
\begin{align*}
W&=\sum_{\hat{m}\in\pams{\Delta}}p(\alpha|\hat{m})p(\hat{m})\mu^{|\Delta|_{\hat{m}}}(1-\mu)^{|\Delta|-|\Delta|_{\hat{m}}}\\
X&=\sum_{m\notin\pams{\Delta}}p(\alpha|m)p(m)\mu^{|\Delta|_{m}}(1-\mu)^{|\Delta|-|\Delta|_{m}}\\
Y&=\sum_{\hat{m}\in\pams{\Delta}}p(\hat{m})\mu^{|\Delta|_{\hat{m}}}(1-\mu)^{|\Delta|-|\Delta|_{\hat{m}}}\\
Z&=\sum_{m\notin\pams{\Delta}}p(m)\mu^{|\Delta|_{m}}(1-\mu)^{|\Delta|-|\Delta|_{m}}.
\end{align*}
(Case: $\pams{\Delta}\neq\emptyset$) Now, for all $m$, if $m\notin\pams{\Delta}$ then $m$ is impossible or $m$ is a possible model of a subset of $\Delta$ that is not a cardinality-maximal possible subset of $\Delta$. Therefore, $p(m)=0$ or there is $\hat{m}\in\pams{\Delta}$ such that $|\Delta|_{m}<|\Delta|_{\hat{m}}$. $|\Delta|_{\hat{m}_{1}}=|\Delta|_{\hat{m}_{2}}$ by definition, for all $\hat{m}_{1},\hat{m}_{2}\in\pams{\Delta}$. The fraction thus can be simplified by dividing the denominator and numerator by $(1-\mu)^{|\Delta|-|\Delta|_{\hat{m}}}$. We thus have $p(\alpha|\Delta)=\lim_{\mu\rightarrow 1}\frac{W'+X'}{Y'+Z'}$ where
\begin{align*}
W'&=\sum_{\hat{m}\in\pams{\Delta}}p(\alpha|\hat{m})p(\hat{m})\mu^{|\Delta|_{\hat{m}}}\\
X'&=\sum_{m\notin\pams{\Delta}}p(\alpha|m)p(m)\mu^{|\Delta|_{m}}(1-\mu)^{|\Delta|_{\hat{m}}-|\Delta|_{m}}\\
Y'&=\sum_{\hat{m}\in\pams{\Delta}}p(\hat{m})\mu^{|\Delta|_{\hat{m}}}\\
Z'&=\sum_{m\notin\pams{\Delta}}p(m)\mu^{|\Delta|_{m}}(1-\mu)^{|\Delta|_{\hat{m}}-|\Delta|_{m}}.
\end{align*}
Applying the limit operation, we can cancel out $X'$ and $Z'$ and have
\begin{align*}
p(\alpha|\Delta)=\frac{\displaystyle{\sum_{\hat{m}\in\pams{\Delta}}p(\alpha|\hat{m})p(\hat{m})}}{\displaystyle{\sum_{\hat{m}\in\pams{\Delta}}p(\hat{m})}}=\frac{\displaystyle{\sum_{\hat{m}\in\pams{\Delta}}1^{\m{\alpha}_{\hat{m}}}0^{1-\m{\alpha}_{\hat{m}}}p(\hat{m})}}{\displaystyle{\sum_{\hat{m}\in\pams{\Delta}}p(\hat{m})}}.
\end{align*}
Since $1^{\ms{\alpha}_{\hat{m}}}0^{1-\ms{\alpha}_{\hat{m}}}=1^{1}0^{0}=1$ if $\hat{m}\in\ms{\alpha}$ and $1^{\ms{\alpha}_{\hat{m}}}0^{1-\ms{\alpha}_{\hat{m}}}=1^{0}0^{1}=0$ if $\hat{m}\notin\ms{\alpha}$, we have
\begin{align}\label{AIJ23:eq:impossibility}
p(\alpha|\Delta)=\frac{\sum_{\hat{m}\in\pams{\Delta}\cap\m{\alpha}}p(\hat{m})}{\sum_{\hat{m}\in\pams{\Delta}}p(\hat{m})}=\frac{\sum_{\hat{m}\in\pams{\Delta}\cap\pms{\alpha}}p(\hat{m})}{\sum_{\hat{m}\in\pams{\Delta}}p(\hat{m})}.
\end{align}
(Case: $\pams{\Delta}=\emptyset$) Since $W=Y=0$, the fraction can be simplified as follows.
\begin{align*}
p(\alpha|\Delta)=\lim_{\mu\to 1}\frac{\displaystyle{\sum_{m}p(\alpha|m)p(m)\mu^{0}(1-\mu)^{|\Delta|}}}{\displaystyle{\sum_{m}p(m)\mu^{0}(1-\mu)^{|\Delta|}}}=\lim_{\mu\to 1}\displaystyle{\sum_{m}p(\alpha|m)p(m)}=p(\alpha).
\end{align*}
\end{proof}
The following Corollary states that uncertain reasoning with the GL over symbols from impossibility is a refined generalisation of the alternative consequence relation. 
\begin{corollary}\label{cor:parapossible_reasoning}
Let $p(\Gamma,M;\mu\to1)$ be a generative logic, $\alpha\in\Gamma$ and $\Delta\subseteq\Gamma$ such that $\pams{\Delta}\neq\emptyset$. $p(\alpha|\Delta)=1$ iff $S\ent\alpha$, for all cardinality-maximal possible subsets $S$ of $\Delta$.
\end{corollary}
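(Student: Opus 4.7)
The plan is to piggy-back on Theorem \ref{thrm:parapossible}, which already gives the key formula
\[
p(\alpha|\Delta)=\frac{\sum_{m\in\pams{\Delta}\cap\pms{\alpha}}p(m)}{\sum_{m\in\pams{\Delta}}p(m)}
\]
under the hypothesis $\pams{\Delta}\neq\emptyset$. So the corollary is really a bookkeeping statement about when this ratio equals one, translated back into the consequence relation $\ent$.

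First I would argue that the ratio equals $1$ if and only if $\pams{\Delta}\subseteq\pms{\alpha}$. The nontrivial direction uses the crucial fact that every $m\in\pams{\Delta}$ has strictly positive probability: by definition, $\pams{\Delta}=\bigcup_{S\in MPS(\Delta)}\pms{S}$ and $\pms{S}=\{m\in\ms{S}\mid p(m)\neq 0\}$, so every summand in the denominator is positive. Hence the numerator can equal the denominator only when every model contributing to the denominator also lies in $\pms{\alpha}$; the reverse implication is immediate.

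Next I would unfold $\pams{\Delta}$ using its definition. The inclusion $\bigcup_{S\in MPS(\Delta)}\pms{S}\subseteq\pms{\alpha}$ is equivalent, by elementary set theory, to $\pms{S}\subseteq\pms{\alpha}$ for every $S\in MPS(\Delta)$, and by definition of $\ent$ this is exactly the statement that $S\ent\alpha$ for every cardinality-maximal possible subset $S$ of $\Delta$. Chaining these equivalences with the one from Theorem \ref{thrm:parapossible} finishes the argument.

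The only subtle point, and the closest thing to an obstacle, is the first step: the reader must be convinced that the ratio being $1$ genuinely forces $\pams{\Delta}\subseteq\pms{\alpha}$ rather than merely forcing a measure-theoretic equality. This is clean here precisely because $\pams{\Delta}$ is finite and, by the very definition of $\pms{\cdot}$, every model in it carries strictly positive weight; without this observation the equivalence would fail, which also explains why the possible-models machinery is the right vocabulary rather than working with $\ms{\cdot}$ directly.
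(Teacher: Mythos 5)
Your proposal is correct and follows essentially the same route as the paper's own proof: apply Theorem \ref{thrm:parapossible} (Equation (\ref{AIJ23:eq:impossibility})) to reduce the claim to $\pams{\Delta}\subseteq\pms{\alpha}$, then unfold $\pams{\Delta}=\bigcup_{S\in MPS(\Delta)}\pms{S}$ and invoke the definition of $\ent$. Your explicit remark that every model in $\pams{\Delta}$ carries strictly positive weight is a worthwhile clarification that the paper leaves implicit, but it does not change the argument.
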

\begin{proof}
From Equation (\ref{AIJ23:eq:impossibility}), $p(\alpha|\Delta)=1$ iff $\pams{\Delta}\subseteq\pms{\alpha}$. Since $\pams{\Delta}=\bigcup_{S\in MPS(\Delta)}\pms{S}$, $p(\alpha|\Delta)=1$ iff $\bigcup_{S\in MPS(\Delta)}\pms{S}\subseteq\pms{\alpha}$. Therefore, $\pms{S}\subseteq\pms{\alpha}$, for all $S\in MPS(\Delta)$.
\end{proof}
\par
Figure \ref{fig:parapossible_reasoning} illustrates parapossible reasoning with the GL $p(\Gamma,M,D;\mu\to1)$. It shows that models are not only constrained by formulas but also restricted by data. In contrast to paraconsistent reasoning shown in Figure \ref{fig:paraconsistent_reasoning}, however, only cardinality-maximal possible sets constrain models as there is no possible model where all formulas in $\Delta$ are true.
\par
\begin{figure}[t]
\begin{center}
 \includegraphics[scale=0.4]{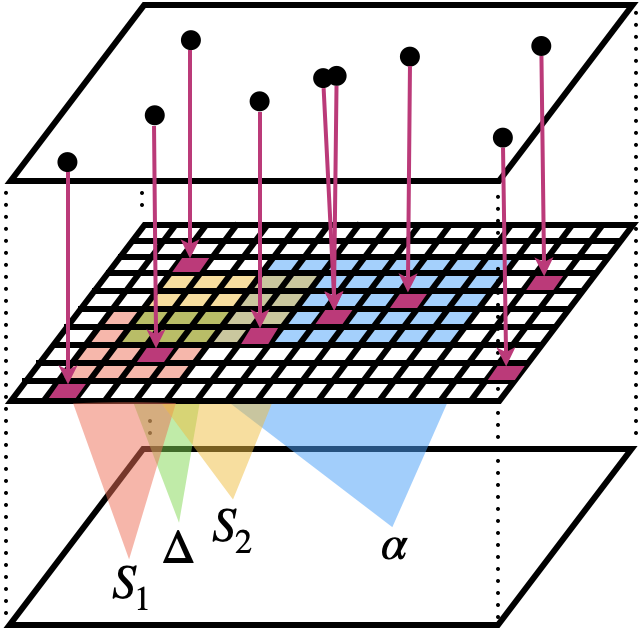}
  \caption{Parapossible reasoning with the GL $p(\Gamma,M,D;\mu\to1)$. $S_{1}$ and $S_{2}$ are the cardinality-maximal possible subsets of $\Delta$.}
  \label{fig:parapossible_reasoning}
  \end{center}
\end{figure}
\section{Machine-learning Correctness}
\subsection{Generative and Predictive Reasoning}
We have looked at the GL $p(\Gamma$, $M$, $D; \mu=1)$ and $p(\Gamma$, $M$, $D; \mu\to1)$ and characterised their symbolic reasoning using the classical and alternative consequence relations, respectively. We have seen that the GL is a theory of inference unifying uncertain reasoning from consistency, possibility, inconsistency and impossibility. In this section, we look at GL, especially $p(\Gamma$, $M$, $D; \mu\to1)$ and $p(\Gamma$, $M$, $D; \mu<1)$, from a machine learning perspective. The GL are applied to the MNIST dataset to empirically discuss how well they solve the machine learning problem. They are also compared to the K nearest neighbour method to theoretically discuss the link to the well-known machine learning approach.
\par
The MNIST dataset contains 70,000 images, 60,000 training and 10,000 test images, of handwritten digits from 0 to 9. Each image comprises 784 ($28\times 28$ in width and height) pixels, where each pixel has a greyscale from 0 (black) to 255 (white). Let $p(\Gamma$, $M$, $D; \mu)$ be a GL where $D$ is realised by the training images and $M$ by all possible $2^{28\times 28}$ sequences of 0 (or `false' meaning black) or 1 (or `true' meaning white). Here, we assume that each element of the sequences is 0 if the greyscale is below 30 and 1 otherwise. $\Gamma$ is realised by symbolic representations of any inquiries about test images. For example, $N_{i}\in\Gamma$ represents that an image displays the digit $i$, and $P_{j}\in\Gamma$ that the pixel $j$ of an image is white. The following proposition is useful in machine learning contexts to see symbolic reasoning from models as symbolic reasoning from data.
\begin{proposition}\label{prop:data-basedCP}
Let $p(\Gamma$, $M$, $D; \mu)$ be a generative logic, $\alpha\in \Gamma$ and $\Delta\subseteq \Gamma$.
\begin{align*}
p(\alpha|\Delta)=\sum_{m}p(\alpha|m)p(m|\Delta)=\sum_{d}p(\alpha|d)p(d|\Delta)
\end{align*}
\end{proposition}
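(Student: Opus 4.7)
The plan is to reduce both equalities to a single structural fact: the conditional independence of formulas given a model. The paper has already established (in the paragraph defining $p(\Gamma\mid M)$) that $p(\alpha,\Delta\mid m)=p(\alpha\mid m)\,p(\Delta\mid m)$, because each formula's Bernoulli likelihood depends only on $m$, and $p(\Delta\mid m)$ is by definition the product of the per-formula Bernoullis. This identity, together with Bayes' theorem, is essentially all I need.

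For the first equality, I would start from $p(\alpha\mid\Delta)=p(\alpha,\Delta)/p(\Delta)$, marginalise the numerator over $M$, and apply the factorisation above to write $p(\alpha,\Delta)=\sum_{m} p(\alpha\mid m)\,p(\Delta\mid m)\,p(m)$. Regrouping $p(\Delta\mid m)\,p(m)/p(\Delta)$ as $p(m\mid\Delta)$ by Bayes' theorem yields exactly $\sum_{m} p(\alpha\mid m)\,p(m\mid\Delta)$.

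For the second equality, I would first observe that $M$ is a deterministic function of $D$ in the GL, since $p(m\mid d)=1$ when $m=m(d)$ and $0$ otherwise. Hence conditioning on $d$ carries the same content as conditioning on $m(d)$, and the conditional independence of $\alpha$ and $\Delta$ given a model lifts to the conditional independence $p(\alpha,\Delta\mid d)=p(\alpha\mid d)\,p(\Delta\mid d)$. Running the same marginalise-and-regroup argument with $D$ in place of $M$ then produces $\sum_{d} p(\alpha\mid d)\,p(d\mid\Delta)$.

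The only delicate point I foresee is the routine caveat that $p(\Delta)\neq 0$ is required, since otherwise $p(\alpha\mid\Delta)$ and $p(\cdot\mid\Delta)$ are undefined; I would flag this once in passing rather than belabour it. Beyond this, there is no genuine obstacle: both equalities are essentially immediate consequences of the conditional-independence structure already built into Section~2, combined with a textbook Bayesian inversion.
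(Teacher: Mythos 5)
Your proposal is correct and follows essentially the same route as the paper: the first equality via the factorisation $p(\alpha,\Delta\mid m)=p(\alpha\mid m)p(\Delta\mid m)$ plus Bayesian inversion of $p(\Delta\mid m)$, and the second via the determinism of $m(\cdot)$, which is exactly what the paper exploits when it computes $p(\alpha\mid d)=p(\alpha\mid m(d))$ and $p(d\mid\Delta)=p(\Delta\mid m(d))p(d)/p(\Delta)$ and sums over $d$. Your explicit flagging of the $p(\Delta)\neq 0$ caveat is a minor (and welcome) addition the paper leaves implicit.
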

\begin{proof}
See \ref{proof}.
\end{proof}
\par
Given the GL, we consider the two reasoning tasks: generation and prediction. The first task is to generate a standard image of a digit from the digit. We use the GL $p(\Gamma$, $M$, $D; \mu=1)$ as there is at least one image per digit in the training images (i.e., each digit is possible). The following conditional probability performs the generation task.
\begin{align}\label{eq:mnist2}
p(P_{j}|N_{i})=\sum_{d}p(P_{j}|d)p(d|N_{i})
\end{align}
The equation states that the pixel $j$ of a test image is predicted only using all the training images with the digit $i$. A standard image of the digit is generated by normalising $p(P_{j}|N_{i})\in[0,1]$ to the greyscale from 0 (black) to 255 (white), for all pixels $j$ ($1\leq j\leq 28\times 28$). The second task is to predict the digit from an image. Since no test image usually appears in the training images (i.e., the test images are all impossible), we use the GL $p(\Gamma$, $M$, $D; \mu\to1)$ and $p(\Gamma$, $M$, $D; \mu< 1)$, but not $p(\Gamma$, $M$, $D; \mu=1)$. The following conditional probability performs the prediction task.
\begin{align}\label{eq:mnist1}
p(N_{i}|P_{1},P_{2},...,P_{28\times 28})&=\sum_{d}p(N_{j}|d)p(d|P_{1},P_{2},...,P_{28\times 28})
\end{align}
The equation implies that, in contrast to $p(N_{i})=\sum_{d}p(N_{j}|d)p(d)$, predictions are based on the posterior distribution over the training images updated by the observation of the pixels of a test image. The generative and predictive reasoning tasks are illustrated in Figure \ref{fig:prediction_and_generation}.
\begin{figure}[t]
\begin{tabular}{cc}
\begin{minipage}{0.5\hsize}
\begin{center}
\includegraphics[scale=0.35]{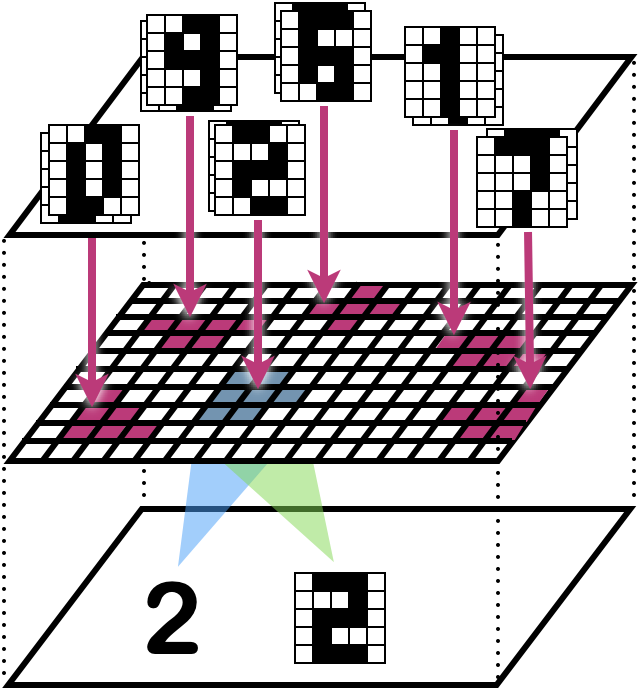}
\end{center}
\end{minipage}
\begin{minipage}{0.5\hsize}
\begin{center}
\includegraphics[scale=0.35]{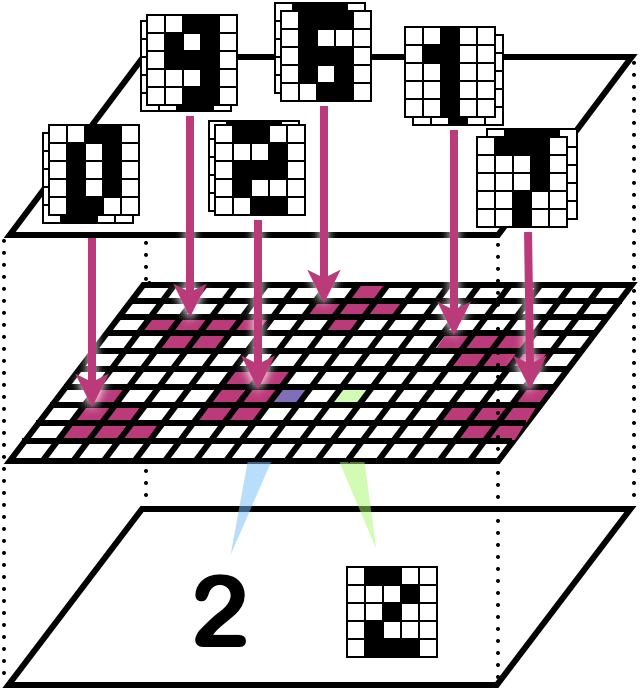}
\end{center}
\end{minipage}
\end{tabular}
\caption{The image on the left illustrates a generative reasoning task where the standard image (i.e., the pixels on the bottom layer) is inferred from the digit 2. The right-hand side illustrates a predictive reasoning task where the digit 2 is inferred from the new image.}
\label{fig:prediction_and_generation}
\end{figure}
\par
Figure \ref{fig:generation} shows standard digit images we generated using Equation (\ref{eq:mnist2}). The result is not only intuitively reasonable but also statistically justified by the following equation obtained by expanding Theorem \ref{thrm:consistent} for data-based reasoning.
\begin{align}\label{eq:generation}
p(P_{j}|N_{i})=\frac{\sum_{m}\ms{P_{j}}_{m}\ms{N_{i}}_{m}p(m)}{\sum_{m}\ms{N_{i}}_{m}p(m)}=\frac{\sum_{d}\ms{P_{j}}_{m(d)}\ms{N_{i}}_{m(d)}}{\sum_{d}\ms{N_{i}}_{m(d)}}
\end{align}
The equation states that the greyscale of each pixel of a standard image for a digit is based on the mean of the colours, black or white, of the pixel of all the images with the digit.
\par
Figure \ref{fig:LCs} shows the learning curves we obtained using Equation (\ref{eq:mnist1}). The result is still not only empirically reasonable but also theoretically justified. Indeed, the GL $p(\Gamma$, $M$, $D; \mu\to1)$ and $p(\Gamma$, $M$, $D; \mu<1)$ behave similarly to the K nearest neighbour (K-NN) method, which is a supervised machine-learning approach that classifies test examples by a majority vote from the K closest training examples. In particular, Equation (\ref{AIJ23:eq:impossibility}) for the GL $p(\Gamma$, $M$, $D; \mu\to1)$ can be expanded as follows for data-based reasoning, where $\bm{P}=\{P_{1},P_{2},...,P_{28\times 28}\}$.
\begin{align}\label{eq:prediction}
p(N_{i}|\bm{P})&=\frac{\sum_{m}\pams{\bm{P}}_{m}\pms{N_{i}}_{m}p(m)}{\sum_{m}\pams{\bm{P}}_{m}p(m)}=\frac{\sum_{d}\pams{\bm{P}}_{m(d)}\pms{N_{i}}_{m(d)}}{\sum_{d}\pams{\bm{P}}_{m(d)}}
\end{align}
The equation states that the predictive probability of a digit is the fraction of images with the digit from all the training images whose pixel values are closest to those of the test image. The denominator is the number of training images whose pixel values are closest to those of the test image. This is because those training images cause the models where the maximum number of pixel values of the test image is true. Amongst them, the numerator is the number of training images with the same digit. Thus, the conditional probability can be seen as a fully non-parametric all-nearest-neighbour method. This is a reasonable solution to a well-known problem that it is often difficult to settle an appropriate value of K for K-NN methods. It is noteworthy that the search for and the use of the closest neighbours are unified by the GL simply as probabilistic reasoning.
\begin{figure}[t]
\begin{center}
\includegraphics[scale=0.22]{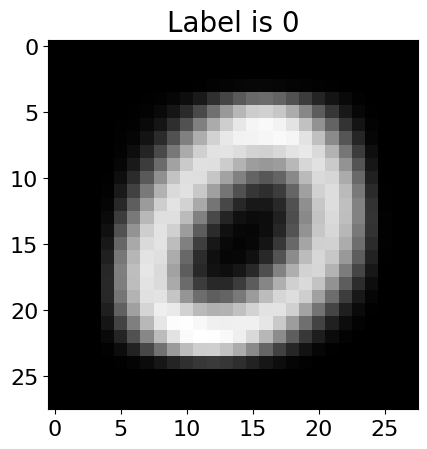}
\includegraphics[scale=0.22]{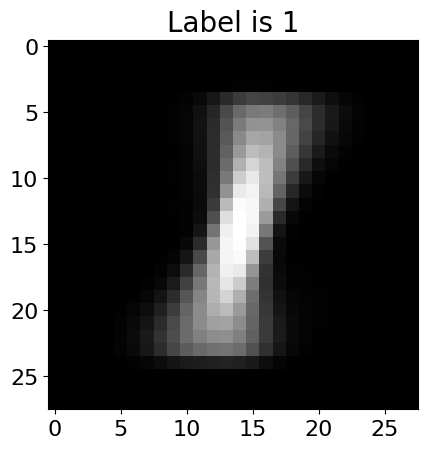}
\includegraphics[scale=0.22]{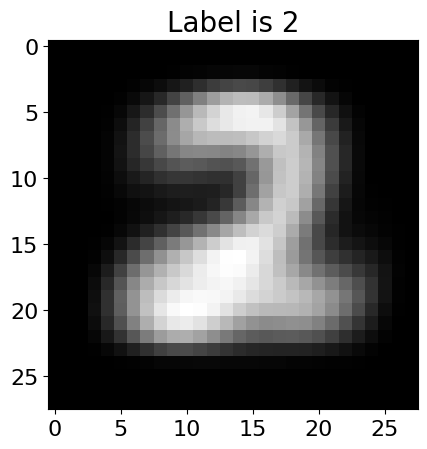}
\includegraphics[scale=0.22]{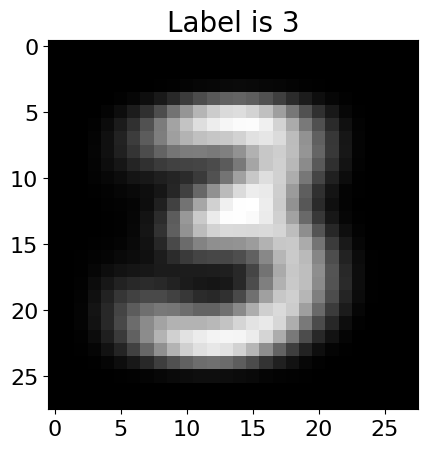}
\includegraphics[scale=0.22]{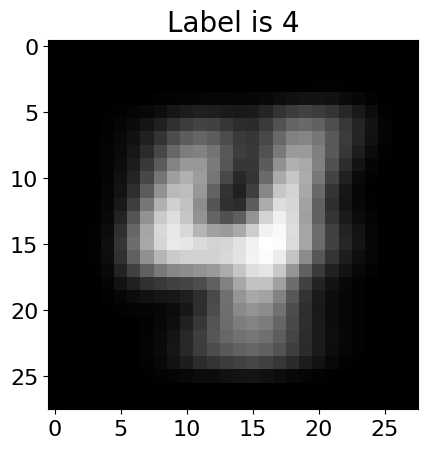}
\\
\includegraphics[scale=0.22]{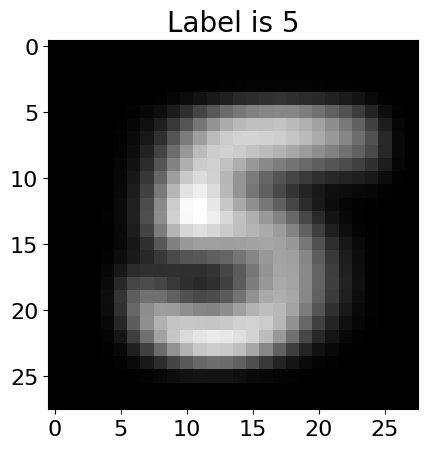}
\includegraphics[scale=0.22]{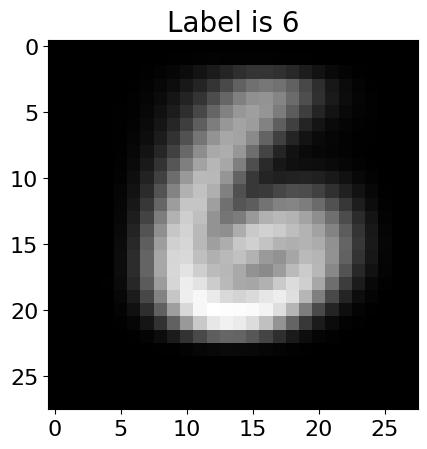}
\includegraphics[scale=0.22]{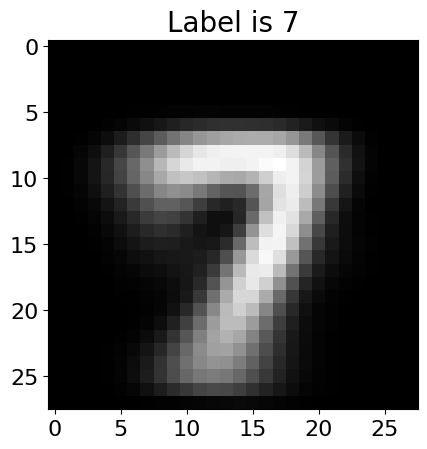}
\includegraphics[scale=0.22]{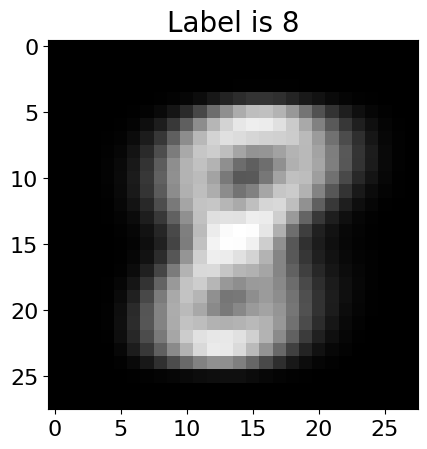}
\includegraphics[scale=0.22]{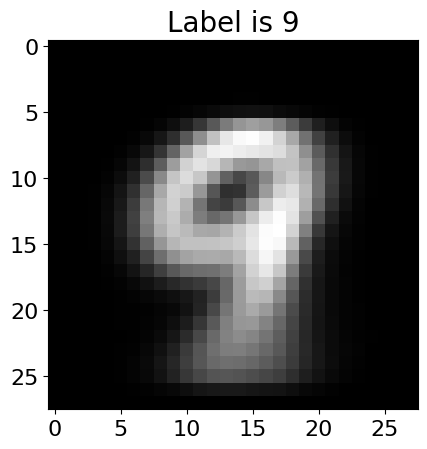}
 \end{center}
\caption{Standard images of all the digits generated by the GL $p(\Gamma,M,D;\mu=1)$, where all the 70,000 MNIST images realise $D$.}
\label{fig:generation}
\end{figure}
\par
The GL $p(\Gamma$, $M$, $D; \mu<1)$ can be seen as a smoothed version of the GL $p(\Gamma$, $M$, $D; \mu\to1)$. Figure \ref{fig:LCs} implies that the latter GL experiences overfitting like the 1-NN method, while the former GL succeeds in mitigating overfitting. The former GL performs better than the K-NN method particularly when the number of training images are small. The parameter $\mu$ of the former GL is also less sensitive than the parameter K of the K-NN method with respect to the learning performance. \ref{experiment} shows the ROC curves and their AUC per digit drawn using different parameter values.
\par
Let $\bm{P}_{test}$ be a sequence of the pixels of a test image and each of $\bm{P}_{train1}$ and $\bm{P}_{train2}$ be a sequence of the pixels of a training image. Suppose that $\bm{P}_{test}$ and $\bm{P}_{train1}$ have the same $z$ pixel values, and $\bm{P}_{test}$ and $\bm{P}_{train2}$ have the same $z-1$ pixel values. The probability that the training-image pixels generate the test-image pixels is given as follows.
\begin{align*}
p(\bm{P}^{test}|\bm{P}^{train_{1}})&=\prod_{i=1}^{28\times 28}p(P_{i}^{test}|m(\bm{P}^{train_{1}}))=\mu^{z}(1-\mu)^{28\times28-z}\\
p(\bm{P}^{test}|\bm{P}^{train_{2}})&=\prod_{i=1}^{28\times 28}p(P_{i}^{test}|m(\bm{P}^{train_{2}}))=\mu^{z-1}(1-\mu)^{28\times28-(z-1)}
\end{align*}
We thus have $p(\bm{P}^{test}|\bm{P}^{train_{1}})=\frac{\mu}{1-\mu}p(\bm{P}^{test}|\bm{P}^{train_{2}})$. For example, given $\mu=0.8$, $\bm{P}_{test}$ is four times more likely to be generated from $\bm{P}_{train1}$ than $\bm{P}_{train2}$. When the test image has an odd number of nearest test images, the GL with $\mu$ close to one behaves like a majority vote. When the test image has an even number of nearest training images, the GL additionally searchers for sub-nearest training images until an odd number of such images is found. This is a theoretical advantage of the GL $p(\Gamma$, $M$, $D; \mu<1)$ over the GL $p(\Gamma$, $M$, $D; \mu\to1)$ and the K-NN method. Again, the search for and the use of the nearest and sub-nearest training images are unified by the GL as probabilistic reasoning.
\begin{figure}[t]
\begin{center}
\includegraphics[scale=0.3]{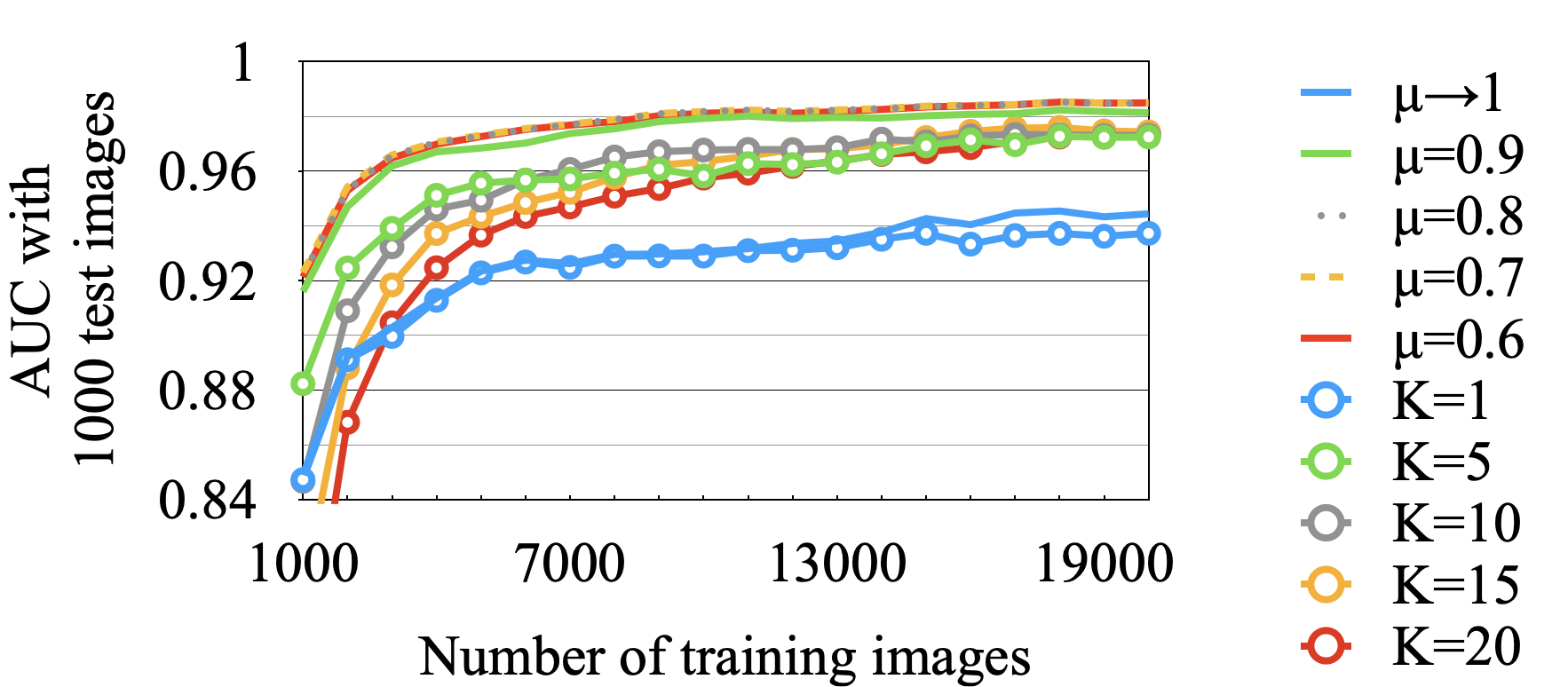}
\end{center}
\caption{The learning curves of the GL $p(\Gamma$, $M$, $D; \mu)$ and the K-NN method. The K-NN method was built using the `KNeighborsClassifier' function \cite{scikit-learn} with the `uniform' weights and `auto' algorithm. The y-axis is the AUC (area under the ROC (receiver operating characteristic) curves). The training and test images were both extracted from the beginning, from the 60,000 training and 10,000 test images, respectively.}
\label{fig:LCs}
\end{figure}
\section{Conclusions}\label{sec:conclusion}
We proposed a simple theory of inference to explain how some basic approaches to logical reasoning and statistical learning stem from a common principle. We simply modelled how data causes symbolic knowledge in terms of its satisfiability in formal logic. Symbolic reasoning emerged as a result of going the causality forwards and backwards. The forward and backward processes correspond to the interpretation and inverse interpretation in formal logic, respectively. We looked at several criteria for the statistical, logical and machine learning correctness of the theory. Amongst the criteria, the most important fact would be that maximal likelihood estimation, the classical and alternative consequence relations and the K nearest neighbour method all justify the statistical, logical and machine learning correctness of the theory, respectively. We indeed showed that probabilistic reasoning over models is consistent with maximal likelihood estimation, that probabilistic reasoning over symbols unifies reasoning from consistency, possibility, inconsistency and impossibility, and that the reasoning from impossibility can be seen as a fully non-parametric all-nearest neighbour method and its reasonable refinement for overfitting mitigation.
\appendix
\section{Graphical Abstract}\label{grab}
\begin{center}
\includegraphics[scale=0.27]{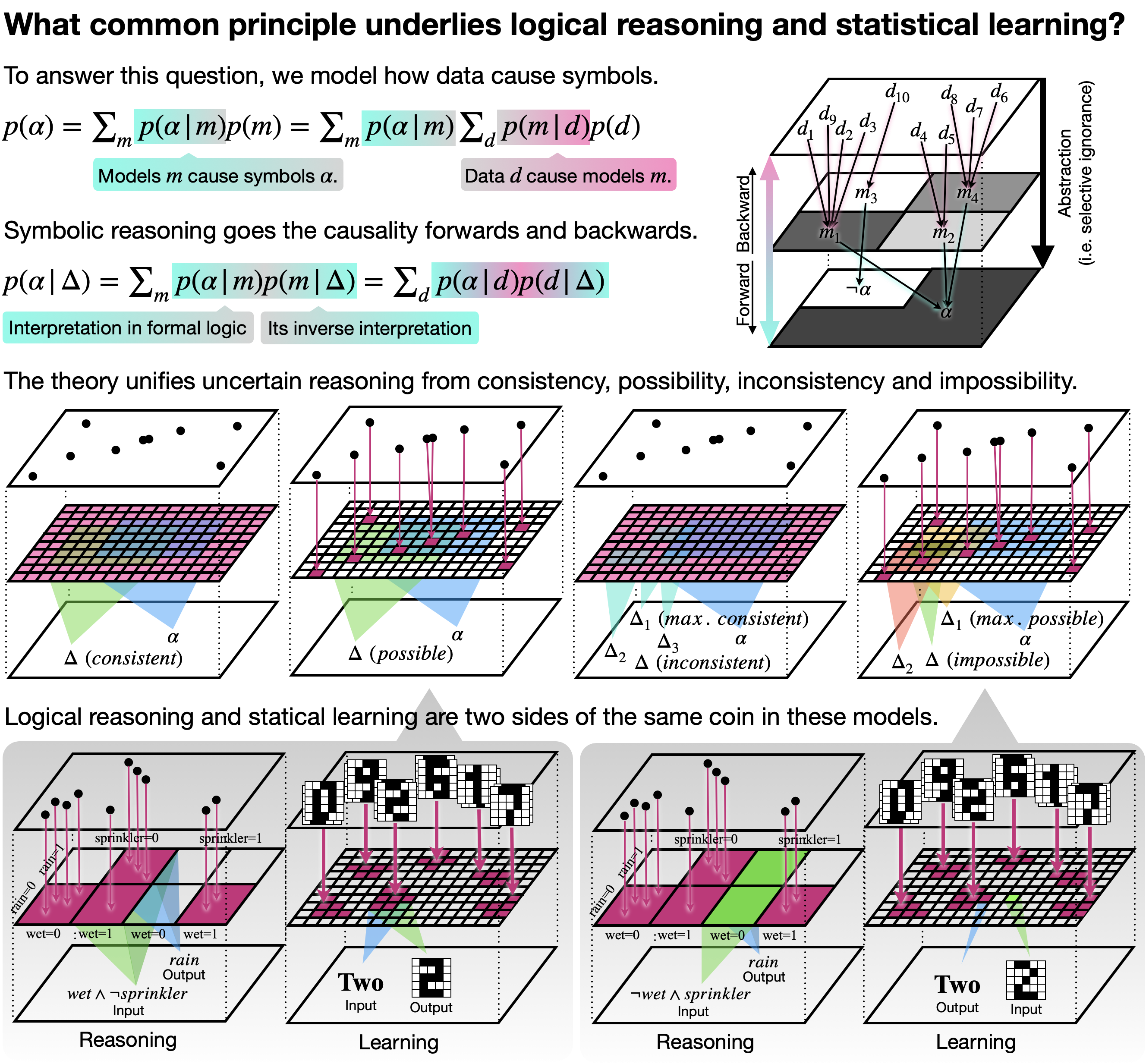}
\end{center}

\section{Proofs}\label{proof}
\begin{proof}[Proposition 1]
Let $\alpha,\beta\in L$. We need to show the following three properties.
\begin{enumerate}
\item $0\leq p(\alpha=i)$ holds, for all $i\in\{0,1\}$.
\item $\sum_{i\in\{0,1\}}p(\alpha=i)=1$ holds.
\item $p(\alpha\lor\beta=i)=p(\alpha=i)+p(\beta=i)-p(\alpha\land\beta=i)$ holds, for all $i\in\{0,1\}$.
\end{enumerate}
(1) $p(\alpha=i)=\sum_{m}p(\alpha=i|m)p(m)$. Both $p(\alpha=i|m)$ and $p(m)$ cannot be negative. (2) Since $\llbracket\alpha=0\rrbracket_{m}=1-\llbracket\alpha=1\rrbracket_{m}$, we have
\begin{align*}
p(\alpha=0|m)+p(\alpha=1|m)&=\mu^{\llbracket\alpha=0\rrbracket_{m}}(1-\mu)^{1-\llbracket\alpha=0\rrbracket_{m}}+\mu^{\llbracket\alpha=1\rrbracket_{m}}(1-\mu)^{1-\llbracket\alpha=1\rrbracket_{m}}\\
&=\mu^{1-\llbracket\alpha=1\rrbracket_{m}}(1-\mu)^{\llbracket\alpha=1\rrbracket_{m}}+\mu^{\llbracket\alpha=1\rrbracket_{m}}(1-\mu)^{1-\llbracket\alpha=1\rrbracket_{m}}.
\end{align*}
If $\llbracket\alpha=1\rrbracket_{m}=1$ then $p(\alpha=0|m)+p(\alpha=1|m)=(1-\mu)+\mu=1$. If $\llbracket\alpha=1\rrbracket_{m}=0$ then $p(\alpha=0|m)+p(\alpha=1|m)=\mu+(1-\mu)=1$. Therefore, we have
\begin{align*}
p(\alpha=0)+p(\alpha=1)&=\sum_{m}p(\alpha=0|m)p(m)+\sum_{m}p(\alpha=1|m)p(m)\\
&=\sum_{m}p(m)\{p(\alpha=0|m)+p(\alpha=1|m)\}=\sum_{m}p(m)=1.
\end{align*}
(3) From (2), it is sufficient to show only case $i=1$ because case $i=0$ can be developed as follows.
\begin{align*}
1-p(\alpha\lor\beta=1)=1-\{p(\alpha=1)+p(\beta=1)-p(\alpha\land\beta=1)\}
\end{align*}
It is sufficient to show $p(\alpha\lor\beta=1|m)=p(\alpha=1|m)+p(\beta=1|m)-p(\alpha\land\beta=1|m)$, for all $m$, since the following holds.
\begin{align*}
\sum_{m}p(\alpha\lor\beta=1|m)p(m)=\sum_{m}\{p(\alpha=1|m)+p(\beta=1|m)-p(\alpha\land\beta=1|m)\}p(m)
\end{align*}
By case analysis, the right expressions can have either of the following four cases.
\begin{eqnarray}
(1-\mu)+(1-\mu)-(1-\mu)&=&1-\mu\label{1}\\
(1-\mu)+\mu-(1-\mu)&=&\mu \label{2}\\
\mu +(1-\mu)-(1-\mu)&=&\mu\label{3}\\
\mu+\mu-\mu&=&\mu \label{4}
\end{eqnarray}
where (\ref{1}), (\ref{2}), (\ref{3}) and (\ref{4}) are obtained in the cases ($\llbracket\alpha=1\rrbracket_{m}=\llbracket\beta=1\rrbracket_{m}=0$), ($\llbracket\alpha=1\rrbracket_{m}=0$ and $\llbracket\beta=1\rrbracket_{m}=1$), ($\llbracket\alpha=1\rrbracket_{m}=1$ and $m\in\llbracket\beta=1\rrbracket_{m}=0$), and ($\llbracket\alpha=1\rrbracket_{m}=\llbracket\beta=1\rrbracket_{m}=1$), respectively.  All of the results are consistent with the left expression, i.e., $p(\alpha\lor\beta=1|m)$.
\end{proof}
\begin{proof}[Proposition 2]
For all $m$, there are only two cases: $p(\alpha=1|m)=\mu$ and $p(\alpha=1|m)=1-\mu$. $p(\alpha=1|m)=\mu$ iff $p(\lnot\alpha=1|m)=1-\mu$, and $p(\alpha=1|m)=1-\mu$ iff $p(\lnot\alpha=1|m)=\mu$. Therefore, $p(\alpha=1|m)=1-p(\lnot\alpha=1|m)$. From (2) of Proposition 1, we have 
\begin{align*}
p(\alpha=1)&=\sum_{m}p(\alpha=1|m)p(m)=\sum_{m}\{1-p(\lnot\alpha=1|m)\}p(m)\\
&=\sum_{m}p(\lnot\alpha=0|m)p(m)=p(\lnot\alpha=0).
\end{align*}
\end{proof}
\begin{proof}[Proposition 4]
Since $p(\Delta|m)=p(m|\Delta)p(\Delta)/p(m)$ due to Bayes' theorem, we have
\begin{align*}
p(\alpha|\Delta)&=\frac{p(\alpha,\Delta)}{p(\Delta)}=\frac{\sum_{m}p(\alpha|m)p(\Delta|m)p(m)}{\sum_{m}p(\Delta|m)p(m)}\\
&=\frac{\sum_{m}p(\alpha|m)p(m|\Delta)}{\sum_{m}p(m|\Delta)}=\sum_{m}p(\alpha|m)p(m|\Delta).
\end{align*}
The first expression of Proposition \ref{prop:data-basedCP} can be written as follows.
\begin{align*}
p(\alpha|\Delta)&=\frac{\sum_{d}p(d)\sum_{m}p(\alpha|m)p(\Delta|m)p(m|d)}{p(\Delta)}=\frac{\sum_{d}p(d)p(\alpha|m(d))p(\Delta|m(d))}{p(\Delta)}
\end{align*}
The two terms of the third expression of Proposition \ref{prop:data-basedCP} can be written as follows.
\begin{align*}
p(\alpha|d)&=\frac{\sum_{m}p(\alpha,d,m)}{p(d)}=\frac{\sum_{m}p(\alpha|m)p(m|d)p(d)}{p(d)}=p(\alpha|m(d))\\
p(d|\Delta)&=\frac{\sum_{m}p(d,\Delta,m)}{p(\Delta)}=\frac{\sum_{m}p(\Delta|m)p(m|d)p(d)}{p(\Delta)}=\frac{p(\Delta|m(d))p(d)}{p(\Delta)}
\end{align*}
Taking a summation over data, we have $\sum_{d}p(\alpha|d)p(d|\Delta)=p(\alpha|\Delta)$.
\end{proof}
\begin{figure}[t]
\section{Experiment}\label{experiment}
\begin{center}
\includegraphics[scale=0.19]{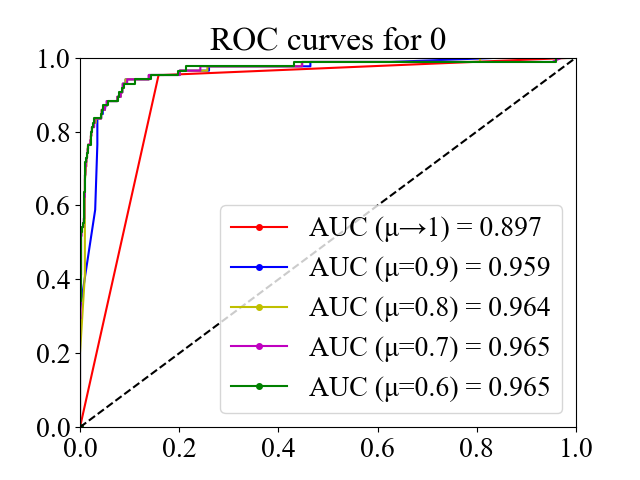}
\includegraphics[scale=0.19]{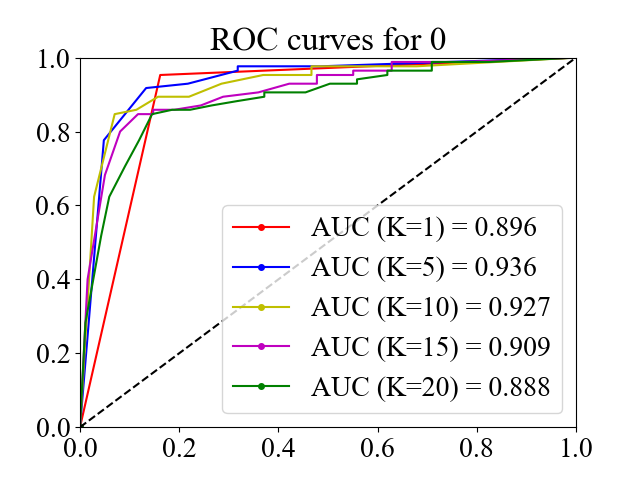}
\includegraphics[scale=0.19]{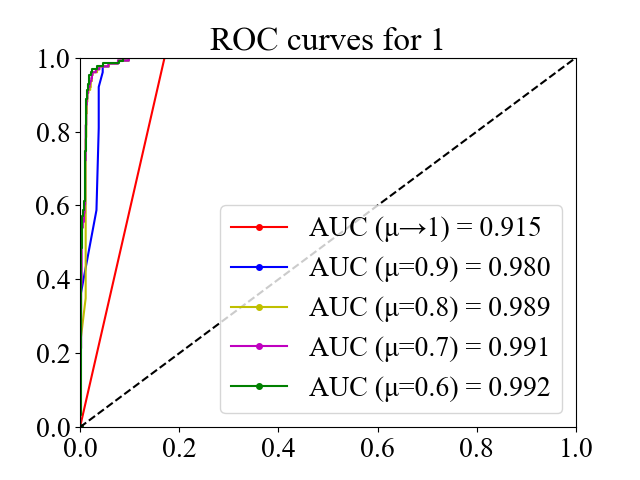}
\includegraphics[scale=0.19]{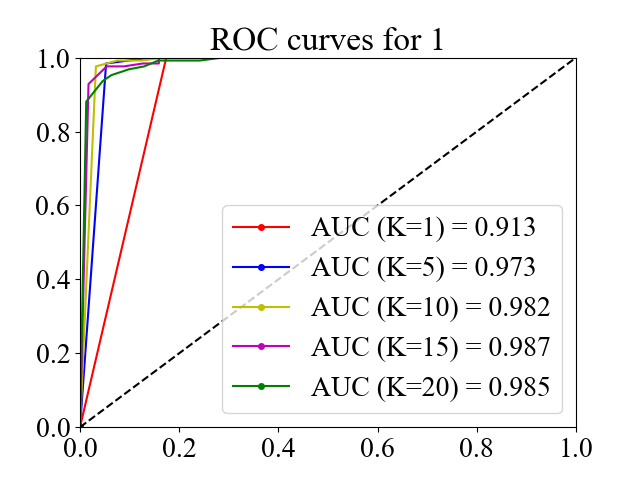}
\includegraphics[scale=0.19]{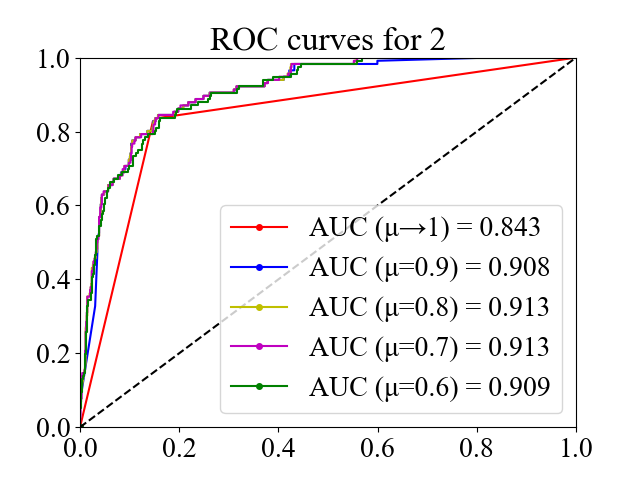}
\includegraphics[scale=0.19]{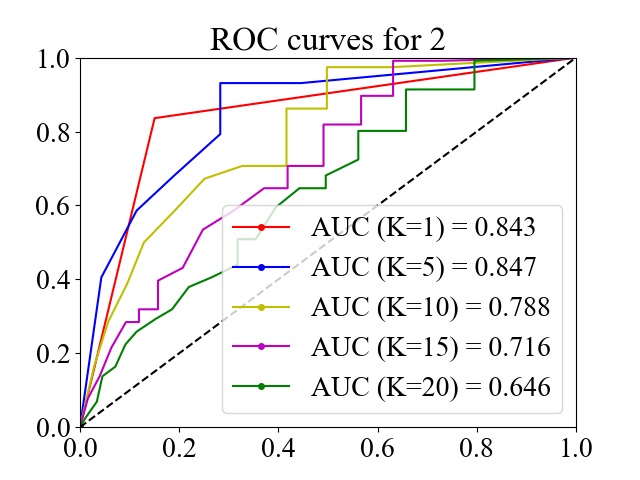}
\includegraphics[scale=0.19]{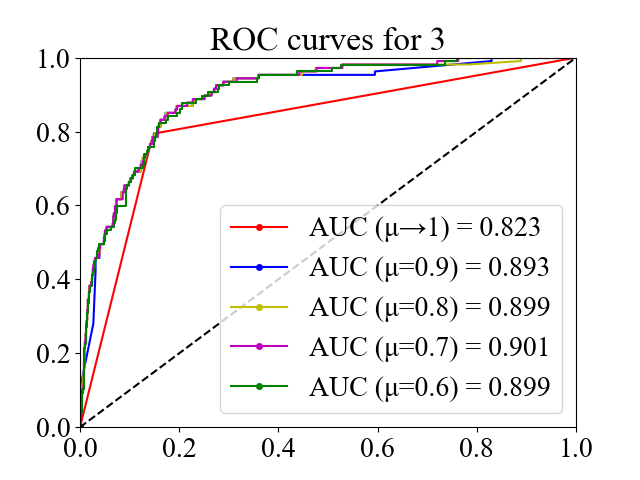}
\includegraphics[scale=0.19]{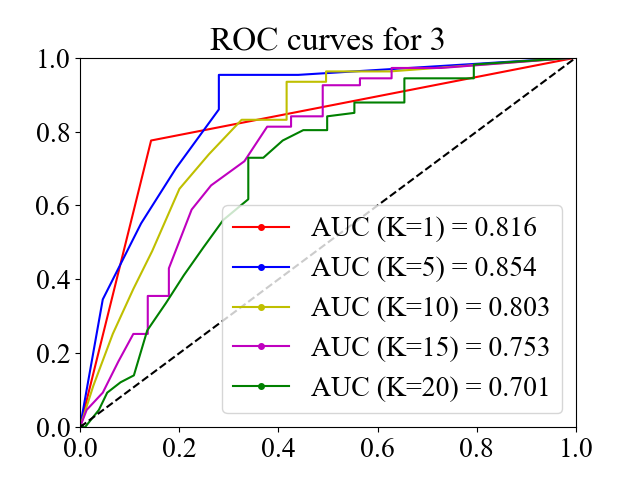}
\includegraphics[scale=0.19]{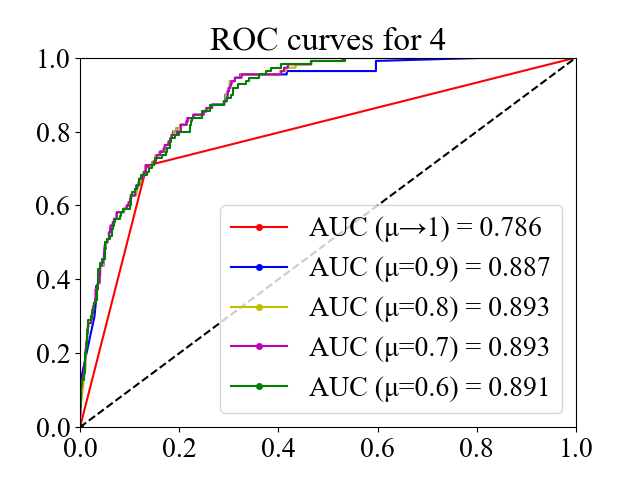}
\includegraphics[scale=0.19]{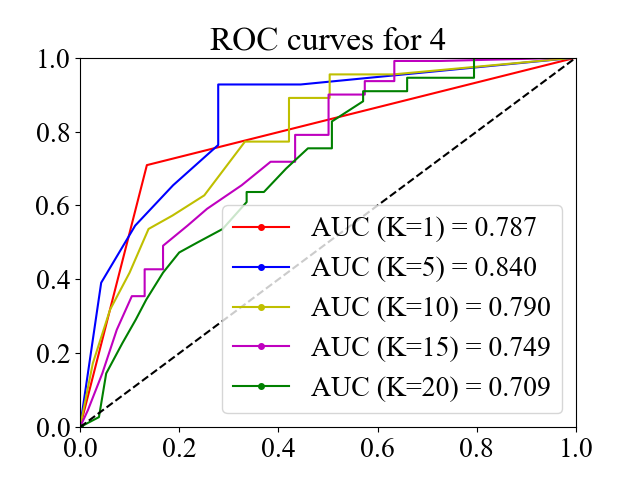}
\includegraphics[scale=0.19]{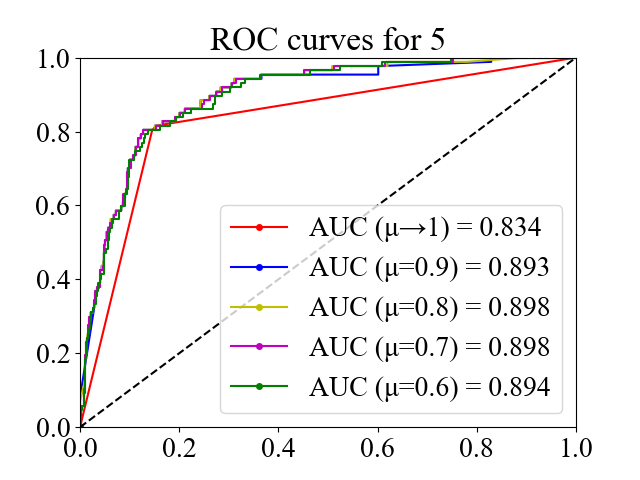}
\includegraphics[scale=0.19]{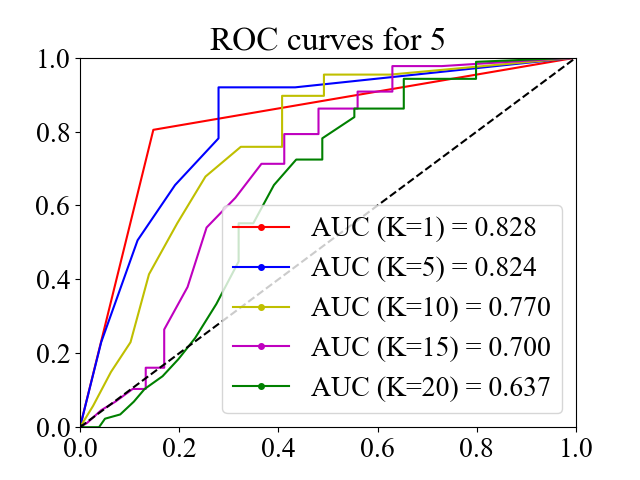}
\includegraphics[scale=0.19]{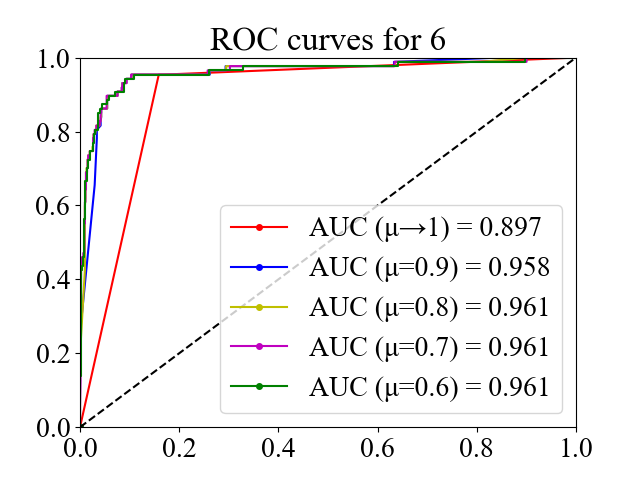}
\includegraphics[scale=0.19]{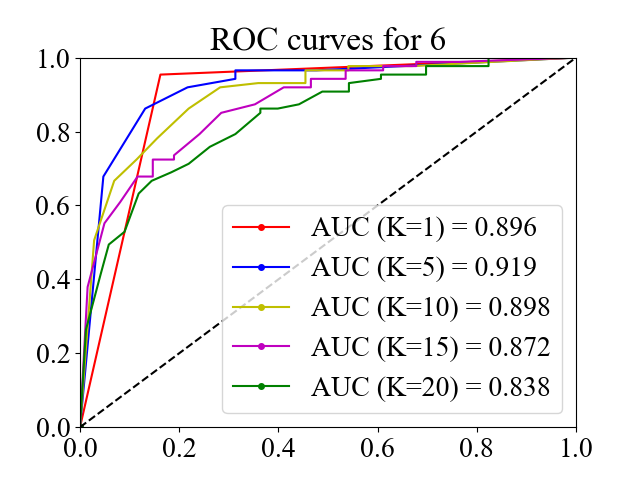}
\includegraphics[scale=0.19]{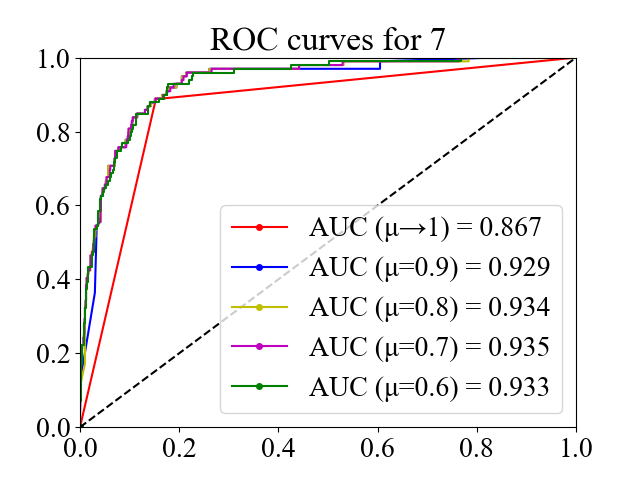}
\includegraphics[scale=0.19]{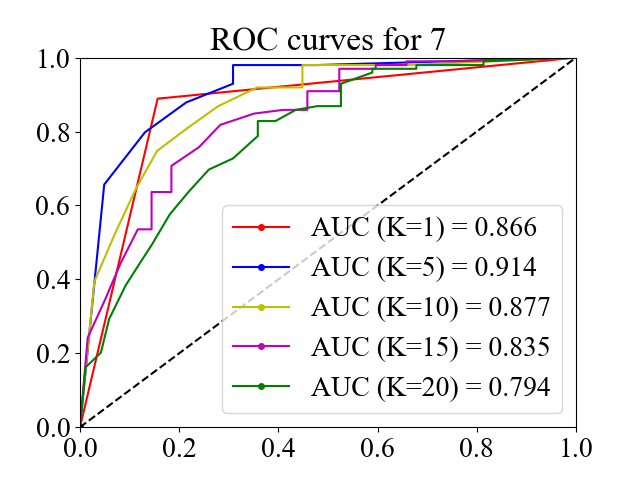}
\includegraphics[scale=0.19]{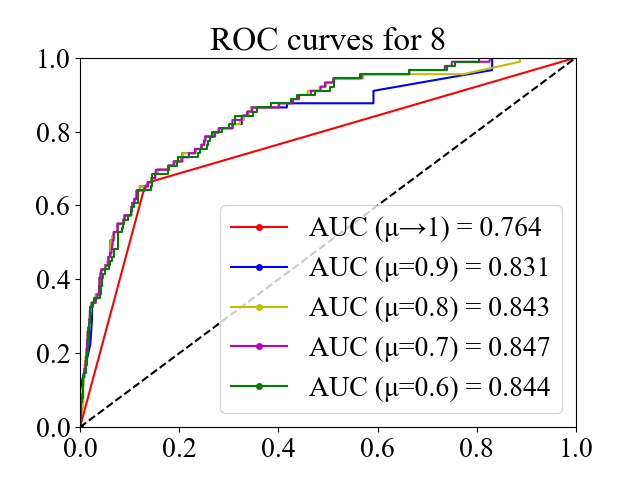}
\includegraphics[scale=0.19]{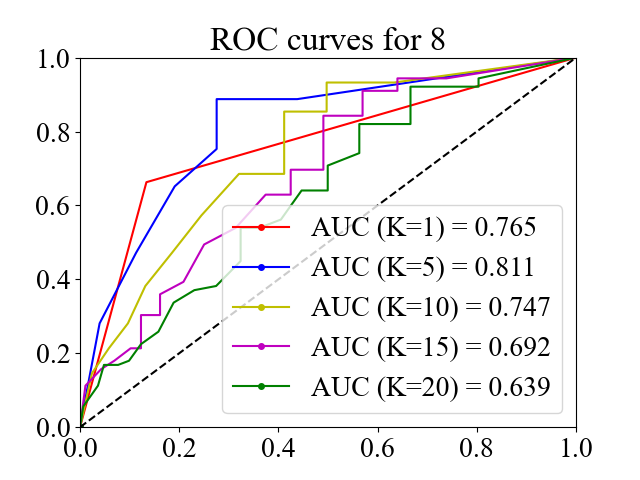}
\includegraphics[scale=0.19]{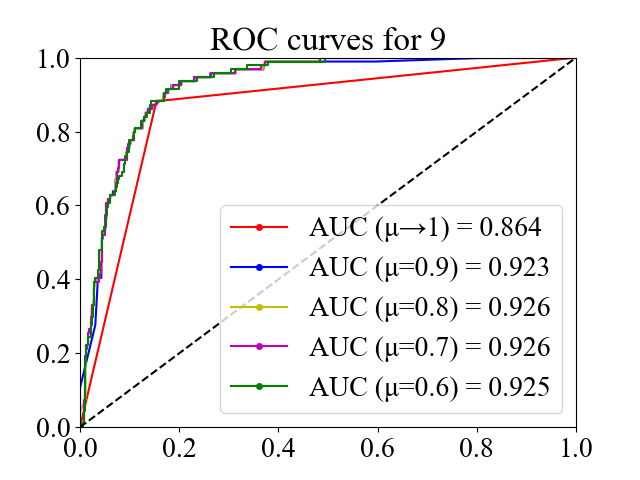}
\includegraphics[scale=0.19]{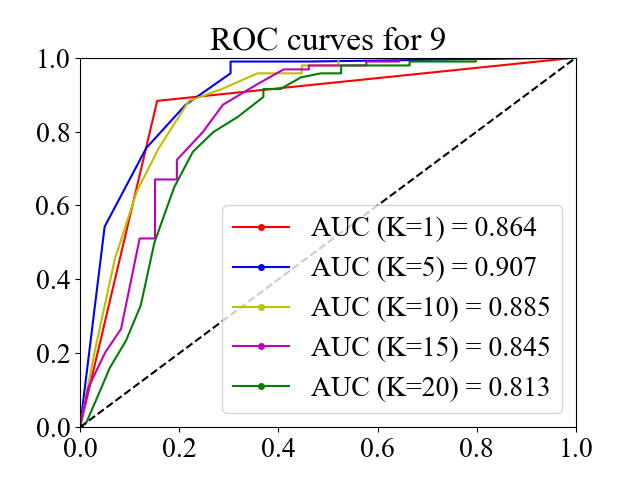}
\end{center}
\caption{ROC curves obtained using the GL (left per digit) and the K-NN method (right per digit). The first 1,000 images out of the 60,000 training images were used for training, and the first 1,000 images out of the 10,000 test images were used for testing.}
\label{fig:ROCs}
\end{figure}
\clearpage
\bibliographystyle{elsarticle-num}
\bibliography{btx_kido}

\begin{thebibliography}{10}
\expandafter\ifx\csname url\endcsname\relax
  \def\url#1{\texttt{#1}}\fi
\expandafter\ifx\csname urlprefix\endcsname\relax\def\urlprefix{URL }\fi
\expandafter\ifx\csname href\endcsname\relax
  \def\href#1#2{#2} \def\path#1{#1}\fi

\bibitem{Knill:96}
D.~C. Knill, W.~Richards, Perception as Bayesian Inference, Cambridge
  University Press, 1996.

\bibitem{Hohwy:14}
J.~Hohwy, The Predictive Mind, Oxford University Press, 2014.

\bibitem{friston:10}
K.~Friston, The free-energy principle: a unified brain theory?, Nature Reviews
  Neuroscience 11 (2010) 127--138.

\bibitem{Knill:04}
D.~C. Knill, A.~Pouget, The bayesian brain: the role of uncertainty in neural
  coding and computation, Trends in Neurosciences 27 (2004) 712--719.

\bibitem{Rao:99}
R.~P.~N. Rao, D.~H. Ballard, Predictive coding in the visual cortex: a
  functional interpretation of some extra-classical receptive-field effects,
  Nature Neuroscience 2 (1999) 79\UTF{2013}87.

\bibitem{Caucheteux:23}
C.~Caucheteux, A.~Gramfort, J.-R. King, Evidence of a predictive coding
  hierarchy in the human brain listening to speech, Nature Human Behaviour 7
  (2023) 430--441.

\bibitem{Itti:09}
L.~Itti, P.~Baldi, Bayesian surprise attracts human attention, Vision Research
  49(10) (2009) 1295--1306.

\bibitem{Adams:16}
R.~A. Adams, Q.~J.~M. Huys, J.~P. Roiser, Computational psychiatry: towards a
  mathematically informed understanding of mental illness, Journal of
  Neurology, Neurosurgery \& Psychiatry 87(1) (2016) 53--63.

\bibitem{Pellicano:12}
E.~Pellicano, D.~Burr, When the world becomes 'too real': a bayesian
  explanation of autistic perception, Trends in Cognitive Sciences 16(10)
  (2012) 504--510.

\bibitem{Mountcastle:82}
V.~B. Mountcastle, An organizing principle for cerebral function: the unit
  module and the distributed system, revised ed. edition Edition, MIT Press,
  1982, pp. 7--50.

\bibitem{Hawkins:21}
J.~Hawkins, A Thousand Brains: A New Theory of Intelligence, Basic Books, 2021.

\bibitem{Melchner:00}
L.~von Melchner, S.~L. Pallas, M.~Sur, Visual behaviour mediated by retinal
  projections directed to the auditory pathway, Nature 404(6780) (2000)
  871--876.

\bibitem{kanizsa:55}
G.~Kanizsa, Margini quasi-percettivi in campi con stimolazione omogenea,
  Rivista di Psicologia 49(1) (1955) 7--30.

\bibitem{Muggleton:95}
S.~Muggleton, Inverse entailment and progol, New Generation Computing 13 (1995)
  245--286.

\bibitem{Muggleton:88}
S.~Muggleton, W.~Buntine, Machine invention of first-order predicates by
  inverting resolution, in: Proc. 5th International Conference on Machine
  Learning, 1988, pp. 339--352.

\bibitem{Nienhuys:97}
S.~H. Nienhuys-Cheng, R.~D. Wolf, Foundation of Inductive Logic Programming,
  Springer, 1997.

\bibitem{Russell:20}
S.~Russell, P.~Norvig, Artificial Intelligence : A Modern Approach, Fourth
  Edition, Pearson Education, Inc., 2020.

\bibitem{Domingo:15}
P.~Domingos, The Master Algorithm: How the Quest for the Ultimate Learning
  Machine Will Remake Our World, Allen Lane, 2015.

\bibitem{Kolmogorov:50}
A.~N. Kolmogorov, Foundations of the theory of probability, Chelsea Publishing
  Co., 1950.

\bibitem{fenstad:67}
J.~Fenstad, Representations of probabilities defined on first order languages,
  in: Sets, Models and Recursion Theory, Vol.~46, Elsevier, 1967, pp. 156--172.

\bibitem{Deng:12}
L.~Deng, The mnist database of handwritten digit images for machine learning
  research, IEEE Signal Processing Magazine 29~(6) (2012) 141--142.

\bibitem{fix:51}
E.~Fix, J.~L. Hodges, Discriminatory analysis. nonparametric discrimination:
  Consistency properties, USAF School of Aviation Medicine, Randolph Field,
  Texas. (1951).

\bibitem{pearl:88}
J.~Pearl, Probabilistic Reasoning in Intelligent Systems: Networks of Plausible
  Inference, Morgan Kaufmann, 1988.

\bibitem{sato:95}
T.~Sato, A statistical learning method for logic programs with distribution
  semantics, in: Proc. 12th int. conf. on logic programming, 1995, pp.
  715--729.

\bibitem{richardson:06}
M.~Richardson, P.~Domingos, Markov logic networks, Machine Learning 62 (2006)
  107--136.

\bibitem{nilsson:86}
N.~J.Nilsson, Probabilistic logic, Artificial Intelligence 28 (1986) 71--87.

\bibitem{friedman:99}
N.~Friedman, L.~Getoor, D.~Koller, A.~Pfeffer, Learning probabilistic
  relational models, in: Proc. 16th Int. Joint Conf. on Artif. Intell., 1996,
  pp. 1297--1304.

\bibitem{Rodder:00}
W.~R{\"o}dder, Conditional logic and the principle of entropy, Artificial
  Intelligence 117 (2000) 83--106.

\bibitem{Davis:15}
E.~Davis, G.~Marcus, Commonsense reasoning and commonsense knowledge in
  artificial intelligence, Communications of the ACM 58(9) (2015) 92--103.

\bibitem{Brewka:91}
G.~Brewka, Nonmonotonic Reasoning: Logical Foundations of Commonsense,
  Cambridge University Press, 1991.

\bibitem{Priest:02}
G.~Priest, Paraconsistent Logic, handbook of philosophical logic, 2nd Edition,
  Vol.~6, Springer, 2002, pp. 287--393.

\bibitem{Carnielli:07}
W.~Carnielli, M.~E. Coniglio, J.~Marcos, Logics of Formal Inconsistency,
  handbook of philosophical logic, 2nd Edition, Vol.~14, Springer, 2007, pp.
  1--93.

\bibitem{scikit-learn}
F.~Pedregosa, G.~Varoquaux, A.~Gramfort, V.~Michel, B.~Thirion, O.~Grisel,
  M.~Blondel, P.~Prettenhofer, R.~Weiss, V.~Dubourg, J.~Vanderplas, A.~Passos,
  D.~Cournapeau, M.~Brucher, M.~Perrot, E.~Duchesnay, Scikit-learn: Machine
  learning in {P}ython, Journal of Machine Learning Research 12 (2011)
  2825--2830.

\end{thebibliography}
\end{document}